\newcommand{\OFU}{{{OFU}}\xspace}
\DeclareMathOperator{\Tr}{Tr}
\newcommand{\ttt}{\tilde{\Theta}}
\newcommand{\tth}{\hat{\Theta}}
\newcommand{\tts}{\Theta_*}
\newcommand{\Tw}{T_{w}}
\newcommand{\fil}{\mathcal{F}}
\newcommand{\ta}{\tilde{A}}
\newcommand{\tb}{\tilde{B}}
\newcommand{\tp}{\tilde{P}}
\newcommand{\temp}{\sigma^2_1}
\newcommand{\tempsquare}{\sigma_1}
\newcommand{\tempp}{\sigma^2_2}
\newcommand{\tppp}{\sigma^2_3}
\newcommand{\tpppsquare}{\sigma_3}
\newcommand{\alg}{{{StabL}}\xspace}
\newcommand{\OFULQ}{{{OFULQ}}\xspace}
\newcommand{\RL}{{{RL}}\xspace}
\newcommand{\LQR}{{{LQR}}\xspace}
\newcommand{\OO}{\Tilde{\mathcal{O}}}
\newcommand{\Tbase}{{T_{base}}}
\newcommand{\Prob}{\mathbb P}
\newcommand{\I}{\mathds 1}
\newcommand{\N}{\mathcal N}
\newtheorem{lemma}{Lemma}[section]
\newtheorem{assumption}{Assumption}[section]
\newtheorem{definition}{Definition}[section]
\newtheorem{theorem}{Theorem}[section]
\begin{document}

%
\runningtitle{Reinforcement Learning with Fast Stabilization in Linear Dynamical Systems}

%

\twocolumn[

\aistatstitle{Reinforcement Learning with Fast Stabilization \\ in Linear Dynamical Systems}

\aistatsauthor{ Sahin Lale \And Kamyar Azizzadenesheli \And Babak Hassibi \And Anima Anandkumar }

\aistatsaddress{ Caltech \And Purdue University \And Caltech \And Caltech } ]

\begin{abstract}
  In this work, we study model-based reinforcement learning (\RL) in unknown stabilizable linear dynamical systems. When learning a dynamical system, one needs to stabilize the unknown dynamics in order to avoid system blow-ups. We propose an algorithm that certifies fast stabilization of the underlying system by effectively exploring the environment with an improved exploration strategy. We show that the proposed algorithm attains $\OO(\sqrt{T})$ regret after $T$ time steps of agent-environment interaction. We also show that the regret of the proposed algorithm has only a polynomial dependence in the problem dimensions, which gives an exponential improvement over the prior methods. Our improved exploration method is simple, yet efficient, and it combines a sophisticated exploration policy in \RL with an isotropic exploration strategy to achieve fast stabilization and improved regret. We empirically demonstrate that the proposed algorithm outperforms other popular methods in several adaptive control tasks. 
\end{abstract}

\section{INTRODUCTION}

We study the problem of reinforcement learning (\RL) in linear dynamical systems, in particular in linear quadratic regulators (\LQR). \LQR is the canonical setting for linear dynamical systems with quadratic regulatory costs and observable state evolution. For a known \LQR model, the optimal control policy is given by a stabilizing linear state feedback controller \citep{bertsekas1995dynamic}. When the underlying model is unknown, the learning agent needs to learn the dynamics in order to $(1)$ stabilize the system and $(2)$ find the optimal control policy. This online control task is one of the core challenges in \RL and control theory. 

\textbf{Learning \LQR models from scratch:} \enskip The ultimate goal in online control is to design learning agents that can autonomously adapt to the unknown environment with minimal information and also enjoy finite-time stability and performance guarantees. This problem has sparked a flurry of research interest in the control and \RL communities. However, there are only a few approaches that provide a complete treatment of the problem and strive for learning from scratch with no initial model estimates \citep{abbasi2011lqr,abeille2018improved,chen2020black}. Other than these, the prior works focus either on the problem of finding a stabilizing policy while ignoring the control costs \citep{faradonbeh2018finite}, or on achieving low control costs while assuming access to an initial stabilizing controller \citep{abeille2020efficient,simchowitz2020naive}. 

\textbf{Lack of stabilization and its consequences:} \enskip The existing works \citep{abbasi2011lqr,abeille2017thompson,abeille2018improved} that learn from scratch in \LQR{}s aim to minimize the regret, which is the additional cumulative control cost of an agent compared to the expected cumulative cost of the optimal policy. These algorithms suffer from regret that has an exponential dependence in the \LQR dimensions since they do not assume access to an initial stabilizing policy. They also face system blow-ups due to unstable system dynamics. Besides poor regret performance, the uncontrolled dynamics prevent the deployment of these learning algorithms in practice.

\begin{table*}
\centering
\captionsetup{justification=centering}
\caption{Comparison with the prior works.}
\label{table:1}
 \begin{tabular}{l l l l}

 \textbf{Work} &  \textbf{Regret} &  \textbf{Setting} &   \textbf{Stabilizing Controller}  \\
 \hline 
 \citet{dean2018regret} & $\text{poly}(n,d)T^{2/3}$ & Controllable & Required \\
 
 \citet{mania2019certainty} & $\text{poly}(n,d)\sqrt{T}$ & Controllable & Required \\
  \citet{simchowitz2020naive} & $\text{poly}(n,d)\sqrt{T}$ & Stabilizable &Required \\
  
 \citet{abbasi2011lqr} & $(n+d)^{n+d} \sqrt{T}$ & Controllable & Not required \\ 
 
\citet{chen2020black} & $\text{poly}(n,d)\sqrt{T}$ & Controllable & Not required \\

 \textbf{This work} & $\text{poly}(n,d)\sqrt{T}$ & Stabilizable & Not required 
\end{tabular}
\end{table*}

\textbf{Joint goals of fast stabilization and low regret:} \enskip In this paper, we design an \RL agent for online \LQR{}s that achieves low regret and fast stabilization. To design stabilizing policies without prior knowledge, the agent needs to effectively explore the environment and estimate the system dynamics. However, in order to achieve low regret, the agent should also strategically exploit the gathered knowledge. Thus, the agent requires to balance exploration and exploitation such that it designs stabilizing policies to avoid dire consequences of unstable dynamics and minimize the regret. 

\textbf{Optimism in the face of uncertainty (\OFU) principle:} \enskip One of the most prominent methods to effectively balance exploration and exploitation is the \OFU principle \citep{lai1985asymptotically}. An agent that follows the \OFU principle deploys the optimal policy of the model with the lowest optimal cost within the set of plausible models. This guarantees the asymptotic convergence to the optimal policy for the \LQR \citep{bittanti2006adaptive}. 

\textbf{Failure of \OFU to achieve stabilization:} \enskip Using the \OFU principle, the learning algorithm of \citep{abbasi2011lqr} attains order-optimal $\OO(\sqrt{T})$ regret after $T$ time steps, but the regret upper bound suffers from an \emph{exponential} dependence in the \LQR model dimensions. 
This is due to the fact that the \OFU principle relies heavily on the confidence-set constructions. An agent following the \OFU principle mostly explores parts of state-space with the lowest expected cost and with higher uncertainty. When the agent does not have reliable model estimates, this may cause a lack of exploration in certain parts of the state-space that are important in designing stabilizing policies.
This problem becomes more evident in the early stages of agent-environment interactions due to lack of reliable knowledge about the system. This highlights the need for an improved exploration in the early stages. Note that this issue is unique to control problems and not as common in other \RL settings, e.g. bandits and gameplay. 

\textbf{The restricted \LQR settings in the prior works:} \enskip In designing our learning agent for the online \LQR problem, we consider the stabilizable \LQR setting. Stabilizability is the necessary and sufficient condition to have a well-defined online \LQR problem, \textit{i.e.} it guarantees the existence of a policy that stabilizes the system \citep{kailath2000linear}. In contrast, the prior works that learn from scratch in \LQR{}s only guarantee low regret in the controllable or contractive \LQR settings \citep{abbasi2011lqr,abeille2017thompson,abeille2018improved,chen2020black}, which form a narrow subclass of stabilizable \LQR problems. These conditions significantly simplify the identification and regulation of the unknown dynamics. However, they are violated in many practical systems, e.g., physical systems with non-minimal representation due to complex dynamics \citep{friedland2012control}. In contrast, most of the real-world control systems are stabilizable. 

\subsubsection*{Contributions:} 

Based on the above observations and shortcomings, we propose a novel \textbf{Stab}ilizing \textbf{L}earning algorithm, \alg, for the online \LQR problem and study its performance both theoretically and empirically. 

\textbf{1) } We carefully prescribe an early exploration strategy and a policy update rule in the design of \alg. We show that \alg quickly stabilizes the underlying system, and henceforth certifies the stability of the dynamics with high probability in the stabilizable \LQR{}s. 

\textbf{2) } We show that \alg attains $\OO(\text{poly}(n,d)\sqrt{T})$ regret in the online control of unknown stabilizable \LQR{}s. Here $\OO(\cdot)$ presents the order up to logarithmic terms, $n$ is the state and $d$ is the input dimensions respectively. This makes \alg the first \RL algorithm to achieve order-optimal regret in all stabilizable \LQR{}s without a given initial stabilizing policy. This result completes an important part of the picture in designing autonomous learning agents for the online \LQR problem (See Table \ref{table:1}).

\textbf{3) } We empirically study the performance of \alg in various adaptive control tasks. We show that \alg achieves fast stabilization and consequently enjoys orders of magnitude improvement in regret compared to the existing certainty equivalent and optimism-based learning from scratch methods.  Further, we study the statistics of the control inputs and highlight the effect of strategic exploration in achieving this improved performance. 

The design of \alg is motivated by the importance of stabilizing the unknown dynamics and the need for exploration in the early stages of agent-environment interactions. \alg deploys the \OFU principle to balance exploration vs. exploitation trade-off. Due to lack of reliable estimates in the early stages of learning, an optimistic controller, guided by \OFU, neither provides sufficient exploration required to achieve stabilizing controllers, nor achieves sub-linear regret. Therefore, \alg uses isotropic exploration along with the optimistic controller in the early stages to achieve an improved exploration strategy. This allows \alg to excite all dimensions of the system uniformly as well as the dimensions that have more promising impact on the control performance. By carefully adjusting the early improved exploration, we guarantee that the inputs of \alg are persistently exciting the system under the sub-Gaussian process noise. We show that using this improved exploration quickly results in stabilizing policies with high probability, therefore a much smaller regret in the long term. 

We conduct extensive experiments to verify the theoretical claims about \alg. In particular, we empirically show that the improved exploration strategy of \alg persistently excites the system in the early stages and achieves effective system identification required for stabilization. In contrast, we observe that the optimism-based learning algorithm of \citet{abbasi2011lqr} fails to achieve effective exploration in the early stages and suffers from unstable dynamics and high regret. We also demonstrate that, once \alg obtains reliable model estimates for stabilization, the balanced strategy prescribed by the \OFU principle effectively guides \alg to regret minimizing policies, resulting in a significant improved regret performance in all settings. 

\section{PRELIMINARIES}

\textbf{Notation:} We denote the Euclidean norm of a vector $x$ as $\|x\|$. For a given matrix $A$, $\| A \|$ denotes the spectral norm, $\| A\|_F$ denotes the Frobenius norm, $A^\top$ is the transpose, $\Tr(A)$ gives the trace of matrix $A$ and $\rho(A)$ denotes the spectral radius of $A$, \textit{i.e.} largest absolute value of $A$'s eigenvalues. The maximum and minimum singular values of $A$ are denoted as $\sigma_{\max}(A)$ and $\sigma_{\min}(A)$ respectively.

Consider a discrete time linear time-invariant system,
\begin{align}
    x_{t+1}& = A_* x_t + B_* u_t + w_t \label{output},
\end{align}
where $x_t \in \mathbb{R}^{n}$ is the state of the system, $u_t \in \mathbb{R}^{d}$ is the control input, $w_{t} \in \mathbb{R}^{n}$ is the process noise at time $t$. We consider the systems with sub-Gaussian noise.

\begin{assumption}[Sub-Gaussian Noise]\label{general_noise}
The process noise $w_t$ is a martingale difference sequence with respect to the filtration $\left(\mathcal{F}_{t-1}\right)$. Moreover, it is component-wise conditionally $\sigma_w^2$-sub-Gaussian and isotropic such that for any $s \in \mathbb{R}$, $\mathbb{E}\left[\exp \left(s w_{t, j}\right) | \mathcal{F}_{t-1}\right] \leq \exp \left(s^{2} \sigma_w^2 / 2\right)$ and $\mathbb{E}\left[w_{t} w_{t}^{\top} | \mathcal{F}_{t-1}\right] = \bar{\sigma}_w^2 I$ for some $\bar{\sigma}_w^2 > 0$.
\end{assumption}

Note that the results of this paper only require the conditional covariance matrix $W = \mathbb{E}[w_tw_t^\top | \mathcal{F}_{t-1}]$ to be full rank. The isotropic noise assumption is chosen to ease the presentation and similar results can be obtained with upper and lower bounds on $W$, \textit{i.e.}, $W_{up} > \sigma_{\max}(W) \geq \sigma_{\min}(W) > W_{low} > 0.$

At each time step $t$, the system is at state $x_t$. After observing $x_t$, the agent applies a control input $u_t$ and the system evolves to $x_{t+1}$ at time $t + 1$. At each time step $t$, the agent pays a cost $c_t = x_t^\top Q x_t + u_t^\top R u_t$, where $Q \in \mathbb{R}^{n \times n}$ and $R \in \mathbb{R}^{d \times d}$ are positive definite matrices such that $\|Q\|, \| R\| < \overline{\alpha}$ and $\sigma_{\min}(Q), \sigma_{\min}(R) > \underline{\alpha}$. The problem is to design control inputs based on past observations in order to minimize the average expected cost $J_*$.
This problem is the canonical example for the control of linear dynamical systems and termed as linear quadratic regulator (\LQR). The system (\ref{output}) can be represented as $x_{t+1} = \Theta_*^\top z_t + w_t$, where $\tts^\top = [A_* \enskip B_*]$ and $z_t = [x_t^\top \enskip u_t^\top]^\top$. Knowing $\tts$, the optimal control policy, is a linear state feedback control $u_t = K(\tts)x_t$ with $K(\tts) = -(R+B_*^{\top} P_* B_*)^{-1} B_*^{\top} P_* A_*$, where $P_*$ is the unique solution to the discrete-time algebraic Riccati equation (DARE)~\citep{bertsekas1995dynamic}:
\begin{equation} \label{ARE}
P_* = A_*^\top P_* A_* + Q -  A_*^\top P_* B_* ( R + B_*^\top P_* B_* )^{-1} B_*^\top P_* A_*.
\end{equation}
The optimal cost for $\tts$ is denoted as $J_* = \Tr(\bar{\sigma}_w^2 P_*)$. When the model parameters, $A_*$ and $B_*$, are unknown, the learning agent interacts with the environment to learn these parameters and aims to minimize the cumulative cost $\sum_{t=1}^T c_t$. Note that the cost matrices $Q$ and $R$ are the designer's choice and given. After $T$ time steps, we evaluate the regret of the learning agent as $\text{R}(T) = \sum\nolimits_{t=0}^T (c_t - J_*)$,
which is the difference between the performance of the agent and the expected performance of the optimal controller. In this work, unlike the controllable \LQR setting of the prior adaptive control algorithms without a stabilizing controller \citep{abbasi2011lqr,chen2020black}, we study the online \LQR problem in the general setting of \textit{stabilizable} \LQR.

\begin{definition}[Stabilizability vs. Controllability]
\label{def:Stabilizability}
The linear dynamical system $\tts$ is stabilizable if there exists $K$ such that $\rho(A_* + B_*K) < 1$. On the other hand, the linear dynamical system $\tts$ is controllable if the controllability matrix $[B_* \enskip A_*B_* \enskip A_*^2B_* ~\ldots~ A_*^{n-1}B_*]$ has full row rank. 
\end{definition}


Note that the stabilizability condition is the minimum requirement to define the optimal control problem. It is \textit{strictly weaker than controllability}, \textit{i.e.}, all controllable systems are stabilizable but the converse is not true~\citep{bertsekas1995dynamic}. Similar to \citet{cohen2019learning}, we quantify the stabilizability of $\tts$ for the finite-time analysis.

\begin{definition}[$(\kappa,\gamma)$-Stabilizability]
\label{def:StrongStabilizability}
The linear dynamical system $\tts$ is $(\kappa, \gamma)$-stabilizable for ($\kappa \geq 1$ and $0 < \gamma \leq 1$) if $\|K(\tts)\| \leq \kappa$ and there exists $L$ and $H\succ 0$ such that $A_* + B_*K(\tts) = HLH^{-1}$, with $\|L\| \leq 1-\gamma$ and $\|H\| \|H^{-1} \| \leq \kappa $.
\end{definition}

Note that this is merely a quantification of stabilizability. In other words, any stabilizable system is also $(\kappa,\gamma)$-stabilizable for some $\kappa$ and $\gamma$ and conversely $(\kappa,\gamma)$-stabilizability implies stabilizability (See Appendix \ref{apx:neighborhood}). Thus, we consider $(\kappa,\gamma)$-stabilizable \LQR{}s.



\begin{assumption}[Stabilizable Linear Dynamical System]\label{parameterassump_stabilizability}
The unknown parameter $\tts$ is a member of the set $\mathcal{S}$ such that $\mathcal{S} = \big\{ \Theta'=[A', B'] ~\big|~ \Theta' \text{ is } (\kappa, \gamma)\text{-stabilizable, } \|\Theta'\|_F \leq S\big\}$
\end{assumption}

Notice that $\mathcal{S}$ denotes the set of all bounded systems that are $(\kappa, \gamma)$-stabilizable, where $\tts$ is an element of, and the membership to $\mathcal{S}$ can be easily verified. Moreover, the proposed algorithm in this work only requires the upper bounds on these relevant control-theoretic quantities $\kappa, \gamma$, and $S$, which are also standard in prior works, e.g. \citep{abbasi2011lqr,cohen2019learning}. In practice, when there is a total lack of knowledge about the system, one can start with conservative upper bounds and adjust these based on the behavior of the system, \textit{e.g.}, the growth of the state. 

From $(\kappa,\gamma)$-stabilizability, we have that $\rho(A'\!+\!B'K(\Theta')) \leq 1-\gamma$, and $\sup \{\|K(\Theta')\| ~|~ \Theta' \in \mathcal{S} \}\leq \kappa $. The following lemma shows that for any $(\kappa,\gamma)$-stabilizable system the solution of \eqref{ARE} is bounded. 
\begin{lemma} [Bounded DARE Solution] \label{bounded_P}
For any $\Theta$ that is $(\kappa,\gamma)$-stabilizable and has bounded regulatory cost matrices, \textit{i.e.}, $\|Q \|, \| R\|< \overline{\alpha}$, the solution of \eqref{ARE}, $P$, is bounded as $\|P\| \leq D \coloneqq \overline{\alpha} \gamma^{-1}\kappa^2(1+\kappa^2)$
\end{lemma}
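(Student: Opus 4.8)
The plan is to exploit the classical control-theoretic fact that the positive semidefinite stabilizing solution $P$ of the DARE \eqref{ARE} is exactly the infinite-horizon cost matrix of the closed-loop system driven by the optimal feedback $K \coloneqq K(\Theta)$. Writing $A_{cl} \coloneqq A + BK$, this says
\begin{equation*}
P \;=\; \sum_{t=0}^{\infty} \big(A_{cl}^{\top}\big)^{t}\,\big(Q + K^{\top} R K\big)\, A_{cl}^{t}.
\end{equation*}
I would first establish this identity (or simply invoke it, since it is standard for stabilizable \LQR): it is well posed precisely because $(\kappa,\gamma)$-stabilizability gives $\rho(A_{cl}) \leq 1 - \gamma < 1$, so the series converges, and one verifies by direct substitution that the right-hand side satisfies \eqref{ARE}, after which uniqueness of the stabilizing solution closes the argument.

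Given the series representation, the bound is a short computation. By the triangle inequality and submultiplicativity, $\|Q + K^{\top} R K\| \leq \|Q\| + \|K\|^{2}\|R\| \leq \overline{\alpha}(1+\kappa^{2})$, using $\|K\| = \|K(\Theta)\| \leq \kappa$ and $\|Q\|,\|R\| < \overline{\alpha}$. For the powers of the closed-loop matrix, the similarity transform from Definition~\ref{def:StrongStabilizability} gives $A_{cl}^{t} = H L^{t} H^{-1}$, hence $\|A_{cl}^{t}\| \leq \|H\|\,\|H^{-1}\|\,\|L\|^{t} \leq \kappa (1-\gamma)^{t}$, and therefore $\|A_{cl}^{t}\|^{2} \leq \kappa^{2}(1-\gamma)^{2t}$.

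Combining these and summing the geometric series,
\begin{equation*}
\|P\| \;\leq\; \sum_{t=0}^{\infty} \|A_{cl}^{t}\|^{2}\,\|Q + K^{\top}RK\| \;\leq\; \overline{\alpha}(1+\kappa^{2})\,\kappa^{2}\sum_{t=0}^{\infty}(1-\gamma)^{2t} \;=\; \frac{\overline{\alpha}\,\kappa^{2}(1+\kappa^{2})}{1-(1-\gamma)^{2}}.
\end{equation*}
Since $1-(1-\gamma)^{2} = \gamma(2-\gamma) \geq \gamma$ for $0 < \gamma \leq 1$, the right-hand side is at most $\overline{\alpha}\gamma^{-1}\kappa^{2}(1+\kappa^{2}) = D$, which is the claim. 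The only real obstacle is the first step, namely cleanly justifying the infinite-horizon representation of the DARE solution; but this is standard once stability of $A_{cl}$ is in hand, and everything after it is mechanical.
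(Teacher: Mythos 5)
Your proposal is correct and follows essentially the same route as the paper: both start from the infinite-series (Lyapunov sum) representation of the DARE solution, use the similarity $A_{cl}^t = H L^t H^{-1}$ from Definition~\ref{def:StrongStabilizability} to get $\|A_{cl}^t\|\le \kappa(1-\gamma)^t$, bound $\|Q+K^\top R K\|\le \overline{\alpha}(1+\kappa^2)$, and sum the geometric series with $1-(1-\gamma)^2\ge\gamma$. No gaps.
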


\section{STABL}

In this section, we present \alg, a sample efficient stabilizing \RL algorithm for the online stabilizable LQR problem. The algorithmic outline is provided in Algorithm \ref{algo_exact}. \alg only requires the minimal information about the stabilizability of the underlying system and \emph{does not} need a stabilizing controller. Therefore, along the ultimate goal of minimizing the regret, \alg puts its primary focus on achieving stabilizing controllers for the unknown system dynamics.

\subsection{Adaptive Control with Improved Exploration} \label{sec:first_phase}

In order to quickly design stabilizing controllers, \alg needs to explore the system dynamics effectively. To this end, \alg  solves $ \min_\Theta \sum\nolimits_{s=0}^{t-1} \|x_{s+1} - \Theta^{\top} z_{s} \|^2 + \lambda \|\Theta\|_F^2$, using the past state-input pairs to estimate the system dynamics as $\hat{\Theta}_t$. Using this estimate, \alg constructs a high probability confidence set $\mathcal{C}_t(\delta)$ that contains the underlying parameter $\Theta_*$ with high probability. In particular, for $\delta \in (0,1)$, at time step $t$, it forms $\mathcal{C}_t(\delta) = \{\Theta : \|\Theta\!-\!\hat{\Theta}_{t}\|_{V_{t}} \leq \beta_{t}(\delta)\}$, for $\beta_t(\delta) =\sigma_w \sqrt{2 n \log (\delta^{-1} \sqrt{\operatorname{det}\left(V_{t}\right)/  \operatorname{det}(\lambda I)} )} + \sqrt{\lambda}S$ and $V_t = \lambda I + \sum_{i=0}^{t-1} z_i z_i^\top$ such that $\tts \in \mathcal{C}_{t}(\delta)$ with probability at least $1-\delta$ for all time steps $t$. Note that this estimation method and the learning guarantee is standard in learning linear dynamical systems since \citet{abbasi2011lqr}. 

\begin{algorithm}[t] 
	\caption{\alg}
	\begin{algorithmic}[1]
		\STATE \textbf{Input:}  $\kappa$, $\gamma$, $Q$, $R$, $\sigma_w^2$  $\bar\sigma_w^2$, $V_0 = \lambda I$, $\tth_{0} = 0$, $\tau = 0$

		\FOR{$t = 0, \ldots, T$} 
		\IF {$(\det(V_t) > 2 \det(V_0)) ~ \textbf{and} ~ \left(t-\tau >\! H_0\right)$} 
			\STATE Estimate $\tth_t$ \& find optimistic $\ttt_t \in \mathcal{C}_{t}(\delta) \cap \mathcal{S}$ 
			\STATE Set $V_0 = V_t$ and $\tau = t$.
		\ELSE
			\STATE $\ttt_t = \ttt_{t-1}$
		\ENDIF
		\vspace{0.3em}
		\IF {$t \leq \Tw$ }
	        \STATE $u_t \!=\! K(\ttt_{t-1}) x_t \!+\! \nu_t~~$ \hfill  \textsc{\small{Improved Exploration}}  
	    \ELSE
	        \STATE $u_t \!=\! K(\ttt_{t-1}) x_t $ \hfill  \textsc{\small{Stabilizing Control}}
	    \ENDIF
	   \vspace{0.2em}
	   \STATE Pay cost $c_t$ \& Observe $x_{t+1}$ 
	   \STATE Update $V_{t+1} \!=\! V_t \!+\! z_t z_t^\top$ for $z_t = [x_t^\top \enskip u_t^\top]^\top$
		\ENDFOR 
	\end{algorithmic}
	\label{algo_exact} 
\end{algorithm}

The confidence set above provides a self-normalized bound on the model parameter estimates via design matrix $V_t$. \alg uses the \OFU principle in this confidence set to design a policy. In particular, it chooses an optimistic parameter $\ttt_t$ from $\mathcal{C}_t \cap \mathcal{S}$, which has the lowest expected optimal cost, and constructs the optimal linear controller $K(\ttt_t)$ for $\ttt_t$, \textit{i.e.} the optimistic controller. At time $t$, \alg uses the optimistic controller $K(\ttt_{t-1})$. This choice is for technical reasons to guarantee persistence of excitation (Appendix \ref{apx:smallest_eigen}). 

The optimistic controllers allow \alg to adaptively balance exploration and exploitation. They guide the exploration towards the region of state-space with the lowest expected cost. The key idea in this design is that as the confidence set shrinks, the performance of \alg improves over time \citep{bittanti2006adaptive}. 

Due to lack of an initial stabilizing policy, \alg aims to rapidly stabilize the system to avoid the consequences of unstable dynamics. To stabilize an unknown \LQR, one requires sufficient exploration in all directions of the state-space (Lemma \ref{stabilityofoptimisticcontroller}). Unfortunately, due to lack of reliable estimates in the early stages, the optimistic policies come short to guarantee such an effective exploration. 

Therefore, \alg deploys an adaptive control policy with an improved exploration in the early stages of interactions with the system. In particular, for the first $\Tw$ time-steps \alg uses isotropic perturbations along with the optimistic controller. For $t\leq \Tw$, it injects an i.i.d. Gaussian vector $\nu_t \!\sim\! \mathcal{N}(0,\sigma_{\nu}^2I)$ to the system besides the optimistic policy $K(\ttt_{t-1})x_t$, where $\sigma_{\nu}^2 = 2 \kappa^2 \bar\sigma_w^2 $.

\alg effectively excites and explores all dimensions of the system via this improved exploration strategy (Theorem \ref{lem:smallest_eigen}). The duration of the adaptive control with improved exploration phase is chosen such that \alg quickly finds a stabilizing controller. In particular, after $\Tw \coloneqq poly(\sigma_w,\sigma_\nu,n,d,\gamma^{-1}\!, \kappa, \overline{\alpha},\log(1/\delta))$
time steps, \alg has the guarantee that the linear controllers $K(\ttt_{t-1})$ stabilize $\tts$ for all $t\geq \Tw$ with high probability (Lemma \ref{lem:2norm_bound} \& \ref{stabilityofoptimisticcontroller}). 

Moreover, \alg avoids frequent updates in the system estimates and the controller. It uses the same controller at least for a fixed time period of $H_0 \!=\! O(\gamma^{-1}\log(\kappa))$ and also waits for a significant improvement in the estimates.
The latter is achieved by updating the controller if the determinant of the design matrix $V_t$ is doubled since the last update. This update rule is chosen such that policy changes do not cause unstable dynamics for the stabilizable \LQR. The effect of this update rule on maintaining bounded state for \alg are studied in detail in Section \ref{stabilizable_extension}.   

\subsection{Stabilizing Adaptive Control}

After guaranteeing the stabilizing policy design, \alg starts the adaptive control that stabilizes the underlying system. In this phase, \alg stops injecting isotropic perturbations and relies on the balanced exploration and exploitation via the optimistic controller design. The stabilizing optimistic controllers further guide the exploration to adapt the structure of the problem and fine-tune the learning process to achieve optimal performance. However, note that the frequent policy changes can still cause unbounded growth of the state even though the policies are stabilizing. Therefore, \alg continues the same policy update rule in this phase to maintain bounded state. 



Unlike the prior works that constitute two distinct phases, \alg has a very subtle two-phase structure. In particular, the same subroutine (optimism) is applied continuously with the aim of balancing exploration and exploitation. An additional isotropic perturbation is only deployed for an improved exploration in the early stages to achieve stable learning for the autonomous agent.

\section{THEORETICAL ANALYSIS} \label{sec:theory}

In this section, we study the main theoretical contributions of this work. In Section \ref{stabilizable_extension}, we discuss the challenges that the stabilizability setting brings compared to the setting of the prior learning algorithms for the online \LQR. We then introduce our approaches to overcome these challenges in the design of \alg. In Section \ref{sec:early_improved_exploration}, we provide the formal statements for the theoretical guarantees of \alg and, finally, we give the regret upper bound of \alg in Section \ref{sec:regret_bound}. 

\subsection{Challenges in the Online Stabilizable \LQR Problem} \label{stabilizable_extension}
 
The main challenge for learning algorithms in control problems is to achieve input-to-state stability (ISS), which requires having well-bounded state in future time steps via using bounded inputs. Achieving this becomes significantly more challenging in the setting of stabilizable \LQR compared to their controllable counterpart considered in many recent works \citep{abbasi2011lqr,mania2019certainty,chen2020black}. A controllable system can be brought to $x_t = 0$ in finite time steps. Furthermore, some of these works assume that the underlying system to be closed-loop contractible, \textit{i.e.} $\|A_* - B_* K(\Theta_*) \| < 1 $. These facts significantly simplify the overall stabilization problem. Moreover, recalling Definition \ref{def:Stabilizability}, for controllable systems the controllability matrix is full row rank. In prior works, this has been a prominent factor in guaranteeing the persistence of excitation (PE) of the inputs, identifying the system and deriving regret bounds, e.g. \citep{hazan2019nonstochastic,chen2020black}. 

Unfortunately, we do not have these properties in the general stabilizable \LQR setting. Recall Assumption \ref{parameterassump_stabilizability} that states the system is $(\kappa, \gamma)$-stabilizable, which yields $\rho( A_* + B_* K(\tts)) \leq 1-\gamma$ for the optimal policy $K(\tts) \leq \kappa$. Therefore, even if the optimal policy of the underlying system is chosen by the learning algorithm, it may not produce contractive closed-loop system, \textit{i.e.}, we can have $\rho( A_* + B_* K(\tts)) < 1 < \|A_* + B_* K(\tts)\| $ since for any matrix $M$, $\rho(M) \leq \|M\|$. 

Moreover, from the definition of stabilizability in Definitions \ref{def:Stabilizability} and \ref{def:StrongStabilizability}, we know that for any stabilizing controller $K'$, there exists a similarity transformation $H'\succ 0$ such that it makes the closed loop system contractive, \textit{i.e.} $A_* + B_*K' = H'LH'^{-1}$, with $\|L\| < 1$. However, even if all the policies that \alg execute stabilize the underlying system, these different similarity transformations of different policies can further cause an explosion of state during the policy changes. If policy changes happen frequently, this may even lead to linear growth of the state over time. 

In order to resolve these problems, \alg carefully designs the timing of the policy updates and applies all the policies long enough, so that the state stays well controlled, \textit{i.e.}, ISS is achieved. To this end, \alg applies the same policy at least for $H_0 = 2\gamma^{-1} \log(2\kappa\sqrt{2})$ time steps. This particular choice prevents state blow-ups due to policy changes in the optimistic controllers in the stabilizable \LQR setting (see Appendix \ref{apx:boundedness}). 

To achieve PE and consistent model estimates under the stabilizability condition, we leverage the early improved exploration strategy which does not require controllability. Using the isotropic exploration in the early stages, we derive a novel lower bound for the smallest eigenvalue of the design matrix $V_t$ in the stabilizable \LQR with sub-Gaussian noise setting. Moreover, we derive our regret results using the fast stabilization and the optimistic policy design of \alg. The results only depend on the stabilizability and other trivial model properties such as the \LQR dimensions.

\subsection{Benefits of Early Improved Exploration}
\label{sec:early_improved_exploration}

To achieve effective exploration in the early stages, \alg deploys isotropic perturbations along with the optimistic policy for $t \leq \Tw$. Define $\sigma_\star>0$ where $\sigma_\star$ is a problem and in particular $\bar{\sigma}_w,\sigma_w, \sigma_\nu$-dependent constant (See Appendix \ref{apx:smallest_eigen} for exact definition). The following shows that for a long enough improved exploration, the inputs are persistently exciting the system.  

\begin{theorem}[Persistence of Excitation During the Improved Exploration] \label{lem:smallest_eigen}
If \alg follows the early improved exploration strategy for $T \geq poly(\sigma_w^2, \sigma_\nu^2, n, \log(1/\delta))$ time steps, then with probability at least $1-\delta$, \alg has $ \sigma_{\min}(V_{T}) \geq \sigma_{\star}^2 T$.
\end{theorem}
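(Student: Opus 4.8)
Let me think about how to prove that $\sigma_{\min}(V_T) \geq \sigma_\star^2 T$ with high probability, where $V_T = \lambda I + \sum_{s=0}^{T-1} z_s z_s^\top$ with $z_s = [x_s^\top, u_s^\top]^\top$, and during the exploration phase $u_s = K(\tilde\Theta_{s-1}) x_s + \nu_s$ with $\nu_s \sim \mathcal{N}(0, \sigma_\nu^2 I)$.

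The key difficulty: we need $\sigma_{\min}$ of a design matrix that mixes states $x_s$ and inputs $u_s$. The inputs have an injected isotropic noise $\nu_s$, which should give us "exploration in all directions" of the input space. But the state $x_s$ evolves via $x_{s+1} = (A_* + B_* K(\tilde\Theta_{s-1})) x_s + B_* \nu_s + w_s$. The noises $\nu_s$ and $w_s$ together drive the state. We need a lower bound on the smallest eigenvalue of the joint covariance.

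**Approach:**

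1. **Reduce to bounding $\sum z_s z_s^\top$ from below in the PSD order.** We want to show $\sum_{s=0}^{T-1} z_s z_s^\top \succeq \sigma_\star^2 T \cdot I$ (roughly). Since $V_T \succeq \sum z_s z_s^\top$, this suffices.

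2. **Decompose $z_s$.** Write $z_s = \bar{z}_s + \text{(noise term)}$. Actually the cleaner decomposition: the state $x_s = \sum_{j<s} \Phi_{s,j}(B_* \nu_j + w_j)$ where $\Phi_{s,j}$ is the state transition matrix (product of closed-loop matrices, which are bounded because policies are bounded by $\kappa$... wait, but closed-loop matrices need not be contractive pointwise — they're only contractive after similarity transform. Still, over the bounded number of steps with bounded operator norm, $\|\Phi_{s,j}\|$ is controlled; actually the paper notes the state stays bounded). Hmm, but we need this only for $T = \text{poly}$, so polynomial growth is fine.

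3. **Isolate the "fresh noise" contribution.** The standard trick: at time $s$, condition on $\mathcal{F}_{s-1}$. Then $\nu_s$ and $w_s$ are fresh. We have $z_s = [x_s^\top, (K_s x_s + \nu_s)^\top]^\top$. Conditioned on $\mathcal{F}_{s-1}$, $x_s$ is NOT fresh (it depends on past noise), but we can look at the increment. Actually — key point — consider $z_s z_s^\top$ and condition on $\mathcal{F}_{s-1}$: $x_s$ is measurable, $\nu_s$ is independent $\mathcal{N}(0,\sigma_\nu^2 I)$. So $\mathbb{E}[z_s z_s^\top | \mathcal{F}_{s-1}]$ has a block structure where the $uu$ block is $K_s x_s x_s^\top K_s^\top + \sigma_\nu^2 I$, the $xu$ block is $x_s x_s^\top K_s^\top$, $xx$ block is $x_s x_s^\top$. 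This conditional expectation is actually only rank... it's $\begin{pmatrix} x_s x_s^\top & x_s x_s^\top K_s^\top \\ K_s x_s x_s^\top & K_s x_s x_s^\top K_s^\top + \sigma_\nu^2 I\end{pmatrix}$. Its smallest eigenvalue: for a vector $(a,b)$, the quadratic form is $\|x_s\|^2 |\langle a + K_s^\top b, \hat x_s\rangle|^2 \cdot$(something) $+ \sigma_\nu^2 \|b\|^2$... this is NOT bounded below by a positive constant times $I$ because if $b=0$ and $a \perp x_s$ we get zero. So single-step conditional expectation is not enough — we need to accumulate over many steps to cover the $x$ directions, using the fact that $x_s$ itself gets excited in all directions by the accumulated $B_*\nu_j + w_j$ injections.

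4. **Two-part strategy.** (a) Show the input block $\sum u_s u_s^\top \succeq c\sigma_\nu^2 T I_d$ — this should be relatively easy via a matrix concentration / self-normalized argument on $\sum (K_s x_s + \nu_s)(\cdots)^\top$, since $\nu_s$ is fresh isotropic noise; lower bound follows because the cross terms are martingale-like and the $\nu_s \nu_s^\top$ terms concentrate to $\sigma_\nu^2 T I$. (b) For the state block, show $\sum x_s x_s^\top \succeq c' T I_n$. The state is driven by $\xi_j := B_* \nu_j + w_j$, which has conditional covariance $\bar\sigma_w^2 I_n + \sigma_\nu^2 B_* B_*^\top \succeq \bar\sigma_w^2 I_n \succ 0$ — full rank! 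This is the crucial observation: even though $B_*$ may not be full row rank (system not controllable), the process noise $w_j$ alone already excites all $n$ state directions. So $x_s = \sum_{j<s} \Phi_{s,j} \xi_j$, and using an anti-concentration / small-ball argument (à la Simchowitz et al. or the block-martingale small-ball method), $\sum_{s} x_s x_s^\top \succeq c' T I_n$ with high probability, because each $x_s$ has a conditional covariance that is bounded below (the $j = s-1$ term $\Phi_{s,s-1}\xi_{s-1} = \xi_{s-1}$ contributes $\bar\sigma_w^2 I$ to $\text{Cov}(x_s | \mathcal{F}_{s-2})$, wait $\Phi_{s,s-1} = I$ so yes $x_s = \xi_{s-1} + \Phi_{s,s-2}\xi_{s-2} + \dots$, so $\text{Cov}(x_s|\mathcal{F}_{s-2}) \succeq \bar\sigma_w^2 I_n$).

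5. **Combine.** From $\sum x_s x_s^\top \succeq c' T I_n$ and $\sum u_s u_s^\top \succeq c \sigma_\nu^2 T I_d$ plus control on cross terms, conclude $\sigma_{\min}(\sum z_s z_s^\top) \gtrsim T$. Actually simpler: lower-bound $v^\top V_T v$ for unit $v = (v_x, v_u)$. We have $v^\top z_s = v_x^\top x_s + v_u^\top u_s = v_x^\top x_s + v_u^\top(K_s x_s + \nu_s) = (v_x + K_s^\top v_u)^\top x_s + v_u^\top \nu_s$. Case 1: $\|v_u\|$ bounded below — then $v_u^\top \nu_s$ is a fresh noise of variance $\geq c$, and a small-ball argument over $s$ gives $\sum (v^\top z_s)^2 \gtrsim T$. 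Case 2: $\|v_u\|$ small, so $\|v_x\| \approx 1$ and $\|v_x + K_s^\top v_u\| \gtrsim 1$ — then dominant term is $(v_x + K_s^\top v_u)^\top x_s$ and we use the state small-ball bound. This case split, done uniformly over $v$ via a covering/net argument, is the cleanest route.

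**Main obstacle:** The chief technical challenge is the state small-ball lower bound $\sum_s x_s x_s^\top \succeq c' T I_n$ under only stabilizability (no controllability) and sub-Gaussian (not Gaussian) noise. This is where the "block martingale small-ball" machinery (Simchowitz–Mania–Tu–Recht–Jadbabaie style) must be invoked, and it requires: (i) showing the closed-loop transition matrices have polynomially bounded norm over the horizon $T_w$ despite possible transient growth (using $(\kappa,\gamma)$-stabilizability and the minimum dwell time $H_0$), and (ii) establishing a uniform anti-concentration / small-ball property for $v_x^\top x_s$ conditioned on the past, exploiting that $\text{Cov}(x_s | \mathcal{F}_{s-2}) \succeq \bar\sigma_w^2 I_n$ regardless of direction. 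The uniformity over directions $v$ and over the random sequence of controllers $\tilde\Theta_{s-1}$ (which are themselves $\mathcal{F}_{s-1}$-measurable, hence the martingale structure is preserved) is what makes this delicate. Combining all constants into the single $\sigma_\star^2$ and tracking the polynomial dependence for $T_w$ is bookkeeping but needs care.
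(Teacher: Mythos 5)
Your plan is correct in outline and would yield the theorem, but it takes a noticeably heavier route than the paper at the key step. The paper's proof never decomposes the covariate into state and input blocks and never needs a multi-step or block-martingale small-ball accumulation: it works with the filtration under which the process noise driving the current state and the dither $\nu_t$ in the current input are \emph{both} fresh given $\mathcal{F}_{t-1}$, while the controller $K(\tilde\Theta_{t-1})$ is $\mathcal{F}_{t-1}$-measurable (this is precisely why the algorithm deploys the one-step-delayed optimistic controller). Under that convention the innovation $\xi_t = z_t - \mathbb{E}[z_t\,|\,\mathcal{F}_{t-1}]$ of the full $(n+d)$-dimensional covariate is $\bigl(w^\top,\ (Kw+\nu_t)^\top\bigr)^\top$, and a single PSD computation using $\sigma_\nu^2 = 2\kappa^2\bar\sigma_w^2$ and $\|K\|\le\kappa$ gives $\mathbb{E}[\xi_t\xi_t^\top\,|\,\mathcal{F}_{t-1}]\succeq \tfrac{\bar\sigma_w^2}{2}I_{n+d}$ in one step --- so the ``accumulate over many steps to cover the $x$ directions'' machinery you flag as the main obstacle is not needed; the isotropic process noise alone makes the one-step state innovation full rank. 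From there the paper proves a scalar anti-concentration bound for sub-Gaussian $v^\top\xi_t$ with variance bounded below, absorbs the conditional mean $v^\top\mathbb{E}[z_t|\mathcal{F}_{t-1}]$ via a sign/median argument (both signs of $v^\top\xi_t$ occur with probability bounded below, so a large mean only helps), applies Azuma to the resulting indicators, and finishes with the same $1/4$-net covering you propose. Your step 5 (the case split on $\|v_u\|$ plus covering) is close to the paper's final stage, but note two loose ends in your version: in Case 2 the term $a_s^\top x_s$ still has an arbitrary $\mathcal{F}_{s-2}$-measurable conditional mean that must be handled by something like the paper's sign argument, and $a_s = v_x + K_s^\top v_u$ is only $\mathcal{F}_{s-1}$-measurable, so the two-step-back conditioning needs the small-$\|v_u\|$ approximation $a_s\approx v_x$ to restore measurability. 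Both are patchable, so this is a valid alternative proof in the Simchowitz et al.\ style rather than a gap; what the paper's route buys is that the entire persistence-of-excitation argument reduces to one conditional-covariance inequality plus elementary scalar anti-concentration.
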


This theorem shows that having isotropic perturbations along with the optimistic controllers provides persistence excitation of the inputs, \textit{i.e.} linear scaling of the smallest eigenvalue of the design matrix $V_t$. This result is quite technical and its proof is given in Appendix \ref{apx:smallest_eigen}. At a high-level, we show that isotropic perturbations allow the covariates to have a Gaussian-like tail lower bound even in the stabilizable \LQR with sub-Gaussian process noise setting. Using the standard covering arguments, we prove the statement of the theorem. This result guarantees that the inputs excite all dimensions of the state-space and allows \alg to obtain uniformly improving estimates at a faster rate.

\begin{lemma}[Parameter estimation error]\label{lem:2norm_bound} 
Suppose Assumptions \ref{general_noise} and \ref{parameterassump_stabilizability} hold. For $T \!\geq\! poly(\sigma_w^2, \sigma_\nu^2, n, \log(1/\delta))$ time steps of adaptive control with improved exploration, with probability at least $1\!-\!2\delta$, \alg achieves 
$\|\tth_T-\tts\|_2 \leq \beta_t(\delta) / (\sigma_\star \sqrt{T})$.
\end{lemma}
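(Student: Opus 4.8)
The plan is to combine the self-normalized confidence bound from the construction of $\mathcal{C}_t(\delta)$ with the persistence-of-excitation bound of Theorem~\ref{lem:smallest_eigen}. Recall that by the standard least-squares-with-regularization guarantee recalled in Section~\ref{sec:first_phase}, for any $\delta \in (0,1)$ we have $\|\hat{\Theta}_T - \Theta_*\|_{V_T} \leq \beta_T(\delta)$ with probability at least $1-\delta$; that is, $(\hat{\Theta}_T - \Theta_*)^\top V_T (\hat{\Theta}_T - \Theta_*) \preceq \beta_T(\delta)^2 I$ in the appropriate sense, which in particular gives $\|V_T^{1/2}(\hat{\Theta}_T - \Theta_*)\| \leq \beta_T(\delta)$ (column-wise, or in operator norm up to the dimension-absorbing constant already hidden in $\beta_T$). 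On the other event, Theorem~\ref{lem:smallest_eigen} gives $\sigma_{\min}(V_T) \geq \sigma_\star^2 T$ with probability at least $1-\delta$, provided $T \geq \mathrm{poly}(\sigma_w^2,\sigma_\nu^2,n,\log(1/\delta))$, which is exactly the hypothesis of the lemma.

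The key steps, in order, are: (i) invoke the confidence-set guarantee to get $\|\hat{\Theta}_T - \Theta_*\|_{V_T} \leq \beta_T(\delta)$ on an event $\mathcal{E}_1$ of probability $\geq 1-\delta$; (ii) invoke Theorem~\ref{lem:smallest_eigen} to get $\sigma_{\min}(V_T) \geq \sigma_\star^2 T$ on an event $\mathcal{E}_2$ of probability $\geq 1-\delta$; (iii) on $\mathcal{E}_1 \cap \mathcal{E}_2$, use the elementary inequality $\|M\|_2 \leq \|V_T^{1/2} M\|_2 / \sigma_{\min}(V_T^{1/2}) = \|M\|_{V_T}/\sqrt{\sigma_{\min}(V_T)}$ with $M = \hat{\Theta}_T - \Theta_*$ to conclude
\[
\|\hat{\Theta}_T - \Theta_*\|_2 \;\leq\; \frac{\|\hat{\Theta}_T - \Theta_*\|_{V_T}}{\sqrt{\sigma_{\min}(V_T)}} \;\leq\; \frac{\beta_T(\delta)}{\sqrt{\sigma_\star^2 T}} \;=\; \frac{\beta_T(\delta)}{\sigma_\star \sqrt{T}};
\]
(iv) apply a union bound to see that $\mathcal{E}_1 \cap \mathcal{E}_2$ has probability at least $1-2\delta$, matching the statement. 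One minor bookkeeping point: the claim is stated with $\beta_t(\delta)$ rather than $\beta_T(\delta)$; since $V_t$ is nondecreasing, $\beta_t$ is monotone in $t$, so this is harmless (or simply a typo for $\beta_T$), and I would note $\beta_T(\delta) \leq \beta_t(\delta)$ for $t \geq T$ or just read it as $\beta_T$.

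The genuinely hard work here is entirely inside Theorem~\ref{lem:smallest_eigen}, which we are permitted to assume; given that, this lemma is a short corollary and the only thing to be careful about is (a) keeping the two high-probability events aligned so the union bound is clean, and (b) making sure the threshold $T \geq \mathrm{poly}(\sigma_w^2,\sigma_\nu^2,n,\log(1/\delta))$ is taken as the maximum of the thresholds required by the confidence bound (no real threshold---it holds for all $t$) and by Theorem~\ref{lem:smallest_eigen}. I do not anticipate a real obstacle; if anything, the subtlety is purely notational---whether $\beta$ is indexed by $t$ or $T$, and whether the norm on $\hat{\Theta}_T - \Theta_*$ is the spectral or Frobenius norm (the argument goes through identically for either, since $\|M\|_\star \leq \|V_T^{1/2} M\|_\star / \sqrt{\sigma_{\min}(V_T)}$ holds for both the operator and Frobenius norm by the same eigenvalue estimate).
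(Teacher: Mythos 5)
Your proposal is correct and follows essentially the same route as the paper: the self-normalized confidence bound $\|\tth_T-\tts\|_{V_T}\leq\beta_T(\delta)$, the lower bound $\lambda_{\min}(V_T)\geq\sigma_\star^2 T$ from the persistence-of-excitation result, the inequality $\|\tth_T-\tts\|_2\leq \|\tth_T-\tts\|_{V_T}/\sqrt{\lambda_{\min}(V_T)}$, and a union bound over the two events to get $1-2\delta$. The paper merely reproduces the full derivation of the confidence set for completeness before combining the two events exactly as you do, and it shares the same $\beta_t$ versus $\beta_T$ notational slip you flag.
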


This lemma shows that early improved exploration strategy using $\nu_t \!\sim\! \mathcal{N}(0,\sigma_\nu^2)$ for $\sigma_\nu^2 \!= \! 2 \kappa^2 \bar\sigma_w^2$ enables to guarantee the consistency of the parameter estimation. The proof is in Appendix \ref{apx:2norm}, where we combine the confidence set construction in Section \ref{sec:first_phase} with Theorem \ref{lem:smallest_eigen}. This bound is utilized to guarantee stabilizing controllers after early improved exploration. However, first we have the following lemma, which shows that there is a stabilizing neighborhood around $\tts$, such that $K(\Theta')$ stabilizes $\tts$ for any $\Theta'$ in this region.

\begin{lemma}[Strongly Stabilizable Neighborhood] \label{stabilityofoptimisticcontroller}  
For $D \!=\! \overline{\alpha} \gamma^{-1}\kappa^2(1+\kappa^2)$, let $C_{0} = 142D^8$ and  $\epsilon = 1/(54D^5)$. For any $(\kappa,\gamma)$-stabilizable system $\tts$ and for any $\varepsilon \leq \min\{ \sqrt{\bar{\sigma}_w^2nD/C_0},\epsilon\}$, such that $\|\Theta' - \tts \| \leq \varepsilon$, $K(\Theta')$ produces $(\kappa',\gamma')$-stable closed-loop dynamics on $\tts$ where $\kappa' = \kappa \sqrt{2} $ and $\gamma' = \gamma/2$.
\end{lemma}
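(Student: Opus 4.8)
The plan is to reduce the claim to two perturbation facts and then combine them. Write the closed-loop matrix that $K(\Theta')$ induces on the true system as
\begin{equation}
A_* + B_* K(\Theta') \;=\; \bigl(A_* + B_* K(\Theta_*)\bigr) + \Delta, \qquad \Delta \coloneqq B_*\bigl(K(\Theta') - K(\Theta_*)\bigr).
\end{equation}
Since $\Theta_*$ is $(\kappa,\gamma)$-stabilizable, Definition~\ref{def:StrongStabilizability} gives $A_* + B_* K(\Theta_*) = H L H^{-1}$ with $H \succ 0$, $\|L\| \le 1-\gamma$, and $\|H\|\|H^{-1}\| \le \kappa$. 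The two facts I need are (i) $K(\cdot)$ is Lipschitz near $\Theta_*$ with a constant $L_K$ polynomial in $D$, so that $\|\Delta\| \le \|B_*\| L_K \varepsilon \le S L_K \varepsilon$; and (ii) conjugating the perturbed matrix by the \emph{same} $H$ preserves strong stability with mildly degraded constants. Granting (i), fact (ii) is short: $H^{-1}(A_* + B_* K(\Theta'))H = L + H^{-1}\Delta H$ and $\|H^{-1}\Delta H\| \le \|H^{-1}\|\|H\|\|\Delta\| \le \kappa S L_K \varepsilon$, so once $\varepsilon$ is small enough that $\kappa S L_K \varepsilon \le \gamma/2$ we get $\|L + H^{-1}\Delta H\| \le 1-\gamma/2$; hence $A_* + B_* K(\Theta') = H(L + H^{-1}\Delta H)H^{-1}$ witnesses $\gamma' = \gamma/2$ with the same $H$, and $\|H\|\|H^{-1}\| \le \kappa \le \kappa\sqrt 2$. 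The threshold $\varepsilon \le \epsilon = 1/(54 D^5)$ (with the companion bound $\varepsilon \le \sqrt{\bar\sigma_w^2 n D/C_0}$, relevant when the lemma is later applied with $\varepsilon$ equal to the estimation error) is calibrated precisely so that this inequality, and the controller-norm bound below, hold.

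The substantive step is fact (i): Lipschitz continuity of the DARE-optimal controller near $\Theta_*$ with a polynomial-in-$D$ constant, which I would obtain from a DARE perturbation bound $\|P(\Theta') - P(\Theta_*)\| \le c\, D^p\, \varepsilon$ for $\|\Theta'-\Theta_*\|\le\varepsilon\le\epsilon$ and explicit absolute $c,p$ — this is the source of the powers of $D$ in the statement. The cleanest route is a contraction/Newton argument: regard $P \mapsto \calE_{\Theta}(P)$, the Riccati operator attached to $\Theta$, near the point $P(\Theta_*)$. Using $\|P(\Theta_*)\| \le D$ from Lemma~\ref{bounded_P}, the decay estimate $\|(A_* + B_* K(\Theta_*))^j\| \le \kappa(1-\gamma)^j$, $\sigma_{\min}(Q),\sigma_{\min}(R) > \underline\alpha$, and $\|A_*\|,\|B_*\| \le S$, one checks that $\calE_{\Theta'}$ is a contraction on a ball around $P(\Theta_*)$ of radius $\ge 1/\mathrm{poly}(D)$, while $\|\calE_{\Theta'}(P(\Theta_*)) - P(\Theta_*)\| = \|\calE_{\Theta'}(P(\Theta_*)) - \calE_{\Theta_*}(P(\Theta_*))\| \le \mathrm{poly}(D)\,\varepsilon$ since the two operators differ only through $\Theta$; the fixed-point theorem then yields $P(\Theta')$, its uniqueness and stabilizing property (which is what certifies $K(\Theta')$ is well defined, even though $\Theta'$ need not lie in $\mathcal S$), and the bound. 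Substituting into $K(\Theta) = -(R + B^\top P B)^{-1} B^\top P A$ and differencing term by term, using $\sigma_{\min}(R + B^\top P B) \ge \underline\alpha$ and the norm bounds above, gives $\|K(\Theta') - K(\Theta_*)\| \le L_K \varepsilon$ with $L_K = \mathrm{poly}(D)$. In particular $\|K(\Theta')\| \le \kappa + L_K\varepsilon \le \kappa\sqrt 2$ for $\varepsilon \le \epsilon$, which supplies the $\sqrt 2$ slack in $\kappa'$; together with (ii), $K(\Theta')$ gives $(\kappa\sqrt 2, \gamma/2)$-stable closed-loop dynamics on $\Theta_*$.

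The main obstacle is the constant bookkeeping in the substantive step: one must uniformize every intermediate quantity — $\|R^{-1}\|$, $\|(R + B^\top P B)^{-1}\|$, the geometric decay rate, the operator-norm Lipschitz constants of the quadratic and matrix-inverse maps inside $\calE_\Theta$ and $K(\Theta)$, and $\|A_*\|,\|B_*\|$ — and show they collapse into the clean powers $142\,D^8$ and $54\,D^5$, which in particular forces tracking how $S$ and $\underline\alpha^{-1}$ get absorbed. A secondary point is to confirm that the contraction ball around $P(\Theta_*)$ is large enough to contain $P(\Theta')$ for every admissible $\varepsilon$, so that the fixed point obtained is the stabilizing DARE solution and $K(\Theta')$ is the genuine optimal controller for $\Theta'$.
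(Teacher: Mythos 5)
Your route is genuinely different from the paper's. The paper's proof is essentially two lines: it invokes a certainty-equivalence perturbation bound from \citet{simchowitz2020naive} stating that $\|\Theta'-\tts\|\le\varepsilon\le\epsilon$ implies $J(K(\Theta'),A_*,B_*,Q,R)-J_*\le C_0\varepsilon^2$ (this is where the constants $142D^8$ and $1/(54D^5)$ come from), then observes that the threshold $\varepsilon\le\sqrt{\bar\sigma_w^2 nD/C_0}$ caps the excess cost at $J_*\le\mathcal J$ so that $J(K(\Theta'))\le 2\mathcal J$, and finally applies Lemma 41 of \citet{cassel2020logarithmic}, which converts any bound $J(K')\le\mathcal J'$ into $(\kappa',\gamma')$-strong stability with $\kappa'=\sqrt{\mathcal J'/(\underline\alpha\bar\sigma_w^2)}$ and $\gamma'=1/(2\kappa'^2)$. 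So the paper goes parameter error $\to$ cost suboptimality $\to$ strong stability, with both arrows outsourced to cited results. You instead go parameter error $\to$ Lipschitz bound on $K(\cdot)$ $\to$ direct conjugation by the \emph{same} similarity transform $H$ witnessing the stabilizability of $\tts$. Your step (ii) is clean and correct, and it has the advantage of producing $\gamma'=\gamma/2$ and the reuse of $H$ transparently, whereas the paper's route recovers $\kappa'$ and $\gamma'$ only through the cost bound and is somewhat loose in matching them to $\kappa\sqrt2$ and $\gamma/2$. What the paper's route buys is that it never needs an explicit DARE perturbation constant: the Simchowitz--Foster lemma packages it.

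The caveat is that your step (i) — the Lipschitz continuity of $K(\cdot)$ with an explicit $\mathrm{poly}(D)$ constant via a contraction argument on the Riccati operator — is precisely the technical content of the external lemma the paper cites, and you have only sketched it. The sketch is plausible (this is the Mania--Tu--Recht / Simchowitz--Foster program), but two things will not fall out of an independent bookkeeping exercise: the \emph{specific} constants $C_0=142D^8$ and $\epsilon=1/(54D^5)$, which are inherited verbatim from the citation and are not canonical; and the meaning of the companion threshold $\sqrt{\bar\sigma_w^2 nD/C_0}$, which in the paper's argument exists solely to bound the cost increase by $\mathcal J$ before invoking the cost-to-stability lemma — in your route it plays no role, so your proof would prove a statement with different (though functionally equivalent, still $\mathrm{poly}(D)$) thresholds. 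If exact constants are not required, your argument is a valid and arguably more self-contained alternative once the DARE perturbation bound is actually carried out.
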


 The proof is given in Appendix \ref{apx:neighborhood}. This lemma shows that to guarantee the stabilization of the unknown dynamics a learning agent should have uniformly sufficient exploration in all directions of the state-space. By the choice of $\Tw$ (precise expression given in Appendix \ref{apx:boundedness}) and using Lemma \ref{lem:2norm_bound}, \alg guarantees to quickly find this stabilizing neighborhood with high probability due to the adaptive control with improved exploration phase of $\Tw$ time steps.
 
 For the remaining time steps, $t \geq \Tw$, \alg starts redressing the possible state explosion due to unstable controllers and the perturbations in the early stages. Define $\Tbase$ and $T_{r}$ such that $\Tbase \!=\! (n+d)\log(n+d)H_0$ and $T_{r} = \Tw + \Tbase$. Recall that $H_0$ is the minimum duration for a controller such that the state is well-controlled despite the policy changes. The following shows that the stabilizing controllers are applied long enough that the state stays bounded for $T\!>\!T_{r}$.

\begin{lemma}[Bounded states]\label{lem:bounded_state} 
Suppose Assumption \ref{general_noise} \& \ref{parameterassump_stabilizability} hold. For given $\Tw$ and $\Tbase$, \alg controls the state such that $\|x_t\| = O((n+d)^{n+d})$ for $t\leq T_{r}$, with probability at least $1-2\delta$ and $\|x_t\| \! \leq\! (12\kappa^2\!+\! 2\kappa\sqrt{2})\gamma^{-1}\sigma_w \sqrt{2n\log(n(t\!-\!\Tw)/\delta)}$ for $T \!\geq\! t \!>\! T_{r}$, with probability at least $1-4\delta$.  
\end{lemma}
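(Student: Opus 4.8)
\textbf{Proof plan for Lemma \ref{lem:bounded_state}.}

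The plan is to split the time horizon at $T_r = \Tw + \Tbase$ and handle the two regimes with different tools, since before $T_r$ the state can still be large (the improved-exploration perturbations plus possibly non-contractive controllers), while after $T_r$ the state has been ``reset'' by a long enough run of a stabilizing, $(\kappa\sqrt2,\gamma/2)$-strongly-stable controller. For the first regime $t \le T_r$, I would proceed by induction on the policy epochs. On each epoch the closed-loop map is $A_* + B_* K(\ttt_{t-1})$; even in the worst case this matrix has spectral norm bounded by a polynomial in $S$, $\kappa$ (from Assumption \ref{parameterassump_stabilizability} and the fact that $K(\ttt_{t-1})$ is an optimal controller for a system in $\mathcal S$, hence $\|K(\ttt_{t-1})\|\le\kappa$ and $\|\ttt_{t-1}\|_F \le S$). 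Unrolling the dynamics over one epoch of length at most some $O(\poly)$ bound, the state can grow by at most a constant geometric factor per epoch; the number of epochs up to $T_r$ is itself $O(\poly(n,d))$ because each epoch requires doubling $\det(V_t)$ and $V_t$ is bounded above by $\poly$ quantities times $T_r$, giving $O((n+d)\log(\cdot))$ epochs. Compounding a $\poly$-sized per-epoch growth over $O((n+d))$ epochs yields the $O((n+d)^{n+d})$ bound. The process noise $w_t$ and perturbations $\nu_t$ contribute additively; by sub-Gaussian concentration (Assumption \ref{general_noise}) their partial sums are $O(\sqrt{n\log(1/\delta)})$ times the relevant horizon with probability $\ge 1-2\delta$, which is absorbed into the same $\poly$-exponential bound.

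For the second regime $T \ge t > T_r$, the key input is Lemma \ref{stabilityofoptimisticcontroller} together with Lemma \ref{lem:2norm_bound}: after $\Tw$ steps of improved exploration, $\|\tth_{\Tw} - \tts\| \le \varepsilon$ with probability $\ge 1-2\delta$, so every controller $K(\ttt_{t-1})$ used for $t \ge \Tw$ is $(\kappa\sqrt2,\gamma/2)$-stabilizing on $\tts$, i.e. $A_* + B_*K(\ttt_{t-1}) = H_t L_t H_t^{-1}$ with $\|L_t\| \le 1-\gamma/2$ and $\|H_t\|\|H_t^{-1}\|\le\kappa\sqrt2$. The minimum dwell time $H_0 = 2\gamma^{-1}\log(2\kappa\sqrt2)$ is exactly chosen so that over one epoch the contraction $(1-\gamma/2)^{H_0}$ beats the conditioning blow-up $(\kappa\sqrt2)^2$ incurred when switching similarity transformations between epochs; so across epoch boundaries the ``effective'' gain is still a contraction. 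Then I would write $x_t$ as the sum of (i) the homogeneous term propagating $x_{T_r}$ forward, which decays geometrically and, using the first-regime bound on $\|x_{T_r}\|$ and the fact that $\Tbase = (n+d)\log(n+d)H_0$ is long enough, shrinks the $(n+d)^{n+d}$ factor down to $O(1)$ by time $T_r$ (this is the role of $\Tbase$), and (ii) the particular term $\sum_{s} (\text{closed-loop products}) w_s$, a martingale whose terms are geometrically weighted; a standard self-normalized / sub-Gaussian tail bound gives $\|x_t\| \le \frac{\kappa'}{1-(1-\gamma'/2)}\cdot O(\sigma_w\sqrt{n\log(n(t-\Tw)/\delta)})$ with $\kappa' = \kappa\sqrt2$, $\gamma'=\gamma/2$, matching the stated $(12\kappa^2 + 2\kappa\sqrt2)\gamma^{-1}\sigma_w\sqrt{2n\log(n(t-\Tw)/\delta)}$ after plugging the geometric-series constant $\tfrac{\kappa'}{\gamma'}\le 4\kappa^2/\gamma$ and bookkeeping the $2$'s. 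The extra $2\delta$ in the failure probability for this regime comes from the additional time-uniform martingale bound on top of the $1-2\delta$ events inherited from Lemma \ref{lem:2norm_bound}.

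The main obstacle I expect is the cross-epoch argument in the second regime: individual controllers are contractive only after an $H_t$-dependent similarity transformation, and these transformations differ from epoch to epoch, so one cannot simply multiply contraction factors. The fix is to bound the product of transition matrices across $k$ epochs by $\prod \|H_{i}\|\|H_{i-1}^{-1}\| \cdot (1-\gamma/2)^{(\text{total steps})}$, absorbing a factor $(\kappa\sqrt2)^2$ per switch and checking that the dwell time $H_0$ makes $(1-\gamma/2)^{H_0}(\kappa\sqrt2)^2 < 1$ (indeed $\le 1/2$ with the chosen $H_0$); this is the crux of why $H_0 = 2\gamma^{-1}\log(2\kappa\sqrt2)$ is the right constant. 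A secondary technical point is controlling the number of epochs before $T_r$ to keep the first-regime bound only singly-exponential in $n+d$ rather than worse; this follows from the determinant-doubling trigger and the a priori polynomial upper bound on $\det(V_{T_r})$, using $\log\det(V_{T_r})/\log 2 = O((n+d)\log(\poly))$. Both steps are carried out in detail in Appendix \ref{apx:boundedness}.
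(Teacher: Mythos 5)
Your treatment of the second regime ($t>T_r$) is essentially the paper's argument: Lemmas \ref{lem:2norm_bound} and \ref{stabilityofoptimisticcontroller} give $(\kappa\sqrt2,\gamma/2)$-strong stability of every closed loop after $\Tw$, the dwell time satisfies $\kappa\sqrt2\,(1-\gamma/2)^{H_0}\le 1/2$ so the carried-over state is halved at each switch, $\Tbase=(n+d)\log(n+d)H_0$ burns off the $(n+d)^{n+d}$ initial condition, and the forced response is a geometrically weighted sub-Gaussian sum. (Minor bookkeeping: the correct per-switch loss is one factor of $\kappa\sqrt2$, not $(\kappa\sqrt2)^2$; with the stated $H_0$ one gets $\kappa\sqrt2(1-\gamma/2)^{H_0}\le 1/2$, whereas $(\kappa\sqrt2)^2(1-\gamma/2)^{H_0}\le\kappa/\sqrt2$ need not be less than $1$.)

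The first regime ($t\le T_r$) has a genuine gap. Your epoch-compounding argument assumes the state grows by "at most a constant geometric factor per epoch" under the \emph{true} closed loop $A_*+B_*K(\ttt_{t-1})$. But during the improved-exploration phase there is no guarantee that $K(\ttt_{t-1})$ stabilizes $\tts$, so $\|A_*+B_*K(\ttt_{t-1})\|$ can exceed $1$, and an epoch's length is only \emph{lower}-bounded by $H_0$ (it ends when the determinant doubles, which can take long); over an epoch of length $H$ the growth is $\bar\eta^{H}$, not a constant. Pushed to the extreme, an unstable true closed loop over the whole horizon $\Tw=\text{poly}(n,d,\ldots)$ would give $\bar\eta^{\Tw}$, far worse than $(n+d)^{n+d}$. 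The paper avoids this via the decomposition in \eqref{state_update}: for all but at most $n+d$ time steps (the set $\mathcal{T}_T$), the transition is written using the \emph{optimistic} closed loop $\ta_{t-1}+\tb_{t-1}K(\ttt_{t-1})$ — which is $(\kappa,\gamma)$-stabilizable by construction since $\ttt_{t-1}\in\mathcal{S}$ — with the model mismatch $(\tts-\ttt_{t-1})^\top z_t$ absorbed into the noise term $r_t$ and controlled by the projection argument of Lemma \ref{wellcontrol} (Abbasi-Yadkori and Szepesv\'ari). The true, possibly expansive closed loop is used only at the $\le n+d$ exceptional steps, which is where the $\bar\eta^{n+d}$ factor originates; the resulting implicit inequality (the bound on $\|x_t\|$ involves $Z_t$ on both sides through $GZ_t^{(n+d)/(n+d+1)}\beta_t(\delta)^{1/(2(n+d+1))}$) is then resolved by Lemma 5 of \citet{abbasi2011lqr} to yield $O((n+d)^{n+d})$. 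Without this mismatch-as-noise decomposition and the $\le n+d$ exceptional-times lemma, the first-regime bound does not follow.
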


In the proof (Appendix \ref{apx:boundedness}), we show the policies seldom change via determinant doubling condition or the lower bound of $H_0$ for the adaptive control with improved exploration phase to keep the state bounded. For the stabilizing adaptive control, we show that deploying stabilizing policies for at least $H_0$ time-steps provides an exponential decay on the state and after $\Tbase$ time-steps brings the state to an equilibrium. 

\subsection{Regret Upper Bound of \alg} \label{sec:regret_bound}
After showing the effect of fast stabilization, we can finally present the regret upper bound of \alg.

\begin{theorem}[Regret of \alg]  \label{reg:exp_s}
Suppose Assumptions \ref{general_noise} and \ref{parameterassump_stabilizability} hold. For the given choices of $\Tw$ and $\Tbase$, with probability at least $1-4\delta$, \alg achieves regret of $\OO\big(\text{poly}(n,d) \sqrt{T\log(1/\delta)}\big)$, for long enough $T$.
\end{theorem}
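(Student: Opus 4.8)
The plan is to split the horizon at $T_{r}=\Tw+\Tbase$ and handle the burn‑in window $t\le T_{r}$ and the ``regulation'' window $t>T_{r}$ by completely different means. On the burn‑in window I would not try to be clever: Lemma~\ref{lem:bounded_state} gives a (possibly dimension‑exponential but $T$‑independent) bound on $\|x_t\|$, hence on $u_t=K(\ttt_{t-1})x_t+\nu_t$ via $\|K(\ttt_{t-1})\|\le\kappa$ and a high‑probability bound $\|\nu_t\|=\OO(\sigma_\nu\sqrt n)$, and therefore on each cost $c_t=x_t^\top Q x_t+u_t^\top R u_t$; since $T_{r}$ depends only on $n,d$ and the control‑theoretic constants (never on $T$), $\sum_{t\le T_{r}}(c_t-J_*)$ is a $T$‑independent additive term, which is exactly why the statement is ``for long enough $T$''. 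All the real work is on $t>T_{r}$, where Lemmas~\ref{stabilityofoptimisticcontroller}--\ref{lem:bounded_state} (fed by Theorem~\ref{lem:smallest_eigen} and Lemma~\ref{lem:2norm_bound}) guarantee that every deployed controller $K(\ttt_{t-1})$ is $(\kappa',\gamma')$‑stabilizing for $\tts$ and that $\|x_t\|=\OO(\gamma^{-1}\sigma_w\sqrt{n\log(T/\delta)})$, i.e. polynomially — not exponentially — bounded.

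On the regulation window I would run the standard optimism‑based decomposition. On the event $\{\tts\in\C_t(\delta)\ \forall t\}$ (probability $\ge 1-\delta$), the optimistic $\ttt_t\in\C_t(\delta)\cap\S$ has minimal optimal cost, so $J(\ttt_t)\le J(\tts)=J_*$ and $\sum_{t>T_{r}}(c_t-J_*)\le\sum_{t>T_{r}}(c_t-J(\ttt_t))$. Writing $P_t\coloneqq P(\ttt_t)$ (piecewise constant, $\|P_t\|\le D$ by Lemma~\ref{bounded_P}), the Riccati identity for the model $\ttt_t$ under its own optimal controller gives $c_t=x_t^\top P_t x_t-(\ttt_t^\top z_t)^\top P_t(\ttt_t^\top z_t)$; substituting $x_{t+1}=\tts^\top z_t+w_t$ and using $\bar{\sigma}_w^2\Tr(P_t)=J(\ttt_t)$ yields
\begin{align*}
 c_t - J(\ttt_t) &= \big(x_t^\top P_t x_t - x_{t+1}^\top P_t x_{t+1}\big) + 2(\tts^\top z_t)^\top P_t w_t \\
 &\quad + \big(w_t^\top P_t w_t - \bar{\sigma}_w^2\Tr(P_t)\big) \\
 &\quad - \big((\ttt_t^\top z_t)^\top P_t(\ttt_t^\top z_t) - (\tts^\top z_t)^\top P_t(\tts^\top z_t)\big).
\end{align*}
Summed over $t>T_{r}$ the four groups are, respectively, a block‑wise telescoping term, two martingale‑difference sums, and a model‑prediction‑error term.

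Each is then bounded. \emph{(i)} The telescoping term leaks only at the update times, where the residual is $x_{t+1}^\top(P_{t+1}-P_t)x_{t+1}\le 2D\|x_{t+1}\|^2$; an update requires $\det(V_t)$ to double, so there are at most $\OO((n+d)\log T)$ of them and, with the polynomial state bound, this term is $\OO(\mathrm{poly}(n,d)\,\mathrm{polylog}(T/\delta))$. \emph{(ii)--(iii)} The two martingale sums have $\mathcal{F}_{t-1}$‑measurable multipliers ($2P_t\tts^\top z_t$, resp.\ constants) and sub‑Gaussian/sub‑exponential increments; a self‑normalized (resp.\ Freedman‑type) inequality bounds them by $\OO(\sigma_w D S\sqrt{\sum_t\|z_t\|^2\log(1/\delta)})$ and $\OO(D\sigma_w^2 n\sqrt{T\log(1/\delta)})$, and since $\sum_t\|z_t\|^2\le T\max_t\|z_t\|^2=\mathrm{poly}(n,d)\,\mathrm{polylog}(T)\cdot T$, both are $\OO(\mathrm{poly}(n,d)\sqrt{T\log(1/\delta)})$ — this is the term that creates the $\sqrt T$. \emph{(iv)} For the model‑error term, $\ttt_t,\tts\in\C_t(\delta)$ gives $\|\ttt_t-\tts\|_{V_t}\le 2\beta_t(\delta)$, hence $\|(\ttt_t-\tts)^\top z_t\|\le 2\beta_t(\delta)\|z_t\|_{V_t^{-1}}$, so the sum is at most $4DS\,\beta_T(\delta)\max_t\|z_t\|\sum_t\|z_t\|_{V_t^{-1}}\le\OO(\mathrm{poly}(n,d))\sqrt{T\sum_t\|z_t\|_{V_t^{-1}}^2}$, and the elliptical‑potential lemma (with $\|z_t\|$ poly‑bounded, so $\|z_t\|_{V_t^{-1}}^2$ need not be $\le1$ but is poly‑bounded) gives $\sum_t\|z_t\|_{V_t^{-1}}^2=\OO(\mathrm{poly}(n,d)\,\mathrm{polylog}(T))$. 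Adding everything and taking a union bound over the confidence‑set event, the state‑boundedness event of Lemma~\ref{lem:bounded_state}, and the martingale events (rescaling $\delta$) yields $\mathrm{R}(T)=\OO(\mathrm{poly}(n,d)\sqrt{T\log(1/\delta)})$ for $T$ large enough to dominate the burn‑in term.

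The step I expect to be the real obstacle — granting Theorem~\ref{lem:smallest_eigen} and Lemmas~\ref{lem:2norm_bound}--\ref{lem:bounded_state} — is keeping every constant \emph{polynomial} in $n,d$. In the prior \OFU analysis the regret inherits the worst‑case state magnitude, which on a system that was never stabilized is exponential in $n+d$; the entire point of the improved exploration is that after only $\Tw$ steps the controllers stabilize $\tts$, so on the $T-T_{r}$ steps that dominate the regret the states (and hence $\max_t\|z_t\|$, $\beta_T(\delta)$, and $\|z_t\|_{V_t^{-1}}^2$) are poly‑bounded, and the dimension‑exponential state of Lemma~\ref{lem:bounded_state} is confined to a $T$‑independent number of steps. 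The secondary nuisance is the bookkeeping around the data‑dependent, piecewise‑constant $\ttt_t$: one must use the determinant‑doubling rule and the $H_0$ dwell time to show updates are $\OO((n+d)\log T)$ in number, that the telescoping sum leaks only at those updates, and that all the ``$V_t^{-1}$‑normalized'' quantities remain controlled despite $\lambda$ being a fixed (not growing) regularizer.
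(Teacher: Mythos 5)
Your proposal is correct and follows essentially the same route as the paper's proof in Appendices \ref{apx:regret_decomp}--\ref{apx:regret_analysis}: the optimism inequality $J(\ttt_t)\le J_*$ plus the Bellman/Riccati identity for the optimistic model yields the same four-term split (block-telescoping $R_1^\zeta$/$R_2^\zeta$-type terms, martingale sums, and the model-prediction-error term $R_3^\zeta$), bounded by the same ingredients — determinant-doubling switch counting, self-normalized martingale bounds, the elliptical-potential lemma with the $\det(V_t)/\det(V_\tau)$ correction for the piecewise-constant $\ttt_t$, and the post-$\Tw$ polynomial state bound confining the $(n+d)^{n+d}$ factor to the burn-in. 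The only (cosmetic) difference is that you split the horizon at $T_r$ and bound the burn-in cost crudely, whereas the paper runs one unified decomposition by absorbing $\nu_t$ into the system noise $\zeta_t$ and explicitly tracking the cross-terms $\sum_{t\le \Tw}\bigl(2\nu_t^\top R u_t+\nu_t^\top R\nu_t\bigr)$; both give the same $(n+d)^{n+d}\Tw+\text{poly}(n,d)\sqrt{T-\Tw}$ structure.
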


The proofs and the exact expressions are presented in Appendix \ref{apx:regret_analysis}. Here, we provide a proof sketch. The regret decomposition leverages the optimistic controller design. Recall that for the early improved exploration, \alg applies independent perturbations through the controller yet still deploys the optimistic policy. Thus, we consider this external perturbation as a part of the underlying system and study the regret obtained by the improved exploration strategy separately.

In particular, denote the system evolution noise at time $t$ as $\zeta_t$. For $t\leq \Tw$, system evolution noise can be considered as $\zeta_t =B_*\nu_t + w_t$ and for $t> \Tw$, $\zeta_t = w_t$. We denote the optimal average cost of system $\ttt$ under $\zeta_t$ as $J_*(\ttt, \zeta_t)$. Using the Bellman optimality equation for \LQR \citep{bertsekas1995dynamic}, we consider the system evolution of the optimistic system $\ttt_t$ using the optimistic controller $K(\ttt_t)$ in parallel with the true system evolution of $\tts$ under $K(\ttt_t)$ such that they share the same process noise (See details in Appendix \ref{apx:regret_analysis}). Using the confidence set construction, optimistic policy, Lemma \ref{lem:bounded_state}, Assumption \ref{parameterassump_stabilizability} and Lemma \ref{bounded_P}, we get a regret decomposition and bound each term separately.

At a high-level, the exact regret expression has a constant regret term due to early additional exploration for $T_w$ time-steps with exponential dimension dependency and a term that scales with square root of the duration of stabilizing adaptive control with polynomial dimension dependency, \textit{i.e.} $(n+d)^{n+d} T_w  \!+\! \text{poly}(n,d) \sqrt{T-T_w}$. Note that $\Tw$ is a problem dependent expression. Thus, for large enough $T$, the polynomial dependence dominates, giving Theorem \ref{reg:exp_s}.

\section{EXPERIMENTS} \label{sec:Exp}

\begin{table}
\centering
\captionsetup{justification=centering}
\caption{Regret Performance After 200 Time Steps in Marginally Unstable Laplacian System. \alg outperfoms other algorithms by a significant margin}
\label{table:2}
\resizebox{0.48\textwidth}{!}{
\begin{tabular}{l c c c c}
 \textbf{Algo.} \!\!\!\!\!\! & \!\! \makecell{ \textbf{Avg.} \\ \textbf{Regret} } \!\! & \!\!   \makecell{ \textbf{Top} \\ $\mathbf{90\%}$ }  \!\! & \!\!  \makecell{ \textbf{Top} \\ $\mathbf{75\%}$ } \!\! & \!\!   \makecell{ \textbf{Top} \\ $\mathbf{50\%}$ } \!\!  \\
  \hline
 \alg \!\!\!\!\!\! & \!\! $\bf{1.5 \!\! \times \!\! 10^4}$   \!\! & \!\! $\bf{1.3 \!\! \times \!\! 10^4}$ \!\! & \!\! $\bf{1.1 \!\! \times \!\! 10^4 }$ \!\! & \!\! $\bf{8.9  \!\! \times \!\! 10^3} $\!\! \\

 OFULQ  \!\!\!\!\!\! & \!\! $6.2 \!\! \times \!\! 10^{10}$  \!\! & \!\!  $4.0 \!\! \times \!\! 10^6$ \!\! & \!\! $3.5 \!\! \times \!\! 10^5$ \!\! & \!\! $4.7  \!\! \times \!\! 10^4 $ \!\!\\

CEC-Fix \!\!\!\!\!\! & \! $3.7 \!\! \times \!\! 10^{10}$ \!\! & \!\! $2.1 \!\! \times \!\! 10^{4}$  \!\! & \!\! $1.9 \!\! \times \!\! 10^{4}$  \!\! & \!\!   $1.7 \!\! \times \!\! 10^{4}$\!\!  \\

  CEC-Dec \!\!\!\!\!\! & \!\! $4.6 \!\! \times \!\! 10^{4}$  \!\! & \!\! $4.0 \!\! \times \!\! 10^{4}$ \!\! & \!\! $3.5 \!\! \times \!\! 10^4$ \!\! & \!\! $2.8 \!\! \times \!\! 10^{4}$\!\! \\
  
\end{tabular}}
\end{table}

In this section, we evaluate the performance of \alg in four adaptive control tasks: \textbf{(1)} a marginally unstable Laplacian system~\citep{dean2018regret}, \textbf{(2)} the longitudinal flight control of Boeing 747 with linearized dynamics~\citep{747model}, \textbf{(3)} unmanned aerial vehicle (UAV) that operates in a 2-D plane~\citep{zhao2021infinite},  and \textbf{(4)} a stabilizable but not controllable linear dynamical system. For each task, we compare \alg with three \RL algorithms: (i) \OFULQ of \citet{abbasi2011lqr}; (ii) certainty equivalent controller with fixed isotropic perturbations (CEC-Fix), which is the standard baseline in control theory; and (iii) certainty equivalent controller with decaying isotropic perturbations (CEC-Dec), which is shown to \textit{achieve optimal regret with a given initial stabilizing policy} \citep{simchowitz2020naive, dean2018regret,mania2019certainty}. In the implementation of CEC-Fix and CEC-Dec, the optimal control policies of the estimated model are deployed. Furthermore, in finding the optimistic parameters for \alg and \OFULQ, we use projected gradient descent within the confidence sets. We perform $200$ independent runs for each algorithm for $200$ time steps starting from $x_0 = 0$. We present the performance of best parameter choices for each algorithm. For further details and the experimental results please refer to Appendix \ref{apx:experiment_detail}.

Before discussing the experimental results, we would like to highlight the baselines choices. Unfortunately, there are only a few works in literature that consider \RL in \LQR{}s without a stabilizing controller. These works are \OFULQ of \citep{abbasi2011lqr}, \citep{abeille2018improved}, and \citep{chen2020black}. Among these, \citep{chen2020black} considers \LQR{}s with adversarial noise setting and deploys \textit{impractically large inputs}, e.g. $10^{28}$ for task \textbf{(1)}, whereas the algorithm of \citep{abeille2018improved} only works in scalar setting. These prohibit meaningful regret and stability comparisons, thus, we compare \alg against the only relevant comparison of \OFULQ among these. Moreover, there are only a few and limited experimental studies in the literature of \RL in \LQR{}s. Among these, \citep{dean2018regret,faradonbeh2018input,faradonbeh2020adaptive} highlight the superior performance of CEC-Dec. Therefore, we compare \alg against CEC-Dec with the best-performing parameter choice, as well as the standard control baseline of CEC-Fix.

\begin{figure}[t]
\centering
  \includegraphics[width=0.99\linewidth]{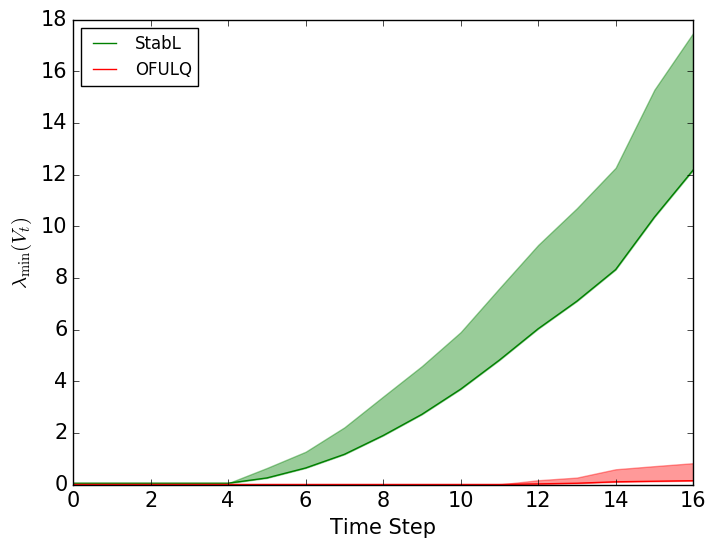}
  \caption{Evolution of the smallest eigenvalue of the design matrix for \alg and \OFULQ in Laplacian system. The solid line is the mean and the shaded region is one standard deviation. \alg attains linear scaling whereas \OFULQ suffers from lack of early exploration.}
  \label{fig:compare_main}
\end{figure}

\begin{table}[t]
\centering
\captionsetup{justification=centering}
\caption{Maximum State Norm in the Laplacian System. \alg keeps the state smallest}
\resizebox{0.48\textwidth}{!}{
\begin{tabular}{l c c c c}
\!\! \textbf{Algo.} \!\!\!\!\!\! & \!\! \makecell{ \textbf{Avg.} \\ max$ \|x\|_2$}
  \!\! & \!\! \makecell{ \textbf{Worst} \\ $\mathbf{5\%}$ }  \!\! & \!\!  \makecell{ \textbf{Worst} \\ $\mathbf{10\%}$ } \!\! & \!\!  \makecell{ \textbf{Worst} \\ $\mathbf{25\%}$ }\\
  \hline
 \!\! \alg \!\!\!\!\!\! & \!\! $\bf{1.3 \!\!\times\!\! 10^1}$  \!\! & \!\! $\bf{2.2 \!\!\times\!\! 10^1}$ \!\! & \!\! $\bf{2.1 \!\!\times\!\! 10^1}$ \!\! & \!\! $\bf{1.9 \!\!\times\!\! 10^1}$  \\
 \!\! \OFULQ \!\!\!\!\!\! &\!\! $9.6 \!\!\times\!\! 10^{3}$ \!\!& \!\!$1.8 \!\!\times\!\! 10^5$ \!\! &\!\!  $9.0\!\! \times \!\! 10^4$\!\! & \!\!$3.8 \!\!\times\!\! 10^4$  \\
\!\! CEC-Fix \!\!\!\!\!\! & \!\! $3.3 \!\!\times\!\! 10^{3}$ \!\!&\!\! $6.6 \!\!\times\!\! 10^{4}$ \!\!  & \!\! $3.3 \!\!\times\!\! 10^{4}$ \!\! & \!\!  $1.3 \!\!\times\!\! 10^{4}$  \\
 \!\! CEC-Dec \!\!\!\!\!\! &\!\! $2.0 \!\!\times\!\! 10^{1}$ \!\! &\!\! $3.5 \!\!\times\!\! 10^{1}$ \!\!& \!\! $3.3 \!\!\times\!\! 10^1$ \!\!&\!\! $2.9 \times 10^{1}$
\end{tabular}}
\label{table:max_laplace_main}
\end{table}

\textbf{(1) Laplacian system (Appendix \ref{apx:I1}).}
Table \ref{table:2} provides the regret performance for the average, top $90\%$, top $75\%$ and top $50\%$ of the runs of the algorithms. We observe that \alg attains at least an order of magnitude improvement in regret over \OFULQ and CEC{}s. This setting combined with the unstable dynamics is challenging for the \textit{solely} optimism-based learning algorithms. Our empirical study indicates that, at the early stages of learning, the smallest eigenvalue of the design matrix $V_t$ for \OFULQ is much smaller than that of \alg as shown in Figure~\ref{fig:compare_main}. The early improved exploration strategy helps \alg achieve linear scaling in $\lambda_{\min}(V_t)$, thus persistence of excitation and identification of stabilizing controllers. In contrast, the only \OFU-based controllers of \OFULQ fail to achieve persistence of excitation and accurate estimate of the model parameters. Therefore, due to lack of reliable estimates and the skewed cost, \OFULQ cannot design effective strategies to learn model dynamics and results in unstable dynamics (see Table~\ref{table:max_laplace_main}). Table \ref{table:max_laplace_main} displays the stabilization capabilities of the deployed \RL algorithms. In particular, it provides the averages of the maximum norms of the states for all runs, the worst $5\%$, $10\%$ and $25\%$ runs. Of all algorithms, \alg keeps the state smallest.

\textbf{(2) Boeing 747 (Appendix \ref{apx:I2}).} In practice, nonlinear systems, like Boeing 747, are modeled via local linearizations which hold as long as the states are within a certain region. Thus, to maintain the validity of such linearizations, the state of the underlying system must be well-controlled, \textit{i.e.}, stabilized. Table \ref{table:3} provides the regret performances and Table \ref{table:4} displays the stabilization capabilities of the deployed \RL algorithms similar to \textbf{(1)}. Once more, among all algorithms, \alg maintains the maximum norm of the state smallest and operates within the smallest radius around the linearization point of origin. This observation is consistent among tasks \textbf{(3)} and \textbf{(4)}, which shows that \alg maintains tightly bounded state with high probability. The specifics of the maximum state results on \textbf{(3)} and \textbf{(4)} are given in the Appendix \ref{apx:I3} and \ref{apx:I4} respectively.

\begin{table}[t]
\centering
\captionsetup{justification=centering}
\caption{Regret Performance After 200 Time Steps in Boeing 747 Flight Control. \alg outperfoms others.}
\label{table:3}
\resizebox{0.48\textwidth}{!}{
\begin{tabular}{l c c c c}
 \textbf{Algo.} \!\!\!\!\!\! & \!\! \makecell{ \textbf{Avg.} \\ \textbf{Regret} } \!\! & \!\!   \makecell{ \textbf{Top} \\ $\mathbf{90\%}$ }  \!\! & \!\!  \makecell{ \textbf{Top} \\ $\mathbf{75\%}$ } \!\! & \!\!   \makecell{ \textbf{Top} \\ $\mathbf{50\%}$ } \!\!  \\
  \hline
 \alg \!\!\!\!\!\! & \!\! $\bf{1.3 \!\! \times \!\! 10^4}$   \!\! & \!\! $\bf{9.6 \!\! \times \!\! 10^3}$ \!\! & \!\! $\bf{7.6 \!\! \times \!\! 10^3 }$ \!\! & \!\! $\bf{5.3  \!\! \times \!\! 10^3} $\!\! \\

 OFULQ  \!\!\!\!\!\! & \!\! $1.5 \!\! \times \!\! 10^{8}$  \!\! & \!\!  $9.9 \!\! \times \!\! 10^5$ \!\! & \!\! $5.6 \!\! \times \!\! 10^4$ \!\! & \!\! $8.9  \!\! \times \!\! 10^3 $ \!\!\\

CEC-Fix \!\!\!\!\!\! & \! $4.8 \!\! \times \!\! 10^{4}$ \!\! & \!\! $4.5 \!\! \times \!\! 10^{4}$  \!\! & \!\! $4.3 \!\! \times \!\! 10^{4}$  \!\! & \!\!   $3.9 \!\! \times \!\! 10^{4}$\!\!  \\

  CEC-Dec \!\!\!\!\!\! & \!\! $2.9 \!\! \times \!\! 10^{4}$  \!\! & \!\! $2.5 \!\! \times \!\! 10^{4}$ \!\! & \!\! $2.2 \!\! \times \!\! 10^4$ \!\! & \!\! $1.9 \!\! \times \!\! 10^{4}$\!\! \\
  
\end{tabular}}

\end{table}

\begin{table}[t]
\centering
\captionsetup{justification=centering}
\caption{Maximum State Norm in Boeing 747 Control. \alg keeps the state smallest.}
\resizebox{0.48\textwidth}{!}{
\begin{tabular}{l c c c c}
 \!\! \textbf{Algo.} \!\!\!\!\!\! & \!\! \makecell{ \textbf{Avg.} \\ max$ \|x\|_2$}
  \!\! & \!\! \makecell{ \textbf{Worst} \\ $\mathbf{5\%}$ }  \!\! & \!\!  \makecell{ \textbf{Worst} \\ $\mathbf{10\%}$ } \!\! & \!\!  \makecell{ \textbf{Worst} \\ $\mathbf{25\%}$ }\\
  \hline
 \!\! \alg \!\!\!\!\!\! & \!\! $\bf{3.4 \!\!\times\!\! 10^1}$ \!\!  & \!\! $\bf{7.5 \!\!\times\!\! 10^1}$ \!\! & \!\!  $\bf{7.0 \!\!\times\!\! 10^1}$ \!\! & \!\!  $\bf{5.2 \!\! \times \!\! 10^1 }$  \\

 \!\! OFULQ  \!\!\!\!\!\!  &   $1.6 \!\! \times \!\! 10^{3}$ \!\! & \!\!  $2.2 \!\! \times\!\! 10^4$   \!\! & \!\!   $1.4 \!\! \times \!\! 10^4$  \!\! & \!\!  $6.3 \!\!\times\!\! 10^3$  \\

 \!\! CEC-Fix  \!\!\!\!\!\! &   $5.0 \!\!\times\!\! 10^{1}$  \!\! & \!\!  $7.8 \!\!\times\!\! 10^{1}$   \!\! & \!\!  $7.3 \!\!\times\!\! 10^{1}$  \!\! & \!\!   $6.5 \!\!\times\!\! 10^{1}$  \\

  \!\! CEC-Dec \!\!\!\!\!\!  &   $4.6 \!\!\times\!\! 10^{1}$   \!\! &  \!\! $8.0 \!\!\times\!\! 10^{1}$ \!\!  & \!\!  $7.3 \!\!\times\!\! 10^1$  \!\! & \!\!  $6.3 \!\!\times\!\! 10^{1}$ 
  
\end{tabular}}
\label{table:4}
\end{table}

\textbf{(4) Stabilizable but not controllable system (Appendix \ref{apx:I4}).} Besides \alg, which is tailored for the general stabilizable setting, other algorithms perform poorly in this challenging setting. In particular, CEC-Fix drastically blows up the state due to significantly unstable dynamics for the uncontrollable part of the system. Therefore, the regret performances of only \alg, \OFULQ and CEC-Dec are presented in Figure \ref{fig:regret_stabil_main}. Figure \ref{fig:regret_stabil_main} is in semi-log scale and \alg provides an order of magnitude improved regret compared to the best performing state-of-art baseline CEC-Dec.

\begin{figure}[t]
\centering
  \includegraphics[width=0.99\linewidth]{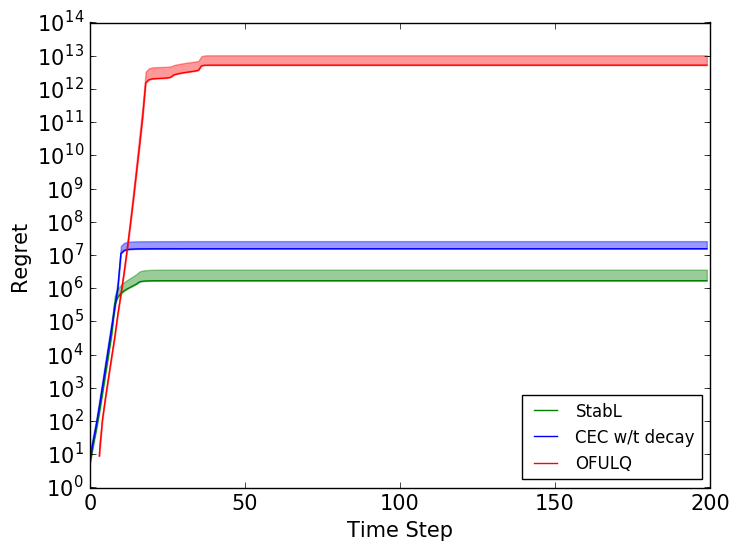}
  \caption{Regret Comparison of three algorithms in controlling a stabilizable but not controllable system. The solid lines are the average regrets and the shaded regions are the quarter standard deviations.}
  \label{fig:regret_stabil_main}
\end{figure}

\section{RELATED WORK}
\textbf{Finite-time regret guarantees:} Prior works study the problem of regret minimization in \LQR{}s and achieve sublinear regret using CEC{}s \citep{mania2019certainty, faradonbeh2018input,faradonbeh2020adaptive}, robust controllers \citep{dean2018regret}, the \OFU principle \citep{abeille2020efficient}, Thompson sampling 
\citep{abeille2018improved} and an SDP relaxation \citep{cohen2019learning} with a lower bound provided in \citet{simchowitz2020naive}. These works all assume that an initial stabilizing policy is given and do not design autonomous learning agents which is the main focus of this paper. Among these, \citet{simchowitz2020naive} provide the tight regret guarantee for the setting with known initial stabilizing policy. Their proposed algorithm follows the given non-adaptive initial stabilizing policy for a \textit{long} period of time with isotropic perturbations. 
Thus, they provide an order-optimal theoretical regret upper bound with an additional large constant regret. However, in many applications, e.g. medical, such constant regret, and non-adaptive controllers are not tolerable. \alg aims to address these challenges and provide an adaptive algorithm that can be deployed in practice. Moreover, \alg achieves significantly improved performance over the prior baseline \RL algorithms in various adaptive control tasks (Section \ref{sec:Exp}). 

\textbf{Finding a stabilizing controller:} Similar to the regret minimization, there has been a growing interest in finite-time stabilization of linear dynamical systems \citep{dean2019sample,faradonbeh2018finite,faradonbeh2019randomized}. Among these works, \citet{faradonbeh2018finite} is the closest to our work. However, there are significant differences in the methods and the span of the results. In \citet{faradonbeh2018finite}, random linear controllers are used solely for finding a stabilizing set without a control goal. This results in the explosion of state, presumably exponentially in time, leading to a regret that scales exponentially in time. The proposed method provides many insightful aspects for finding a stabilizing set in finite-time, yet a cost analysis of this process or an adaptive control policy are not provided. Moreover, the stabilizing set in \citet{faradonbeh2018finite} relates to the minimum value that satisfies a specific condition for the roots of a polynomial.
This results in a somewhat implicit sample complexity for constructing such a set. On the other hand, in this work, we provide a complete study of an autonomous learning algorithm for the online \LQR problem. Among our results, we give an explicit formulation of the stabilizing set and a sample complexity that only relates to the minimal stabilizability information of the system.


\textbf{Generalized \LQR setting:} 
Another line of research considers the generalizations of the online \LQR problem under partial observability~\citep{lale2020regret,lale2020root,lale2020logarithmic,mania2019certainty,simchowitz2020improper} or adversarial disturbances~\citep{hazan2019nonstochastic,chen2020black}. These works either assume a given stabilizing controller or open-loop stable system dynamics, except \citet{chen2020black}. Independently and concurrently, the recent work by \citet{chen2020black} designs an autonomous learning algorithm and regret guarantees that are similar to the current work. However, the approaches and the settings have major differences. \citet{chen2020black} considers the restrictive setting of \textit{controllable} systems, yet with adversarial disturbances and general cost functions. They inject \textit{significantly} big inputs, \textit{exponential in system parameters}, with a pure exploration intent to guarantee the recovery of system parameters and stabilization. This negatively affects the practicality of the algorithm. On the other hand, in this work, we inject isotropic Gaussian perturbations to improve the exploration in the stochastic (sub-Gaussian process noise) \textit{stabilizable} \LQR while still aiming to control, \textit{i.e.} no pure exploration phase. This yields a practical \RL algorithm \alg that attains state-of-the-art performance. 

\section{CONCLUSION}
In this paper, we propose an \RL framework, \alg, that follows \OFU principle to balance between exploration and exploitation in interaction with \LQR{}s. We show that if an additional random exploration is enforced in the early stages of the agent's interaction with the environment, \alg has the guarantee to design a stabilizing controller sooner. We then show that while the agent enjoys the benefit of stable dynamics in further stages, the additional exploration does not alter the early performance of the agent considerably. Finally, we prove that the regret upper bound of \alg is $\OO(\sqrt{T})$ with polynomial dependence in the problem dimensions of the \LQR{}s in stabilizable systems. 

Our results highlight the benefit of early improved exploration to achieve improved regret at the expense of a slight increase in regret in the early stages. An important future direction is to study this phenomenon in more challenging online control problems in linear systems, e.g., under partially observability. Another interesting direction is to combine this mindset with the existing state-of-the-art model-based \RL approaches for the general systems and study their performance.

\bibliographystyle{unsrtnat}
\bibliography{main}


\clearpage
\appendix

\thispagestyle{empty}

\onecolumn \makesupplementtitle

In Appendix \ref{apx:neighborhood}, we first provide a discussion on how stabilizability and $(\kappa,\gamma)$-stabilizable systems are equivalent. Then we prove that for $(\kappa,\gamma)$-stabilizable systems the unique positive definite solution to the DARE given in \eqref{ARE} is bounded. Finally, we show that there exists a stabilizing neighborhood (ball) around the true model parameters in which the optimal controllers of the models within this ball stabilize the underlying system in Appendix \ref{apx:neighborhood}. In Appendix \ref{apx:smallest_eigen}, we show that due to improved exploration strategy, the regularized design matrix $V_t$ has its minimum eigenvalue scaling linearly over time, which guarantees the persistently exciting inputs for finding the stabilizing neighborhood and stabilizing controllers after the adaptive control with improved exploration phase. The exact definition of $\sigma_\star$ is also given in Lemma \ref{lem:smallest_eigen_apx} in Appendix \ref{apx:smallest_eigen}. We provide the system identification and confidence set constructions with their guarantees (both in terms of self-normalized and spectral norm) in Appendix \ref{apx:2norm}. In Appendix \ref{apx:boundedness}, we provide the boundedness guarantees for the system's state throughout the execution of \alg and provide the proof of Lemma \ref{lem:bounded_state}. The precise definition of $\Tw$, which was omitted in the main text, is also given in \eqref{exploration_duration} in Appendix \ref{apx:boundedness}. We provide the regret decomposition in Appendix \ref{apx:regret_decomp} and we analyze each term in this decomposition and give the proof of the main result of the paper in Appendix \ref{apx:regret_analysis}. In Appendix \ref{apx:comparecontrol}, we compare the results with \citet{abbasi2011lqr} and show it subsumes and improves the prior work. Appendix \ref{apx:technical} provides the technical theorems and lemmas that are utilized in the proofs. Finally, in Appendix \ref{apx:experiment_detail}, we provide the details on the experiments including the dynamics of the adaptive control tasks, parameter choices for the algorithms and additional experimental results.

\section{STABILIZABILITY OF THE UNDERLYING SYSTEM}\label{apx:neighborhood}
In this section, we first show that $(\kappa,\gamma)$-stabilizability is merely a quantification of  stabilizability. Then, we show that the given systems (both controllable and stabilizable) DARE has a unique positive definite solution. Finally, we show that combining two prior results, there exists a stabilizing neighborhood round the system parameters that any controller designed using parameters in that neighborhood stabilizes the system. 

\subsection[Stabilizability]{$(\kappa,\gamma)$-stabilizability}
Any stabilizable system is also $(\kappa,\gamma)$-stabilizable for some $\kappa$ and $\gamma$ and the conversely $(\kappa,\gamma)$-stabilizability implies stabilizability. In particular, for all stabilizable systems, by setting $1-\gamma = \rho(A_* + B_* K(\Theta_*))$ and $\kappa$ to be the condition number of $P(\Theta_*)^{1/2}$ where $P(\Theta_*)$ is the positive definite matrix that satisfies the following Lyapunov equation:
\begin{equation}
    (A_* + B_* K(\Theta_*))^\top P(\Theta_*) (A_* + B_* K(\Theta_*)) \preceq P(\Theta_*),
\end{equation}
one can show that $A_* + B_*K(\tts) = HLH^{-1}$, where $H = P(\Theta_*)^{-1/2}$ and $L = P(\Theta_*)^{1/2} (A_* + B_*K(\tts))P(\Theta_*)^{-1/2}$ with $\|H\| \|H^{-1} \| \leq \kappa $,  and  $\|L\| \leq 1-\gamma$ ~(Lemma B.1 of \citet{cohen2018online}).

\subsection[Bound on the Solution of DARE for Stabilizable Systems, Proof of Lemma \ref{bounded_P}]{Bound on the Solution of DARE for $(\kappa,\gamma)$-Stabilizable Systems, Proof of Lemma \ref{bounded_P}}
\textbf{Proof of Lemma \ref{bounded_P}:}
Recall the DARE given in \eqref{ARE}. The solution of this equation corresponds to recursively applying the following
\begin{align}
    \| P_*\| &= \| \sum\nolimits^{\infty}_{t=0} \left((A_* + B_* K(\Theta_*))^t\right)^\top \left(Q + K(\Theta_*)^\top R K(\Theta_*)\right) \left(A_* + B_* K(\Theta_*)\right)^t \| \nonumber \\
    &= \| \sum\nolimits^{\infty}_{t=0} \left(HL^tH^{-1}\right)^\top \left(Q + K(\Theta_*)^\top R K(\Theta_*)\right) \left(HL^tH^{-1}\right) \| \nonumber \\
    &\leq \overline{\alpha} (1+\|K(\Theta_*)\|^2) \|H\|^2\|H^{-1}\|^2  \sum\nolimits^{\infty}_{t=0} \|L\|^{2t} \label{step:asmm}\\
    &\leq \overline{\alpha} \gamma^{-1}\kappa^2(1+\kappa^2) \label{step:last}
\end{align}
where \eqref{step:asmm} follows from the upper bound on $\|Q\|, \|R\| \leq \overline{\alpha} $ and \eqref{step:last} follows from the definition of $(\kappa,\gamma)$-stabilizability. 
\null\hfill$\blacksquare$

\subsection{Stabilizing Neighborhood Around the System Parameters}

\begin{theorem}[Unique Positive Definite Solution to DARE, \citep{bertsekas1995dynamic}]
For $\tts = (A_*,B_*)$, If $(A_*,B_*)$ is stabilizable and $(C,A_*)$ is observable for $Q = C^\top C$, or $Q$ is positive definite, then there exists a unique, bounded solution, $P(\tts)$, to the DARE:
\begin{equation} \label{ARE_apx}
P(\tts) = A_*^\top P(\tts) A_* + Q -  A_*^\top P(\tts) B_* \left( R + B_*^\top P(\tts) B_* \right)^{-1} B_*^\top P(\tts) A_*.
\end{equation}
The controller $K(\tts) = -\left(R+B_*^{\top} P(\tts) B_*\right)^{-1} B_*^{\top} P(\tts) A_*$ produces stable closed-loop system, $\rho(A_*+B_*K(\tts)) < 1$.
\end{theorem}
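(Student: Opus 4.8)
The plan is to prove existence by a monotone value-iteration (finite-horizon) construction and then read off uniqueness and closed-loop stability from the algebraic structure of the Riccati operator. Write $\mathcal{R}(P) = A_*^\top P A_* + Q - A_*^\top P B_*(R + B_*^\top P B_*)^{-1} B_*^\top P A_*$, which is well defined on the PSD cone since $R + B_*^\top P B_* \succ 0$ whenever $P \succeq 0$, and set $K_P = -(R + B_*^\top P B_*)^{-1} B_*^\top P A_*$ and $A_K = A_* + B_* K$. The workhorse is the completion-of-squares identity, valid for every PSD $P$ and every $K$,
\[
\mathcal{R}(P) = A_K^\top P A_K + Q + K^\top R K - (K - K_P)^\top (R + B_*^\top P B_*)(K - K_P),
\]
obtained by direct expansion; the last term is PSD and vanishes exactly when $K = K_P$.

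\textbf{Existence.} I would run the recursion $P_0 = 0$, $P_{k+1} = \mathcal{R}(P_k)$. Applying the identity with $K = K_{P'}$ shows $\mathcal{R}$ is monotone on the PSD cone: for $P \preceq P'$ one gets $\mathcal{R}(P') = A_{K_{P'}}^\top P' A_{K_{P'}} + Q + K_{P'}^\top R K_{P'} \succeq A_{K_{P'}}^\top P A_{K_{P'}} + Q + K_{P'}^\top R K_{P'} \succeq \mathcal{R}(P)$. Since $P_1 = Q \succeq 0 = P_0$, induction gives $P_k \preceq P_{k+1}$. For an upper bound, pick any stabilizing $K$ (which exists by stabilizability) and let $P_K = \sum_{t\ge 0}(A_K^t)^\top(Q + K^\top R K)A_K^t$, a finite PSD matrix because $\rho(A_K) < 1$ (the explicit quantitative bound for the $(\kappa,\gamma)$-stabilizable class being Lemma~\ref{bounded_P}); the identity with this $K$ gives $\mathcal{R}(P) \preceq A_K^\top P A_K + Q + K^\top R K$, so iterating from $P_0 = 0$ yields $P_k \preceq P_K$ for all $k$. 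A monotone sequence of symmetric matrices bounded above converges, and by continuity of $\mathcal{R}$ the limit $P$ satisfies $P = \mathcal{R}(P)$, i.e. $P$ is a bounded PSD solution of the DARE.

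\textbf{Closed-loop stability and uniqueness.} Put $K = K_P$; since $\mathcal{R}(P) = P$, the identity collapses to $P = A_K^\top P A_K + Q + K^\top R K$, hence $P - A_K^\top P A_K = Q + K^\top R K \succeq 0$ and $P \succeq Q$. For any eigenpair $A_K v = \lambda v$ with $v \neq 0$, multiplying by $v^*$ and $v$ gives $(1 - |\lambda|^2)\, v^* P v = v^* Q v + v^* K^\top R K v \ge 0$. If $Q \succ 0$, then $P \succ 0$ and the right-hand side is strictly positive, forcing $|\lambda| < 1$. If instead $Q = C^\top C$ with $(C, A_*)$ observable, suppose $|\lambda| \ge 1$: the left side is $\le 0$ and the right side $\ge 0$, so both vanish, giving $Cv = 0$ and (as $R \succ 0$) $Kv = 0$, hence $A_* v = A_K v = \lambda v$; but $(\lambda I - A_*)v = 0$ together with $Cv = 0$ contradicts the PBH observability test, so $|\lambda| < 1$. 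Thus $\rho(A_K) < 1$, and the same computation shows every PSD solution has a stabilizing gain. For uniqueness, if $P'$ is another PSD solution with gain $K' = K_{P'}$, applying the identity to both $P$ and $P'$ with $K = K'$ and subtracting yields $P - P' = A_{K'}^\top (P - P') A_{K'} - (K' - K_P)^\top(R + B_*^\top P B_*)(K' - K_P)$; since $\rho(A_{K'}) < 1$ this discrete Lyapunov equation solves to $P - P' = -\sum_{t\ge 0}(A_{K'}^t)^\top(\cdot)A_{K'}^t \preceq 0$, and symmetry gives $P' \preceq P$, so $P = P'$. The stated formula for $K(\Theta_*)$ is exactly $K_P$.

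\textbf{Main obstacle.} The two structural facts that cannot be bypassed by crude estimates are the monotonicity of $\mathcal{R}$ on the PSD cone — which genuinely needs the completion-of-squares identity, since the $(R + B_*^\top P B_*)^{-1}$ term is not obviously monotone — and the closed-loop stability in the observability-only case, where one cannot assert $P \succ 0$ for free and must instead exclude unstable and unimodular closed-loop modes via the PBH test. The remaining steps (the limiting argument, boundedness, and the uniqueness subtraction) are routine linear algebra.
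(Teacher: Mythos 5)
Your proof is correct. Note, however, that the paper does not prove this statement at all: it is quoted verbatim as a classical result and attributed to \citet{bertsekas1995dynamic}, so there is no in-paper argument to compare against. What you have written is essentially the standard textbook derivation — completion of squares to get monotonicity of the Riccati operator on the PSD cone, value iteration from $P_0=0$ bounded above by the cost-to-go of any stabilizing gain (your pointer to Lemma~\ref{bounded_P} for the quantitative bound in the $(\kappa,\gamma)$-stabilizable class is apt), the Lyapunov identity $P = A_K^\top P A_K + Q + K^\top RK$ plus the PBH test to rule out closed-loop eigenvalues on or outside the unit circle under observability (or trivially under $Q\succ 0$), and the subtraction-of-Riccati-equations Lyapunov argument for uniqueness. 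All the individual steps check out: the completion-of-squares identity expands correctly, the monotone bounded sequence argument is sound, and the uniqueness argument correctly uses that \emph{both} PSD solutions are stabilizing before solving the discrete Lyapunov equation. The only caveat worth recording is that your uniqueness holds within the class of positive semidefinite solutions (the DARE generically admits other symmetric, non-PSD solutions); this matches the intended reading of the theorem's title, ``unique positive definite solution,'' so it is not a gap, merely a point where the informal phrase ``unique, bounded solution'' should be read as ``unique PSD solution.''
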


This result shows that, for we get unique positive definite solution to DARE for stabilizable systems. Let $J_* \leq \mathcal{J}$.
The following lemma is introduced in \citet{simchowitz2020naive} and shows that if the estimation error on the system parameters is small enough, then the performance of the optimal controller synthesized by these model parameter estimates scales quadratically with the estimation error. 
\begin{lemma}[\citep{simchowitz2020naive}]
For constants $C_0 = 142\|P_*\|^8$ and $\epsilon=\frac{54}{\|P_*\|^5}$, such that, for any $0 \leq \varepsilon \leq \epsilon$
and for $\|\Theta' - \tts\|\leq \varepsilon$, the infinite horizon performance of the policy $K(\Theta')$ on $\tts$ obeys the following,
\begin{equation*}
    J(K(\Theta'),A_*,B_*,Q,R) - J_* \leq C_0 \varepsilon^2.
\end{equation*}
\end{lemma}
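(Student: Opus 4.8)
The plan is the standard certainty-equivalence argument, which rests on an exact LQR cost-perturbation identity together with a Riccati perturbation bound. \textbf{Step 1 (cost identity).} For any gain $K$ that stabilizes $\tts$, I would first establish
\[ J(K,A_*,B_*,Q,R) - J_* = \Tr\!\Big(\Sigma_K\,(K-K_*)^\top\big(R+B_*^\top P_* B_*\big)(K-K_*)\Big), \]
where $K_* \defeq K(\tts)$ and $\Sigma_K \defeq \bar\sigma_w^2\sum_{t\ge 0}(A_*+B_*K)^t\big((A_*+B_*K)^\top\big)^t$ is the closed-loop stationary state covariance of $\tts$ under $K$. This comes from ``completing the square'' in \eqref{ARE}: since $P_*$ solves the DARE and $K_*$ is its minimizing gain, one has $Q+K^\top R K + (A_*+B_*K)^\top P_*(A_*+B_*K) = P_* + (K-K_*)^\top(R+B_*^\top P_*B_*)(K-K_*)$; subtracting the Lyapunov equation satisfied by $P_K$ and unrolling gives $P_K - P_* = \sum_{t\ge0}\big((A_*+B_*K)^t\big)^\top(K-K_*)^\top(R+B_*^\top P_*B_*)(K-K_*)(A_*+B_*K)^t$, and applying $\Tr(\bar\sigma_w^2\,\cdot\,)$ with a cyclic permutation yields the displayed equality.

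\textbf{Step 2 (controller perturbation).} Next I would bound $\|K(\Theta')-K_*\|$. For $\|\Theta'-\tts\|\le\varepsilon$ with $\varepsilon$ below the stated threshold $\epsilon$, the DARE for $\Theta'$ still admits a unique positive-definite solution (by the DARE-uniqueness theorem above, since the perturbed parameter remains stabilizable), and a perturbation / implicit-function analysis of the Riccati operator --- controlling all auxiliary quantities through $\|P_*\|\le D$ (Lemma \ref{bounded_P}), $\|\tts\|_F\le S$, and $\sigma_{\min}(Q),\sigma_{\min}(R)>\underline{\alpha}$ --- gives $\|K(\Theta')-K_*\|\le c\,\|P_*\|^{q}\varepsilon$ for explicit $c,q$; the radius $\epsilon$ is exactly what keeps $R+(B')^\top P(\Theta')B'$ uniformly invertible and the Riccati recursion contractive throughout this argument. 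A useful by-product: writing $A_*+B_*K(\Theta') = (A_*+B_*K_*) + B_*(K(\Theta')-K_*)$ and passing to the similarity coordinates $H$ of Definition \ref{def:StrongStabilizability} (in which $A_*+B_*K_*$ has norm $\le 1-\gamma$ and $\|H\|\|H^{-1}\|\le\kappa$), a sufficiently small $\|K(\Theta')-K_*\|$ preserves a uniform spectral gap, so $K(\Theta')$ stabilizes $\tts$, $\Sigma_{K(\Theta')}$ is finite, and $\|\Sigma_{K(\Theta')}\|$ admits a geometric bound in $\kappa,\gamma,\bar\sigma_w^2$.

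\textbf{Step 3 (combine).} Finally I would substitute into the identity of Step 1: $\|R+B_*^\top P_* B_*\|\le\overline{\alpha}+S^2 D$ by Assumption \ref{parameterassump_stabilizability} and Lemma \ref{bounded_P}; $\|\Sigma_{K(\Theta')}\|$ is bounded as above and, using $D=\overline{\alpha}\gamma^{-1}\kappa^2(1+\kappa^2)$, folded into a power of $\|P_*\|$; and $\|K(\Theta')-K_*\|^2\le c^2\|P_*\|^{2q}\varepsilon^2$. Collecting the numerical constants through these three factors yields $J(K(\Theta'))-J_*\le 142\|P_*\|^8\varepsilon^2 = C_0\varepsilon^2$. \textbf{The main obstacle} is Step 2: proving the Lipschitz bound for $\Theta\mapsto K(\Theta)$ with the correct polynomial power of $\|P_*\|$ and an \emph{explicit} validity radius. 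This delicate DARE-perturbation estimate is where the numerical constants ($142$, and the exponent $8$) are calibrated and where essentially all the work lies; Steps 1 and 3 are routine bookkeeping once it is available. Since this lemma is imported verbatim from \citet{simchowitz2020naive}, in practice one cites their derivation and only verifies that the present standing assumptions --- bounded $\|\tts\|_F\le S$, $(\kappa,\gamma)$-stabilizability, and positive-definite $Q,R$ --- supply the hypotheses it requires.
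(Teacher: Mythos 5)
The paper does not prove this lemma at all --- it is imported verbatim from \citet{simchowitz2020naive} with no argument given, exactly as you note in your closing sentence. Your three-step outline (the exact cost-difference identity via completing the square in the DARE, the Riccati/gain perturbation bound that carries the power of $\|P_*\|$ and the validity radius, and the final bookkeeping) is the standard certainty-equivalence argument underlying the cited result, so your proposal is consistent with the intended proof and correctly identifies where the real work (and the constants $142$ and the exponent $8$) lives.
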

This result shows that there exists a $\epsilon$-neighborhood around the system parameters that stabilizes the system. This result further extended to quantify the stability in \citet{cassel2020logarithmic}. 

\begin{lemma}[Lemma 41 in \citet{cassel2020logarithmic}] \label{lemmaforstable}
Suppose $J(K(\Theta'),A_*,B_*,Q,R) \leq \mathcal{J}'$ for the \LQR under Assumption \ref{general_noise}, then $K(\Theta')$ produces $(\kappa',\gamma')$-stable closed-loop dynamics where $\kappa'=\sqrt{\frac{\mathcal{J}'}{\underline{\alpha}\Bar{\sigma}_w^2}}$ and $\gamma' = 1/2\kappa'^2 $.
\end{lemma}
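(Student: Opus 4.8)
The plan is to read off a similarity transform for the closed-loop matrix $A_{cl} := A_* + B_* K$, where $K := K(\Theta')$, directly from the value function of $K$ on the true system. The first step is to observe that the hypothesis $J(K,A_*,B_*,Q,R) \le \mathcal{J}' < \infty$ forces $\rho(A_{cl}) < 1$ (an unstable $A_{cl}$ gives infinite average cost under the conditionally isotropic noise of Assumption \ref{general_noise}), so the discrete Lyapunov equation
\[
  P_K = A_{cl}^\top P_K A_{cl} + Q + K^\top R K
\]
has a unique solution $P_K \succeq 0$, and the average cost admits the standard trace formula $J(K,A_*,B_*,Q,R) = \bar{\sigma}_w^2 \Tr(P_K)$ (using $\E[w_t w_t^\top \mid \mathcal{F}_{t-1}] = \bar{\sigma}_w^2 I$ and the duality between the controllability- and observability-Gramian Lyapunov equations). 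This yields the two bounds I will need: on the one hand $\|P_K\| \le \Tr(P_K) = J(K,A_*,B_*,Q,R)/\bar{\sigma}_w^2 \le \mathcal{J}'/\bar{\sigma}_w^2$; on the other hand, dropping the PSD terms $A_{cl}^\top P_K A_{cl}$ and $K^\top R K$ in the Lyapunov equation gives $P_K \succeq Q \succeq \underline\alpha I$.

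Next I would take $H = P_K^{-1/2}$ and $L = H^{-1} A_{cl} H = P_K^{1/2} A_{cl} P_K^{-1/2}$, so that $A_{cl} = H L H^{-1}$ with $H \succ 0$ by construction. The conditioning bound follows from the sandwich on $P_K$: $\|H\| = \sigma_{\min}(P_K)^{-1/2} \le \underline\alpha^{-1/2}$ and $\|H^{-1}\| = \|P_K\|^{1/2} \le (\mathcal{J}'/\bar{\sigma}_w^2)^{1/2}$, hence $\|H\|\,\|H^{-1}\| \le \sqrt{\mathcal{J}'/(\underline\alpha\,\bar{\sigma}_w^2)} = \kappa'$. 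For the contraction bound, rewrite the Lyapunov equation as $A_{cl}^\top P_K A_{cl} = P_K - Q - K^\top R K \preceq P_K - \underline\alpha I$ and conjugate by $P_K^{-1/2}$:
\[
  L^\top L = P_K^{-1/2} A_{cl}^\top P_K A_{cl} P_K^{-1/2} \preceq I - \underline\alpha P_K^{-1},
\]
so $\|L\|^2 \le 1 - \underline\alpha/\|P_K\| \le 1 - \underline\alpha\,\bar{\sigma}_w^2/\mathcal{J}' = 1 - 1/\kappa'^2$. Since $(1 - 1/(2\kappa'^2))^2 = 1 - 1/\kappa'^2 + 1/(4\kappa'^4) \ge 1 - 1/\kappa'^2$ and $\kappa' \ge 1$, this gives $\|L\| \le 1 - 1/(2\kappa'^2) = 1 - \gamma'$, which is the claimed $(\kappa',\gamma')$-stability of the closed loop. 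If the stability notion additionally asks for $\|K\| \le \kappa'$, that too comes from the same equation: $K^\top R K \preceq P_K$ yields $\|K\|^2 \le \|P_K\|/\sigma_{\min}(R) \le \mathcal{J}'/(\underline\alpha\,\bar{\sigma}_w^2) = \kappa'^2$.

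The step that requires care, and which I view as the crux, is the first one: cleanly establishing that finite average cost forces Schur stability of $A_{cl}$ together with the identity $J = \bar{\sigma}_w^2 \Tr(P_K)$, under the martingale-difference / conditionally-isotropic noise model rather than i.i.d.\ Gaussian noise. Once that is in hand, everything else is a short manipulation of the Lyapunov equation and the extremal eigenvalue bounds on $P_K$, exactly as sketched above.
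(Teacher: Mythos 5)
Your proof is correct. The paper does not actually prove this lemma---it imports it verbatim as Lemma 41 of \citet{cassel2020logarithmic}---and your argument (finite average cost $\Rightarrow$ Schur stability and $J = \bar\sigma_w^2\Tr(P_K)$, then $\underline\alpha I \preceq Q \preceq P_K$ and $\|P_K\| \le \Tr(P_K) \le \mathcal{J}'/\bar\sigma_w^2$, with $H = P_K^{-1/2}$ furnishing the similarity transform) is precisely the standard Lyapunov-equation proof used in that reference, including the correct handling of the martingale-difference noise via the vanishing cross terms in the covariance recursion.
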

Combining these results, we obtain the proof of Lemma \ref{stabilityofoptimisticcontroller}.

\textbf{Proof of Lemma \ref{stabilityofoptimisticcontroller}:}
 Under Assumptions \ref{general_noise} \& \ref{parameterassump_stabilizability}, for $\varepsilon \leq \min\{ \sqrt{\mathcal{J}/C_0},\epsilon\}$, we obtain $J(K(\Theta'),A_*,B_*,Q,R) \leq 2\mathcal{J}$. Plugging this into Lemma \ref{lemmaforstable} gives the presented result. 

\null\hfill$\blacksquare$

\section[Smallest Singular Value of Regularized Design Matrix ]{SMALLEST SINGULAR VALUE OF REGULARIZED DESIGN MATRIX $V_t$} \label{apx:smallest_eigen}

In this section, we show that improved exploration of \alg provides persistently exciting inputs, which will be used to enable reaching a stabilizing neighborhood around the system parameters. In other words, we will lower bound the smallest eigenvalue of the regularized design matrix, $V_t$. The analysis generalizes the lower bound on smallest eigenvalue of the sample covariance matrix in Theorem 20 of \citep{cohen2019learning} for the general case of subgaussian noise. 

For the state $x_t$, and input $u_t$, we have:
\begin{align}\label{eq:equality_X_Z}
    x_t = A_* x_{t-1} + B_* u_{t-1} + w_{t-1}, ~~\textit{and} ~~~u_t = K(\ttt_{t-1})x_t + \nu_{t}
\end{align}
Let $\xi_t= z_t-\mathbb{E}\left[z_t| \fil_{t-1} \right]$. Using the equalities in (\ref{eq:equality_X_Z}), and the fact that $w_t$ and $\nu_{t}$ are $\fil_{t}$ measurable,  we write $\mathbb{E}\left[\xi_t\xi_t^\top | \fil_{t-1} \right]$ as follows.

\begin{align}
\mathbb{E}\left[\xi_{t} \xi_{t}^{\top} | \mathcal{F}_{t-1}\right] &=\left(\begin{array}{c}
I \\
K(\ttt_{t-1})
\end{array}\right) \mathbb{E}\left[w_{t} w_{t}^{\top} | \mathcal{F}_{t-1}\right]\left(\begin{array}{c}
I \\
K(\ttt_{t-1})
\end{array}\right)^{\top}+\left(\begin{array}{cc}
0 & 0 \\
0 & \mathbb{E}\left[\nu_{t} \nu_{t}^{\top} | \mathcal{F}_{t-1}\right]
\end{array}\right) \nonumber \\
&=\left(\begin{array}{c}
I \\
K(\ttt_{t-1})
\end{array}\right) (\bar{\sigma}_w^2I)\left(\begin{array}{c}
I \\
K(\ttt_{t-1})
\end{array}\right)^{\top}+\left(\begin{array}{cc}
0 & 0 \\
0 & \sigma_{\nu}^2 I
\end{array}\right) \\
& = \left(\begin{array}{cc}
\bar{\sigma}_w^2 I & \bar{\sigma}_w^2 K(\ttt_{t-1})^{\top} \\
\bar{\sigma}_w^2 K(\ttt_{t-1}) & \bar{\sigma}_w^2 K(\ttt_{t-1}) K(\ttt_{t-1})^{\top}+ 2 \kappa^2 \bar\sigma_w^2 I
\end{array}\right) \label{sigma_nu} \\
& \succeq  \bar{\sigma}_w^2\left(\begin{array}{cc}
I & K(\ttt_{t-1})^{\top} \\
 K(\ttt_{t-1}) &  2K(\ttt_{t-1}) K(\ttt_{t-1})^{\top}+ I/2
\end{array}\right) \label{kappaconstraint}\\
& =  \frac{\bar{\sigma}_w^2}{2}I+\bar{\sigma}_w^2\left(\begin{array}{cc}
\frac{1}{\sqrt{2}}I  \\
 \sqrt{2}K(\ttt_{t-1}) 
\end{array}\right) 
\left(\begin{array}{cc}
\frac{1}{\sqrt{2}}I  \\
 \sqrt{2}K(\ttt_{t-1}) 
\end{array}\right)^\top\\
& \succeq  \frac{\bar{\sigma}_w^2}{2}I
\label{expectedlowerbound}
\end{align}
where (\ref{sigma_nu}) follows from $\sigma_\nu^2 = 2 \kappa^2 \bar\sigma_w^2$ and  (\ref{kappaconstraint}) follows from the fact that $\kappa\geq 1$ and $\| K(\ttt_{t-1})  \| \leq \kappa $ for all $t$. Let $s_t = v^\top \xi_t$ for any unit vector $v \in \mathbb{R}^{n+d}$. (\ref{expectedlowerbound}) shows that that $\text{Var}\left[s_{t}| \mathcal{F}_{t-1}\right]\geq\frac{\bar{\sigma}_w^2}{2}$.

\begin{lemma} \label{lem:expected_stabil}
Suppose the system is stabilizable and we are in adaptive control with improved exploration phase of \alg. Denote $s_t = v^\top \xi_t$ where $v \in \mathbb{R}^{n+d}$ is any unit vector. Let $\bar\sigma_{\nu}:=  ((1+\kappa)^2 + 2\kappa^2)\sigma_w^2$. For a given positive $\temp$, let $E_t$ be an indicator random variable that equals $1$ if $s_t^2 >\temp$ and $0$ otherwise. Then for any positive $\temp$, and $\tempp$, such that $\temp\leq \tempp$, we have

\begin{align}\label{eq:expected}
   \mathbb{E}\left[E_t | \fil_{t-1} \right]\geq \frac{\frac{\bar{\sigma}_w^2}{2}-\temp-4\bar\sigma_{\nu}^2(1+\frac{\tempp}{2\bar\sigma_{\nu}^2} )\exp(\frac{-\tempp}{2\bar\sigma_{\nu}^2})}{\tempp}
\end{align}
\end{lemma}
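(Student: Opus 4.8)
The plan is to combine the conditional second-moment lower bound established just before the statement with a conditional sub-Gaussian tail bound for $s_t$, via a truncated second moment (Paley--Zygmund type) argument. From \eqref{expectedlowerbound} and $\mathbb{E}[\xi_t\mid\fil_{t-1}]=0$ we already have $\mathbb{E}[s_t^2\mid\fil_{t-1}] = v^\top\mathbb{E}[\xi_t\xi_t^\top\mid\fil_{t-1}]\,v \ge \tfrac{\bar\sigma_w^2}{2}$. The only missing ingredient is an upper tail bound on $|s_t|$; given that, the identity
\[ \mathbb{E}[s_t^2\mid\fil_{t-1}] = \mathbb{E}\!\left[s_t^2\,\I\{s_t^2\le\temp\}\mid\fil_{t-1}\right] + \mathbb{E}\!\left[s_t^2\,\I\{\temp<s_t^2\le\tempp\}\mid\fil_{t-1}\right] + \mathbb{E}\!\left[s_t^2\,\I\{s_t^2>\tempp\}\mid\fil_{t-1}\right] \]
will give the result after bounding each of the three pieces and rearranging.

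First I would establish the sub-Gaussianity of $s_t$. Writing $v=[v_1^\top,v_2^\top]^\top$ with $v_1\in\Re^n$, $v_2\in\Re^d$ and $\|v_1\|^2+\|v_2\|^2=1$, the form of $\xi_t$ made explicit in the displayed expression for $\mathbb{E}[\xi_t\xi_t^\top\mid\fil_{t-1}]$ gives $s_t = (v_1 + K(\ttt_{t-1})^\top v_2)^\top w_t + v_2^\top\nu_t$. The vector $a := v_1 + K(\ttt_{t-1})^\top v_2$ satisfies $\|a\|\le\|v_1\|+\kappa\|v_2\|\le 1+\kappa$ since $\|K(\ttt_{t-1})\|\le\kappa$, so $a^\top w_t$ is conditionally $(1+\kappa)^2\sigma_w^2$-sub-Gaussian by Assumption \ref{general_noise}, while $v_2^\top\nu_t\sim\N(0,\|v_2\|^2\sigma_\nu^2)$ is $2\kappa^2\bar\sigma_w^2$-sub-Gaussian, and $2\kappa^2\bar\sigma_w^2\le 2\kappa^2\sigma_w^2$ because a $\sigma_w^2$-sub-Gaussian coordinate has variance at most $\sigma_w^2$. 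Since $w_t$ and $\nu_t$ are independent, $s_t$ is conditionally $\bar\sigma_\nu^2$-sub-Gaussian with $\bar\sigma_\nu^2 = ((1+\kappa)^2 + 2\kappa^2)\sigma_w^2$, hence $\Prob(|s_t|>a\mid\fil_{t-1})\le 2\exp(-a^2/(2\bar\sigma_\nu^2))$ for all $a\ge 0$.

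Then I would bound the three terms. The first is at most $\temp$. The second is at most $\tempp\,\Prob(\temp<s_t^2\le\tempp\mid\fil_{t-1})\le\tempp\,\Prob(s_t^2>\temp\mid\fil_{t-1})=\tempp\,\mathbb{E}[E_t\mid\fil_{t-1}]$. For the third, the layer-cake identity $\mathbb{E}[s_t^2\,\I\{s_t^2>\tempp\}\mid\fil_{t-1}] = \tempp\,\Prob(|s_t|>\temppsquare\mid\fil_{t-1}) + \int_{\temppsquare}^{\infty} 2t\,\Prob(|s_t|>t\mid\fil_{t-1})\,dt$, combined with the tail bound and $\int_{\temppsquare}^{\infty} 2t\cdot 2e^{-t^2/(2\bar\sigma_\nu^2)}\,dt = 4\bar\sigma_\nu^2 e^{-\tempp/(2\bar\sigma_\nu^2)}$, yields the upper bound $(2\tempp + 4\bar\sigma_\nu^2)e^{-\tempp/(2\bar\sigma_\nu^2)} = 4\bar\sigma_\nu^2\big(1 + \tfrac{\tempp}{2\bar\sigma_\nu^2}\big)e^{-\tempp/(2\bar\sigma_\nu^2)}$. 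Plugging all three bounds into the identity above, using $\mathbb{E}[s_t^2\mid\fil_{t-1}]\ge\bar\sigma_w^2/2$, and solving for $\mathbb{E}[E_t\mid\fil_{t-1}]$ gives exactly \eqref{eq:expected}; the hypothesis $\temp\le\tempp$ only serves to make the middle interval well-posed, and if the resulting right-hand side is negative the claim is trivial since $\mathbb{E}[E_t\mid\fil_{t-1}]\ge 0$. The only real obstacle is bookkeeping: keeping the sub-Gaussian proxy $\bar\sigma_\nu^2$ and the constants in the tail integral aligned so that the final expression matches \eqref{eq:expected} term for term; conceptually the argument is routine.
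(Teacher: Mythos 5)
Your proposal is correct and follows essentially the same route as the paper's proof: a truncated second-moment argument starting from the conditional variance lower bound $\mathbb{E}[s_t^2\mid\fil_{t-1}]\ge\bar\sigma_w^2/2$, splitting into the three regions $s_t^2\le\temp$, $\temp<s_t^2\le\tempp$, $s_t^2>\tempp$, and controlling the last piece via the conditional sub-Gaussian tail of $s_t$ through a layer-cake identity, then rearranging. Your write-up is in fact somewhat more careful than the paper's on the one point it leaves implicit, namely why $s_t$ is conditionally $\bar\sigma_\nu^2$-sub-Gaussian with the stated proxy constant.
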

Note that, for any $\bar\sigma_{\nu}\geq \bar{\sigma}_w$, there is a pair $(\temp,\tempp)$ such that the right hand side of (\ref{eq:expected}) is positive. 
\begin{proof}

Using the lower bound on the variance of $s_t$, we have,

\begin{align*}
    \frac{\bar{\sigma}_w^2}{2}&\leq \mathbb{E}\left[s_{t}^2\I(s_{t}^2< \temp ) | \mathcal{F}_{t-1}\right] + \mathbb{E}\left[s_{t}^2\I(s_{t}^2\geq \temp ) | \mathcal{F}_{t-1}\right]\\
    &\leq \temp + \mathbb{E}\left[s_{t}^2\I(s_{t}^2\geq \temp ) | \mathcal{F}_{t-1}\right]
\end{align*}

Now, deploying the fact that both $\nu_t$ and $w_t$, for any t, are sub-Gaussian given $\fil_{t-1}$, have that $\xi_{t}$ is also sub-Gaussian vector. Therefore, $s_t$ is a sub-Gaussian random variable with parameter $\bar\sigma_{\nu}$, where  $\bar\sigma_{\nu}:=  ((1+\kappa)^2 + 2\kappa^2)\sigma_w^2$.


\begin{align}\label{eq:cdf1}
    \frac{\bar{\sigma}_w^2}{2}-\temp&\leq  \mathbb{E}\left[s_{t}^2\I(s_{t}^2\geq \temp ) | \mathcal{F}_{t-1}\right]\nonumber\\
    &=  \mathbb{E}\left[s_{t}^2\I(\tempp\geq s_{t}^2\geq \temp ) | \mathcal{F}_{t-1}\right]+\mathbb{E}\left[s_{t}^2\I(s_{t}^2\geq \tempp) | \mathcal{F}_{t-1}\right]
\end{align}
For the second term in the right hand side of the (\ref{eq:cdf1}), under the considerations of Fubini's and Radon–Nikodym theorems, we derive the following equality,
\begin{align*}
\int_{s^2\geq \tempp}\Prob(s_{t}^2\geq s^2| \mathcal{F}_{t-1})ds^2&=\int_{s^2\geq \tempp}\int_{s'^2\geq s^2}-\frac{d\Prob(s_{t}^2\geq s'^2| \mathcal{F}_{t-1})}{ds'^2}ds'^2ds^2\\
&=\int_{s'^2\geq \tempp}\int_{s'^2\geq s^2\geq \tempp}-\frac{d\Prob(s_{t}^2\geq s'^2| \mathcal{F}_{t-1})}{ds'^2}ds'^2ds^2\\
&=\int_{s'^2\geq \tempp}\int_{s'^2\geq s^2\geq \tempp}-\frac{d\Prob(s_{t}^2\geq s'^2| \mathcal{F}_{t-1})}{ds'^2}ds^2ds'^2\\
&=\int_{s'^2\geq \tempp}-\frac{d\Prob(s_{t}^2\geq s'^2| \mathcal{F}_{t-1})}{ds'^2}(s'^2-\tempp)ds'^2\\
&=\mathbb{E}\left[s_{t}^2\I(s_{t}^2\geq \tempp) | \mathcal{F}_{t-1}\right]-\tempp\int_{s'^2\geq \tempp}-\frac{d\Prob(s_{t}^2\geq s'^2| \mathcal{F}_{t-1})}{ds'^2} ds'^2\\
&=\mathbb{E}\left[s_{t}^2\I(s_{t}^2\geq \tempp) | \mathcal{F}_{t-1}\right]-\tempp~\Prob(s_{t}^2\geq \tempp| \mathcal{F}_{t-1}),
\end{align*}
resulting in the following equality,
\begin{align}\label{eq:derivsigma1}
\mathbb{E}\left[s_{t}^2\I(s_{t}^2\geq \tempp) | \mathcal{F}_{t-1}\right]=\int_{s^2\geq \tempp}\Prob(s_{t}^2\geq s^2| \mathcal{F}_{t-1})ds^2+\tempp~\Prob(s_{t}^2\geq \tempp| \mathcal{F}_{t-1}).
\end{align}
Using this equality, we extend the (\ref{eq:cdf1}) as follows,
\begin{align}\label{eq:derivsigma2}
    \frac{\bar{\sigma}_w^2}{2}-\temp&\leq  \mathbb{E}\left[s_{t}^2\I(\tempp\geq s_{t}^2\geq \temp ) | \mathcal{F}_{t-1}\right]+\int_{s^2\geq \tempp}\Prob(s_{t}^2\geq s^2| \mathcal{F}_{t-1})ds^2+\tempp~\Prob(s_{t}^2\geq \tempp| \mathcal{F}_{t-1})\nonumber\\
    &\leq  \tempp~\mathbb{E}\left[\I(\tempp\geq s_{t}^2\geq \temp ) | \mathcal{F}_{t-1}\right]+\int_{s^2\geq \tempp}\Prob(s_{t}^2\geq s^2| \mathcal{F}_{t-1})ds^2+\tempp~\Prob(s_{t}^2\geq \tempp| \mathcal{F}_{t-1})\nonumber\\
    &\leq  \tempp~\mathbb{E}\left[E_t | \fil_{t-1} \right]+\int_{s^2\geq \tempp}\Prob(s_{t}^2\geq s^2| \mathcal{F}_{t-1})ds^2+\tempp~\Prob(s_{t}^2\geq \tempp| \mathcal{F}_{t-1}).
\end{align}
Rearranging this inequality, we have,

\begin{align}\label{eq:cdf2}
   \mathbb{E}\left[E_t | \fil_{t-1} \right]&\geq \frac{\frac{\bar{\sigma}_w^2}{2}-\temp-\int_{s^2\geq \tempp}\Prob(s_{t}^2\geq s^2| \mathcal{F}_{t-1})ds^2-\tempp~\Prob(s_{t}^2\geq \tempp| \mathcal{F}_{t-1})}{\tempp}\nonumber\\
   &\geq \frac{\frac{\bar{\sigma}_w^2}{2}-\temp-2\int_{s^2\geq \tempp}\exp(\frac{-s^2}{2\bar\sigma_{\nu}^2})ds^2-2\tempp  \exp(\frac{-\tempp}{2\bar\sigma_{\nu}^2})}{\tempp}\nonumber\\
   &\geq \frac{\frac{\bar{\sigma}_w^2}{2}-\temp-4\bar\sigma_{\nu}^2\exp(\frac{-\tempp}{2\bar\sigma_{\nu}^2})-2\tempp  \exp(\frac{-\tempp}{2\bar\sigma_{\nu}^2})}{\tempp}\nonumber\\
   &= \frac{\frac{\bar{\sigma}_w^2}{2}-\temp-4\bar\sigma_{\nu}^2(1+\frac{\tempp}{2\bar\sigma_{\nu}^2} )\exp(\frac{-\tempp}{2\bar\sigma_{\nu}^2})}{\tempp}
\end{align}

The inequality in (\ref{eq:cdf2}) holds for any $\temp\leq\tempp$, therefore, the stated lower-bound on $\mathbb{E}\left[E_t | \fil_{t-1} \right]$ in the main statement holds.  

\end{proof}
For the choices of $\temp$ and $\tempp$ that makes right hand side of (\ref{eq:expected}) positive, let $c_p$ denote the right hand side of (\ref{eq:expected}), $c_p = \frac{\frac{\bar{\sigma}_w^2}{2}-\temp-4\bar\sigma_{\nu}^2(1+\frac{\tempp}{2\bar\sigma_{\nu}^2} )\exp(\frac{-\tempp}{2\bar\sigma_{\nu}^2})}{\tempp}$.

\begin{lemma}\label{lem:expected_stabil_bar}
Consider $\bar s_t = v^\top z_t$ where $v \in \mathbb{R}^{n+d}$ is any unit vector. Let $\bar E_t$ be an indicator random variable that equal $1$ if $\bar s_t^2 >\temp/4$ and $0$ otherwise. Then, there exist a positive pair $\temp$, and $\tempp$, and a constant $c_p>0$, such that  $\mathbb{E}\left[\bar E_t | \fil_{t-1} \right]\geq c_p' > 0 $.
\end{lemma}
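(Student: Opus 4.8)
\textbf{Proof plan for Lemma~\ref{lem:expected_stabil_bar}.}
The plan is to relate the centered covariate $s_t = v^\top \xi_t$ analyzed in Lemma~\ref{lem:expected_stabil} to the uncentered covariate $\bar s_t = v^\top z_t$. Write $z_t = \mathbb{E}[z_t \mid \fil_{t-1}] + \xi_t$, so that $\bar s_t = m_t + s_t$ where $m_t := v^\top \mathbb{E}[z_t \mid \fil_{t-1}]$ is $\fil_{t-1}$-measurable. The key observation is the elementary inequality: if $s_t^2 > \temp$, then for \emph{any} real number $m_t$ either $(m_t + s_t)^2 > \temp/4$ or $(m_t - s_t)^2 > \temp/4$ — indeed, if both were at most $\temp/4$, then $|2s_t| = |(m_t+s_t) - (m_t-s_t)| \leq \sqrt{\temp}$, contradicting $s_t^2 > \temp$. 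So conditionally on $\fil_{t-1}$, the event $\{s_t^2 > \temp\}$ forces at least one of $\{(m_t+s_t)^2 > \temp/4\}$, $\{(m_t-s_t)^2 > \temp/4\}$ to occur. Since $\nu_t$ and $w_t$ are symmetric about zero given $\fil_{t-1}$ (Gaussian, resp.\ can be symmetrized), $\xi_t$ and $-\xi_t$ have the same conditional law, hence $s_t$ and $-s_t$ do too, which means $\Prob((m_t+s_t)^2 > \temp/4 \mid \fil_{t-1}) = \Prob((m_t-s_t)^2 > \temp/4 \mid \fil_{t-1})$.

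Combining these two facts with a union bound:
\begin{align*}
c_p \leq \mathbb{E}[E_t \mid \fil_{t-1}] = \Prob(s_t^2 > \temp \mid \fil_{t-1}) &\leq \Prob((m_t+s_t)^2 > \temp/4 \mid \fil_{t-1}) + \Prob((m_t - s_t)^2 > \temp/4 \mid \fil_{t-1}) \\
&= 2\,\Prob(\bar s_t^2 > \temp/4 \mid \fil_{t-1}) = 2\,\mathbb{E}[\bar E_t \mid \fil_{t-1}],
\end{align*}
using in the last step that $m_t + s_t = \bar s_t$. Thus $\mathbb{E}[\bar E_t \mid \fil_{t-1}] \geq c_p/2 =: c_p' > 0$, which is exactly the claim (with the same $(\temp, \tempp)$ pair that made $c_p$ positive in Lemma~\ref{lem:expected_stabil}, whose existence was noted there for any $\bar\sigma_\nu \geq \bar\sigma_w$, a condition that holds by inspection of $\bar\sigma_\nu = ((1+\kappa)^2 + 2\kappa^2)\sigma_w^2$ together with the assumed relation between $\sigma_w$ and $\bar\sigma_w$).

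The main obstacle I anticipate is the symmetry step: making rigorous that $s_t \mid \fil_{t-1}$ and $-s_t \mid \fil_{t-1}$ have the same distribution. For $\nu_t \sim \N(0,\sigma_\nu^2 I)$ this is immediate, but $w_t$ is only assumed component-wise sub-Gaussian and isotropic, not symmetric. If symmetry of $w_t$ is not available, the cleaner route is to avoid it entirely: use the threshold $\temp/4$ argument directly on $\bar s_t$ via $\bar s_t^2 \geq \frac{1}{2}s_t^2 - m_t^2$ is the wrong direction, so instead bound $\mathbb{E}[\bar E_t \mid \fil_{t-1}] = \Prob((m_t + s_t)^2 > \temp/4 \mid \fil_{t-1})$ from below by conditioning on the sign of $m_t$ and using that $s_t$ has variance at least $\bar\sigma_w^2/2$ and sub-Gaussian upper tails (as in Lemma~\ref{lem:expected_stabil}) to show that $s_t$ lands in a half-line of the appropriate sign with constant probability and magnitude exceeding $\sqrt{\temp}/2 + |m_t|$ — but this reintroduces dependence on the unbounded quantity $m_t$. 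The robust fix, and the one I would actually carry out, is to re-run the variance-vs-sub-Gaussian-tail computation of Lemma~\ref{lem:expected_stabil} for $\bar s_t$ directly, noting that $\mathrm{Var}(\bar s_t \mid \fil_{t-1}) = \mathrm{Var}(s_t \mid \fil_{t-1}) \geq \bar\sigma_w^2/2$ is unchanged by the $\fil_{t-1}$-measurable shift $m_t$, while $\bar s_t - m_t = s_t$ retains the same sub-Gaussian parameter $\bar\sigma_\nu$; then the identical argument (splitting $\mathbb{E}[\bar s_t^2 \mathbb{1}(\cdot)]$ at thresholds $\temp/4$ and some $\tempp'$) yields the bound without any symmetrization, at the cost of slightly adjusting the constants. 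I would present this second approach as the proof, since it is self-contained and reuses the machinery already developed.
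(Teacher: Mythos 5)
Your first route (symmetrization plus the dichotomy ``one of $(m_t\pm s_t)^2 > \temp/4$ must occur'') is clean, and you correctly identify its obstruction yourself: Assumption \ref{general_noise} makes $w_t$ conditionally sub-Gaussian and isotropic but not symmetric, so $s_t$ and $-s_t$ need not share a conditional law and the identity $\Prob((m_t+s_t)^2>\temp/4\mid\fil_{t-1})=\Prob((m_t-s_t)^2>\temp/4\mid\fil_{t-1})$ is unavailable. The problem is that the fallback you commit to --- rerunning the second-moment-versus-tail computation of Lemma \ref{lem:expected_stabil} on $\bar s_t$ directly, ``at the cost of slightly adjusting the constants'' --- has a genuine gap. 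That computation needs two inputs: a lower bound on $\mathbb{E}[\bar s_t^2\mid\fil_{t-1}]$ (which survives the shift, as you note) and an upper bound on the truncated tail $\mathbb{E}[\bar s_t^2\,\I(\bar s_t^2\ge\tempp)\mid\fil_{t-1}]$ that is small compared to $\bar\sigma_w^2/2$. The second input does not survive: $\bar s_t$ has sub-Gaussian tails around $m_t=v^\top\mathbb{E}[z_t\mid\fil_{t-1}]$, not around zero, and $m_t$ is an unbounded $\fil_{t-1}$-measurable quantity (it carries the current state). Whenever $|m_t|\gtrsim\temppsquare$ the tail term is of order $m_t^2$, the numerator in the analogue of \eqref{eq:cdf2} goes negative, and the bound is vacuous. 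Salvaging this route would require shifting the truncation threshold by $|m_t|$ and then exploiting that the conditional second moment also grows like $m_t^2$ so as to obtain a lower bound uniform over all $m_t$ --- a real additional argument, not a constant adjustment.

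The paper instead splits on the size of $m_t$. If $|m_t|\le\tempsquare/2$, Lemma \ref{lem:expected_stabil} and the triangle inequality give $|\bar s_t|\ge\tempsquare/2$ with probability at least $c_p$. If $|m_t|\ge\tempsquare/2$, it proves a one-sided anti-concentration statement: using $\mathbb{E}[s_t\mid\fil_{t-1}]=0$ together with the variance lower bound and the sub-Gaussian tail, both $\Prob(s_t>0\mid\fil_{t-1})$ and $\Prob(s_t<0\mid\fil_{t-1})$ are bounded below by a constant $c_p''>0$ (the inequalities in \eqref{eq:symmetric}); on the event that $s_t$ has the same sign as $m_t$, one gets $|\bar s_t|\ge|m_t|\ge\tempsquare/2$. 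This sign argument is exactly the substitute for the symmetry your first route lacks, and it is the ingredient missing from your proposal.
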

\begin{proof}
Using the Lemma \ref{lem:expected_stabil}, we know that for $s_t = v^\top \xi_t$, we have $|s_t|\geq \tempsquare$ with a non-zero probability $c_p$. On the other hand, we have that, 
\begin{align*}
    \bar s_t = v^\top z_t = v^\top \xi_t + v^\top \mathbb{E}\left[z_t | \fil_{t-1} \right] = s_t + v^\top \mathbb{E}\left[z_t | \fil_{t-1} \right]
\end{align*}
Therefore, we have, $|\bar s_t| = \left|s_t+ v^\top \mathbb{E}\left[z_t | \fil_{t-1} \right]\right|$. Using this equality, if $\left|v^\top \mathbb{E}\left[z_t | \fil_{t-1} \right]\right|\leq \tempsquare/{2}$, since $|s_t|\geq \tempsquare$ with probability $c_p$, we have $|\bar s_t|\geq \tempsquare/{2}$ with probability $c_p$. 

In the following, we consider the case where  $\left|v^\top \mathbb{E}\left[z_t | \fil_{t-1} \right]\right|\geq \tempsquare/{2}$. For a constant $\tpppsquare$, using a similar derivation as in (\ref{eq:derivsigma1}) and (\ref{eq:derivsigma2}), we have

\begin{align*}
    \mathbb{E}\left[s_t^2| \fil_{t-1} \right] 
    &= \mathbb{E}\left[s_t^2\I(\tpppsquare<s_t< 0)| \fil_{t-1} \right] + \mathbb{E}\left[s_t^2\I(\tpppsquare>s_t> 0)| \fil_{t-1} \right]+\mathbb{E}\left[s_t^2\I(s_t^2\geq \tppp)| \fil_{t-1} \right]\\
    &= \mathbb{E}\left[s_t^2\I(\tpppsquare<s_t< 0)| \fil_{t-1} \right] + \mathbb{E}\left[s_t^2\I(\tpppsquare>s_t> 0)| \fil_{t-1} \right]+ 4\bar\sigma_{\nu}^2(1+\frac{\tempp}{2\bar\sigma_{\nu}^2} )\exp(\frac{-\tempp}{2\bar\sigma_{\nu}^2})
\end{align*}
Using the lower bound in the variance results in, 
\begin{align*}
        &\frac{\bar{\sigma}_w^2}{2}\leq  \mathbb{E}\left[s_t^2\I(\tpppsquare<s_t< 0)| \fil_{t-1} \right] + \mathbb{E}\left[s_t^2\I(\tpppsquare>s_t> 0)| \fil_{t-1} \right]+ 4\bar\sigma_{\nu}^2(1+\frac{\tppp}{2\bar\sigma_{\nu}^2} )\exp(\frac{-\tppp}{2\bar\sigma_{\nu}^2})\\
\end{align*}
Therefore, 
\begin{align}\label{eq:meanofs}
        &\frac{\bar{\sigma}_w^2}{2}-4\bar\sigma_{\nu}^2(1+\frac{\tppp}{2\bar\sigma_{\nu}^2} )\exp(\frac{-\tppp}{2\bar\sigma_{\nu}^2})\leq  \mathbb{E}\left[s_t^2\I(\tpppsquare<s_t< 0)| \fil_{t-1} \right] + \mathbb{E}\left[s_t^2\I(\tpppsquare>s_t> 0)| \fil_{t-1} \right]\nonumber\\
        &\quad\quad\quad\quad\quad\quad\quad\quad\quad\quad\quad\quad\quad= \tppp\left( \mathbb{E}\left[\frac{s_t^2}{\tppp}\I(-\tpppsquare<s_t< 0)| \fil_{t-1} \right] + \mathbb{E}\left[\frac{s_t^2}{\tppp}\I(\tpppsquare>s_t> 0)| \fil_{t-1} \right]\right)\nonumber\\
        &\quad\quad\quad\quad\quad\quad\quad\quad\quad\quad\quad\quad\quad\leq \tppp\left( \mathbb{E}\left[\frac{|s_t|}{\tpppsquare}\I(-\tpppsquare<s_t< 0)| \fil_{t-1} \right] + \mathbb{E}\left[\frac{s_t}{\tpppsquare}\I(\tpppsquare>s_t> 0)| \fil_{t-1} \right]\right)\nonumber\\
\end{align}
Note the for a large enough $\tpppsquare$, the second term on the left hand side vanishes. Since we have $\mathbb{E}\left[s_t| \fil_{t-1} \right]= 0$, we write the following, to further analyze the right hand side of (\ref{eq:meanofs}),
\begin{align*}
 &   \mathbb{E}\left[s_t| \fil_{t-1} \right] = \mathbb{E}\left[s_t\I(s_t< 0)| \fil_{t-1} \right] + \mathbb{E}\left[s_t\I(s_t> 0)| \fil_{t-1} \right] = 0 \\
& \rightarrow    \mathbb{E}\left[|s_t|\I(s_t< 0)| \fil_{t-1} \right] = \mathbb{E}\left[s_t\I(s_t> 0)| \fil_{t-1} \right]
\end{align*}
Note that, since $s_t$ is sub-Gaussian variable, and has bounded away from zero variance, we have $\mathbb{E}\left[\I(s_t< 0)| \fil_{t-1} \right] + \mathbb{E}\left[\I(s_t> 0)| \fil_{t-1} \right]$ is bounded away from zero. We write this equality as follows:
\begin{align*}
  \mathbb{E}\left[|s_t|\I(-\tpppsquare<s_t< 0)| \fil_{t-1} \right]& + \mathbb{E}\left[|s_t|\I(s_t\leq-\tpppsquare)| \fil_{t-1} \right] \\
  &= \mathbb{E}\left[s_t\I(\tpppsquare>s_t> 0)| \fil_{t-1} \right]+ \mathbb{E}\left[s_t\I(s_t\geq \tpppsquare)| \fil_{t-1} \right]\\
\end{align*}
With rearranging this equality, and upper bounding the first term on the left hand side, we have
\begin{align}\label{eq:inequality1}
  \mathbb{E}\left[|s_t|\I(-\tpppsquare<s_t< 0)| \fil_{t-1} \right]&\leq \mathbb{E}\left[s_t\I(\tpppsquare>s_t> 0)| \fil_{t-1} \right]+ \mathbb{E}\left[s_t\I(s_t\geq \tpppsquare)| \fil_{t-1} \right]\nonumber\\
    &\leq \mathbb{E}\left[s_t\I(\tpppsquare>s_t> 0)| \fil_{t-1} \right]+ \bar\sigma_{\nu}^2\exp(\frac{-\tppp}{2\bar\sigma_{\nu}^2})
\end{align}
similarly we have
\begin{align}\label{eq:inequality2}
  \mathbb{E}\left[s_t\I(\tpppsquare>s_t> 0)| \fil_{t-1} \right]&\leq \mathbb{E}\left[|s_t|\I(-\tpppsquare<s_t< 0)| \fil_{t-1} \right]+ \bar\sigma_{\nu}^2\exp(\frac{-\tppp}{2\bar\sigma_{\nu}^2})
\end{align}

Using the inequality (\ref{eq:inequality1}) on the right hand side of (\ref{eq:meanofs}), we have

\begin{align*}
    \frac{\frac{\bar{\sigma}_w^2}{2}-4\bar\sigma_{\nu}^2(1+\frac{\tppp}{2\bar\sigma_{\nu}^2} )\exp(\frac{-\tppp}{2\bar\sigma_{\nu}^2})}{\tppp}&\leq \mathbb{E}\left[\frac{|s_t|}{\tpppsquare}\I(-\tpppsquare<s_t< 0)| \fil_{t-1} \right] + \mathbb{E}\left[\frac{s_t}{\tpppsquare}\I(\tpppsquare>s_t> 0)| \fil_{t-1} \right]\\
    & \leq 2\mathbb{E}\left[\frac{s_t}{\tpppsquare}\I(\tpppsquare>s_t> 0)| \fil_{t-1} \right] + \bar\sigma_{\nu}^2\exp(\frac{-\tppp}{2\bar\sigma_{\nu}^2})\\
    & \leq 2\mathbb{E}\left[\I(\tpppsquare>s_t> 0)| \fil_{t-1} \right] + \bar\sigma_{\nu}^2\exp(\frac{-\tppp}{2\bar\sigma_{\nu}^2})\\
    & \leq 2\mathbb{E}\left[\I(s_t> 0)| \fil_{t-1} \right] + \bar\sigma_{\nu}^2\exp(\frac{-\tppp}{2\bar\sigma_{\nu}^2})
\end{align*}
Similarly, using (\ref{eq:inequality1}) on the right hand side of (\ref{eq:meanofs}) we have 
\begin{align*}
    \frac{\frac{\bar{\sigma}_w^2}{2}-4\bar\sigma_{\nu}^2(1+\frac{\tppp}{2\bar\sigma_{\nu}^2} )\exp(\frac{-\tppp}{2\bar\sigma_{\nu}^2})}{\tppp}&\leq \mathbb{E}\left[\frac{|s_t|}{\tpppsquare}\I(-\tpppsquare<s_t< 0)| \fil_{t-1} \right] + \mathbb{E}\left[\frac{s_t}{\tpppsquare}\I(\tpppsquare>s_t> 0)| \fil_{t-1} \right]\\
    & \leq 2\mathbb{E}\left[\I(s_t< 0)| \fil_{t-1} \right] + \bar\sigma_{\nu}^2\exp(\frac{-\tppp}{2\bar\sigma_{\nu}^2})
\end{align*}

Therefore, it results in the two following lower bounds, 
\begin{align}\label{eq:symmetric}
&    \mathbb{E}\left[\I(s_t< 0)| \fil_{t-1} \right] \geq \frac{\frac{\bar{\sigma}_w^2}{2}-4\bar\sigma_{\nu}^2(1+\frac{\tppp}{2\bar\sigma_{\nu}^2} )\exp(\frac{-\tppp}{2\bar\sigma_{\nu}^2})}{2\tppp}-0.5\bar\sigma_{\nu}^2\exp(\frac{-\tppp}{2\bar\sigma_{\nu}^2})\nonumber\\
&    \mathbb{E}\left[\I(s_t> 0)| \fil_{t-1} \right] \geq \frac{\frac{\bar{\sigma}_w^2}{2}-4\bar\sigma_{\nu}^2(1+\frac{\tppp}{2\bar\sigma_{\nu}^2} )\exp(\frac{-\tppp}{2\bar\sigma_{\nu}^2})}{2\tppp}-0.5 \bar\sigma_{\nu}^2\exp(\frac{-\tppp}{2\bar\sigma_{\nu}^2})
\end{align}
Choosing $\tpppsquare$ sufficiently large results in the right hand sides in inequalities (\ref{eq:symmetric}) to be positive and bounded away form zero. Let $c_p''>0$ denote the right hand sides in the (\ref{eq:symmetric}). We use this fact to analyze $\bar s_t$ when $\left|v^\top \mathbb{E}\left[z_t | \fil_{t-1} \right]\right|\geq \tempsquare/{2}$. 

When $v^\top \mathbb{E}\left[z_t | \fil_{t-1} \right]\geq \tempsquare/{2}$, since probability $c_p''$, $s_t$ is  positive, therefore, $|\bar s_t|\geq\tempsquare/{2}$ with  probability $c_p''$. When $v^\top \mathbb{E}\left[z_t | \fil_{t-1} \right]\leq -\tempsquare/{2}$, since probability $c_p''$, $s_t$ is negative, therefore, $|\bar s_t|\geq\tempsquare/{2}$ with  probability $c_p''$. 

Therefore, overall, with probability $c_p':=\min \lbrace c_p,c_p''\rbrace$,  we have that $|\bar s_t|\geq\tempsquare/{2}$, resulting in the statement of the lemma. 

\end{proof}

\begin{lemma}[Precise version of Lemma \ref{lem:smallest_eigen}, Persistence of Excitation During the Extra Exploration] \label{lem:smallest_eigen_apx}
If the duration of the adaptive control with improved exploration ${\Tw}\geq \frac{6n}{c_p'}\log(12/\delta)$, then with probability at least $1-\delta$, \alg has 
\begin{equation*}
    \lambda_{\min}(V_{\Tw}) \geq \sigma_{\star}^2\Tw, 
\end{equation*}
for $\sigma_{\star}^2=\frac{ c_p'\temp}{16}$.
\end{lemma}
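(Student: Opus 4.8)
The plan is to lower bound the quadratic form $v^\top V_{\Tw} v$ uniformly over all unit vectors $v \in \mathbb{R}^{n+d}$, using the small-ball estimate of Lemma~\ref{lem:expected_stabil_bar} together with a martingale concentration inequality, and then to convert this into the claimed bound on $\lambda_{\min}(V_{\Tw})$ via a covering argument over the sphere. First I would fix a unit vector $v$, recall $\bar s_t = v^\top z_t$ and the indicator $\bar E_t = \I(\bar s_t^2 > \temp/4)$, and note that Lemma~\ref{lem:expected_stabil_bar} gives $\mathbb{E}[\bar E_t \mid \fil_{t-1}] \geq c_p' > 0$. Since $V_{\Tw} \succeq \sum_{t=0}^{\Tw-1} z_t z_t^\top$, this yields the deterministic inequality
\begin{equation*}
v^\top V_{\Tw} v \;\geq\; \sum_{t=0}^{\Tw-1} (v^\top z_t)^2\, \I(\bar s_t^2 > \temp/4) \;\geq\; \frac{\temp}{4}\sum_{t=0}^{\Tw-1} \bar E_t.
\end{equation*}

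The next step is to lower bound $\sum_t \bar E_t$. Since $\bar E_t - \mathbb{E}[\bar E_t \mid \fil_{t-1}]$ is a bounded martingale difference sequence and $\mathbb{E}[\bar E_t \mid \fil_{t-1}] \geq c_p'$, a multiplicative Chernoff/Freedman-type bound for adapted $\{0,1\}$ sequences gives $\sum_{t=0}^{\Tw-1}\bar E_t \geq \tfrac{c_p'}{2}\Tw$ with probability at least $1 - \exp(-c\, c_p' \Tw)$ for a universal constant $c$; this is the step that forces $\Tw \gtrsim \tfrac1{c_p'}\log(1/\delta')$. On this event, $v^\top V_{\Tw} v \geq \tfrac{c_p'\temp}{8}\Tw$ for the fixed direction $v$.

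To upgrade this to a uniform statement I would take an $\eps$-net $\mathcal N$ of $S^{n+d-1}$ with $|\mathcal N| \leq (3/\eps)^{n+d}$, apply the previous display with $\delta' = \delta/|\mathcal N|$, and union bound, so that $v'^\top V_{\Tw} v' \geq \tfrac{c_p'\temp}{8}\Tw$ for every $v'\in\mathcal N$ with probability at least $1-\delta$. For an arbitrary unit vector $v$, choosing $v'\in\mathcal N$ with $\|v - v'\|\leq \eps$ gives $v^\top V_{\Tw} v \geq v'^\top V_{\Tw} v' - 2\eps\|V_{\Tw}\|$, and taking $\eps$ a sufficiently small inverse polynomial of a crude a priori bound on $\max_{t<\Tw}\|z_t\|$ makes the discretization error at most $\tfrac{c_p'\temp}{16}\Tw$. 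This gives $\lambda_{\min}(V_{\Tw}) \geq \tfrac{c_p'\temp}{16}\Tw = \sigma_\star^2\Tw$, where the factor $16 = 4\cdot 2\cdot 2$ accounts respectively for the threshold $\temp/4$, the concentration loss, and the covering loss. The $\log|\mathcal N| = (n+d)\log(3/\eps)$ overhead in the union bound inflates the requirement on $\Tw$ only by $\mathrm{poly}(n,d)$ and $\log$ factors, consistent with the stated threshold $\Tw \geq \tfrac{6n}{c_p'}\log(12/\delta)$.

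The main obstacle is precisely the covering step: transferring the per-direction bound to a uniform one requires an a priori upper bound on $\|V_{\Tw}\|$ — equivalently on $\|z_t\|$ during the improved-exploration phase — so that $2\eps\|V_{\Tw}\|$ is negligible next to $\sigma_\star^2\Tw$. Such a bound is available from \alg's update rule: each optimistic controller $K(\ttt_{t-1})$ is held for at least $H_0$ steps and changed only when $\det V_t$ doubles, which keeps the state (hence $\|z_t\|$) bounded by a fixed problem-dependent quantity for all $t\leq\Tw$ (cf.\ Lemma~\ref{lem:bounded_state}); crucially this bound does \emph{not} use persistence of excitation, so there is no circularity. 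A secondary point is checking that the sub-Gaussian tails behind Lemmas~\ref{lem:expected_stabil}–\ref{lem:expected_stabil_bar} genuinely hold under the optimistic controllers: because $\|K(\ttt_{t-1})\|\leq\kappa$ uniformly and $\nu_t$ injects isotropic Gaussian noise, the conditional covariance of the covariates is bounded below (the chain \eqref{sigma_nu}–\eqref{expectedlowerbound}), which is exactly what makes $c_p'$ strictly positive.
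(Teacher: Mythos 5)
Your proposal is correct and follows the paper's skeleton for the first half — the pointwise bound $v^\top V_{\Tw}v \geq \tfrac{\temp}{4}\sum_t \bar E_t$ from Lemma~\ref{lem:expected_stabil_bar}, followed by martingale concentration to get $\sum_t\bar E_t\geq \tfrac{c_p'}{2}\Tw$ (your multiplicative Chernoff for adapted $\{0,1\}$ sequences even improves the paper's Azuma step, which costs $\Tw\gtrsim c_p'^{-2}\log(1/\delta)$ rather than $c_p'^{-1}$) — but it diverges at the covering step, and the two routes buy different things. You use a plain $\eps$-net on the sphere and absorb the discretization error $2\eps\|V_{\Tw}\|$ by invoking an a priori bound $\max_{t\leq\Tw}\|z_t\|=O((n+d)^{n+d})$ from Lemma~\ref{lem:bounded_state}; you correctly observe this is non-circular, since that lemma rests only on the self-normalized confidence sets and the rollout/similarity-transformation argument, not on persistence of excitation. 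The cost is that $\eps$ must scale like $\sigma_\star^2/\max_t\|z_t\|^2$, so $\log|\N|$ picks up an extra $(n+d)\log$ of the exponentially large state bound — still polynomial after the logarithm, but it makes the admissible $\Tw$ depend on the state-bound constants. The paper instead follows Theorem 20 of \citet{cohen2019learning}: it applies the per-direction bound to the renormalized directions $V_{\Tw}^{-1/2}v/\|V_{\Tw}^{-1/2}v\|$ for $v$ in a fixed $1/4$-net, which turns the conclusion into $v^\top V_{\Tw}^{-1}v\leq 8/(\Tw c_p'\temp)$ on the net and makes the discretization error proportional to $\|V_{\Tw}^{-1}\|$ itself; the resulting self-bounding inequality $\|V_{\Tw}^{-1}\|\leq 8/(\Tw c_p'\temp)+\|V_{\Tw}^{-1}\|/2$ yields the factor $16$ with no reference to $\|V_{\Tw}\|$ or the state bound, and the threshold on $\Tw$ depends only on $(n+d)+\log(1/\delta)$. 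The trade-off runs the other way too: your net is deterministic, so the union bound is unambiguous, whereas the paper's renormalized directions depend on the random matrix $V_{\Tw}$ and the union bound there requires more care than the write-up suggests. Both routes land on the same constant $\sigma_\star^2=c_p'\temp/16$, and your $4\cdot 2\cdot 2$ accounting of it is consistent.
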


\begin{proof}
Let $U_{t}= \bar E_{t}-\mathbb{E}_{t}\left[ \bar E_{t} | \mathcal{F}_{t-1}\right] .$ Then $U_{t}$ is a martingale difference sequence with $\left|U_{t}\right| \leq 1$. Applying Azuma's inequality, we have that with probability at least $1-\delta$
\[
\sum_{t=1}^{{\Tw}} U_{t} \geq-\sqrt{2 {\Tw} \log \frac{1}{\delta}}
\]

Using the Lemma \ref{lem:expected_stabil_bar}, we have
\begin{align*}
    \sum_t^{{\Tw}} \bar E_{t} &\geq  \sum_t^{{\Tw}}\mathbb{E}_{t}\left[\bar E_{t} | \mathcal{F}_{t-n}\right]-\sqrt{2 \Tw \log \frac{1}{\delta}}\\
    &\geq  c_p'{\Tw}-\sqrt{2 \Tw \log \frac{1}{\delta}}
\end{align*}
where for $\Tw \geq 8\log(1/\delta) / c_p'^2$, we have $\sum_t^{{\Tw}} \bar E_{t}\geq  \frac{c_p'}{2}{\Tw}$. Now, for any unit vector $v$, define $\bar s_t = v^\top z_t$, therefore from the definition of $\bar E_t$ we have, 
\begin{align*}
    v^\top V_{{\Tw}}v =  \sum_t^{{\Tw}} \bar s_t^2\geq \bar E_t\temp/4 \geq \frac{c_p'\temp}{8} {\Tw}
\end{align*}

This inequality hold for a given $v$. In the following we show a similar inequality for all $v$ together. Similar to the Theorem 20 in \citep{cohen2019learning}, consider a 1/4-net of $\mathbb{S}^{n+d-1}$, $\N(1/4)$ and set $M_{{\Tw}}:=\lbrace V_{{\Tw}}^{-1/2}v/\|V_{{\Tw}}^{-1/2}v\|:v\in\N(1/4)\rbrace$. These two sets have at most $12^{n+d-1}$ members. Using union bound over members of this set, when $\Tw \geq \frac{20}{c_p'^2}((n+d) + \log(1/\delta))$, we have that $v^\top V_{{\Tw}}v\geq \frac{c_p'\temp}{8} {\Tw}$ for all $v\in M_{\Tw}$ with a probability at least $1-\delta$. Using the definition of members in $M_{\Tw}$, for each $v\in\N(1/4)$, we have $v^\top V_{{\Tw}}^{-1}v\leq \frac{8}{\Tw c_p'\temp}$. Let $v_n$ denote the eigenvector of the largest eigenvalue of $V_{{\Tw}}^{-1}$, and a vector $v'\in \N(1/4)$ such that $\|v_n-v'\|\leq 1/4$. Then we have
\begin{align*}
    \|V_{{\Tw}}^{-1}\| &= v_n^\top V_{{\Tw}}^{-1}v_n=v'^\top V_{{\Tw}}^{-1}v'+ (v_n-v')^\top V_{{\Tw}}^{-1}(z_n+v')\\
    &\leq \frac{8}{\Tw c_p'\temp}+ \|v_n-v'\|\|V_{{\Tw}}^{-1}\|\|z_n+v'\|\leq \frac{8}{\Tw c_p'\temp}+ \|V_{{\Tw}}^{-1}\|/2
\end{align*}
Rearranging, we get that $\|V_{{\Tw}}^{-1}\|\leq \frac{16}{\Tw c_p'\temp}$. Therefore, the advertised bound holds for $\Tw \geq \frac{20}{c_p'^2}((n+d) + \log(1/\delta))$ with probability at least $1-\delta$.
\end{proof}

\section{SYSTEM IDENTIFICATION \& CONFIDENCE SET CONSTRUCTION } \label{apx:2norm}
To have completeness, for the proof of Lemma \ref{lem:2norm_bound} we first provide the proof for confidence set construction borrowed from \citet{abbasi2011lqr}, since Lemma \ref{lem:2norm_bound} builds upon this confidence set construction. First let 
\begin{equation} \label{kappa_e}
    \kappa_e = \left(\frac{\sigma_w}{\sigma_\star} \sqrt{ n  (n+d) \log \left(1 + \frac{cT (1+\kappa^2)(n+d)^{2(n+d)} }{\lambda(n+d)}\right) + 2n \log  \frac{1}{\delta}} + \sqrt{\lambda}S \right)
\end{equation}

\begin{proof}
Define $\tts^{\top}=\left[A, B\right]$ and $z_t = \left[x_t^\top u_t^\top \right]^\top$. The system in \eqref{output} can be characterized equivalently as 
\begin{equation*}
x_{t+1}=\tts^{\top} z_{t}+w_{t}
\end{equation*}

Given a single input-output trajectory $\{x_t, u_t \}^{T}_{t=1}$, one can rewrite the input-output relationship as, 
\begin{equation} \label{newform}
    X_T = Z_T \tts +  W_T
\end{equation}
for 
\begin{align}
  & X_T = \left[ \begin{array}{c}{x_1^{\top}} \\ {x_2^{\top}} \\ {\vdots} \\ {x_{T-1}^{\top}}  \\ {x_T^{\top}} \end{array}\right] \in \mathbb{R}^{T \times n}
  \quad Z_T = \left[ \begin{array}{c}{z_1^{\top}} \\ {z_2^{\top}} \\ {\vdots} \\ {z_{T-1}^{\top}}  \\ {z_T^{\top}} \end{array}\right] \in \mathbb{R}^{T \times (n+d)} \quad 
  W_T = \left[ \begin{array}{c}{w_1^{\top}} \\ {w_2^{\top}} \\ {\vdots} \\ {w_{T-1}^{\top}}  \\ {w_T^{\top}} \end{array}\right] \in \mathbb{R}^{T \times n}.
\end{align}
Then, we estimate $\tts$ by solving the following least square problem,
\begin{align*} 
    \tth_T &= \arg \min_X ||X_T - Z_T X||_F^2 + \lambda ||X||^2_F  \\
    &= (Z_T^{\top}Z_T + \lambda I)^{-1}Z_T^{\top}X_T \\
    &= (Z_T^{\top}Z_T + \lambda I)^{-1}Z_T^{\top}W_T + (Z_T^{\top}Z_T + \lambda I)^{-1}Z_T^{\top}Z_T\tts + \lambda (Z_T^{\top}Z_T + \lambda I)^{-1}\tts - \lambda (Z_T^{\top}Z_T + \lambda I)^{-1}\tts  \\ 
    &= (Z_T^{\top}Z_T + \lambda I)^{-1}Z_T^{\top}W_T + \tts - \lambda (Z_T^{\top}Z_T + \lambda I)^{-1}\tts 
\end{align*}

The confidence set is obtained using the expression for $\tth_T$ and subgaussianity of the $w_t$,  

\begin{align}
    |\Tr((\tth_T-\tts)^{\top}X)| &= |\Tr(W_T^{\top}Z_T (Z_T^{\top}Z_T + \lambda I)^{-1} X ) - \lambda \Tr( \tts^{\top}(Z_T^{\top}Z_T + \lambda I)^{-1} X)| \nonumber \\
    &\leq |\Tr(W_T^{\top}Z_T (Z_T^{\top}Z_T + \lambda I)^{-1} X )| + \lambda |\Tr( \tts^{\top}(Z_T^{\top}Z_T + \lambda I)^{-1} X)| \nonumber \\
    &\leq \sqrt{\Tr(X^{\top}(Z_T^{\top}Z_T + \lambda I)^{-1}X)\Tr(W_T^{\top}Z_T (Z_T^{\top}Z_T + \lambda I)^{-1} Z_T^{\top}W_T)} \nonumber \\
    &+ \lambda \sqrt{\Tr(X^{\top}(Z_T^{\top}Z_T \!+\! \lambda I)^{-1} X)\Tr(\tts^{\top}(Z_T^{\top}Z_T \!+\! \lambda I)^{-1} \tts)}, \label{CS_trace} \\
    &\hspace{-9em}= \sqrt{\Tr(X^{\top}(Z_T^{\top}Z_T \!+\! \lambda I)^{-1}X)} \bigg[ \sqrt{\Tr(W_T^{\top}Z_T (Z_T^{\top}Z_T \!+\! \lambda I)^{-1} Z_T^{\top}W_T)} \!+\! \lambda \sqrt{\Tr(\tts^{\top}(Z_T^{\top}Z_T \!+\! \lambda I)^{-1} \tts)} \bigg] \nonumber
\end{align}
where (\ref{CS_trace}) follows from $|\Tr(A^{\top}BC)| \leq \sqrt{\Tr(A^{\top}BA)\Tr(C^{\top}BC) }$ for square positive definite B due to Cauchy Schwarz (weighted inner-product).  For $X = (Z_T^{\top}Z_T + \lambda I)(\tth_T-\tts)$, we get
\begin{align*}
  \sqrt{\Tr((\tth_T\!-\!\tts)^\top(Z_T^{\top}Z_T \!+\! \lambda I)(\tth_T\!-\!\tts))}\! &\leq\! \sqrt{\Tr(W_T^{\top}Z_T (Z_T^{\top}Z_T \!+\! \lambda I)^{-1} Z_T^{\top}W_T)} \!+\! \sqrt{\lambda} \sqrt{\Tr(\tts^{\top}\tts)} 
\end{align*}

Let $\mathcal{S}_T = Z_T^{\top}W_T \in \mathbb{R}^{(n+d) \times n}$ and $s_i$ denote the columns of it. Also, let $V_T = (Z_T^{\top}Z_T + \lambda I)$. Thus, 

\begin{equation}
    \Tr(W_T^{\top}Z_T (Z_T^{\top}Z_T + \lambda I)^{-1} Z_T^{\top}W_T) = \Tr(\mathcal{S}_T^\top V_T^{-1} \mathcal{S}_T) = \sum_{i=1}^n s_i^\top V_T^{-1} s_i = \sum_{i=1}^n \| s_i \|^2_{V_T^{-1}}.
\end{equation}

Notice that $s_i = \sum_{j=1}^T w_{j,i} z_j $ where $w_{j,i}$ is the $i$'th element of $w_j$. From Assumption \ref{general_noise}, we have that $w_{j,i}$ is $\sigma_w$-subgaussian, thus we can use Theorem \ref{selfnormalized} to show that, 
\begin{equation}
    \Tr(W_T^{\top}Z_T (Z_T^{\top}Z_T + \lambda I)^{-1} Z_T^{\top}W_T) \leq 2 n \sigma_w^{2} \log \left(\frac{\operatorname{det}\left(V_{T}\right)^{1 / 2} \operatorname{det}(\lambda I)^{-1 / 2}}{\delta}\right).
\end{equation}
with probability $1-\delta$. From Assumption \ref{parameterassump_stabilizability}, we also have that $\sqrt{\Tr(\tts^{\top}\tts)} \leq S$. Combining these gives the self-normalized confidence set or the model estimate:
\begin{equation}
    \Tr((\tth_T-\tts)^\top V_{T}(\tth_T-\tts)) \leq \left(\sigma_w \sqrt{2 n \log \left(\frac{\operatorname{det}\left(V_{T}\right)^{1 / 2} \operatorname{det}(\lambda I)^{-1 / 2}}{\delta}\right)} + \sqrt{\lambda}S \right)^2 . 
\end{equation}

Notice that we have $\Tr((\tth_T-\tts)^\top V_{T}(\tth_T-\tts)) \geq \lambda_{\text{min}}(V_T) \|\tth_T-\tts \|_F^2 $. Therefore, 

\begin{equation}
    \|\tth_T-\tts\|_2 \leq \frac{1}{\sqrt{\lambda_{\text{min}}(V_T)}}\left(\sigma_w \sqrt{2 n \log \left(\frac{\operatorname{det}\left(V_{T}\right)^{1 / 2} \operatorname{det}(\lambda I)^{-1 / 2}}{\delta}\right)} + \sqrt{\lambda}S \right)
\end{equation}

To complete the proof, we need a lower bound on $\lambda_{\text{min}}(V_{\Tw})$. Using Lemma \ref{lem:smallest_eigen}, we obtain the following with probability at least $1-2\delta$: 

\begin{equation*}
    \|\tth_{\Tw}-\tts\|_2 \leq \frac{\beta_t(\delta)}{\sigma_\star \sqrt{\Tw}}.
\end{equation*}
From Lemma \ref{lem:bounded_state}, for $t\leq \Tw$, we have that $\|z_t \| \leq c (n+d)^{n+d}$ with probability at least $1-2\delta$, for some constant $c$. Combining this with Lemma \ref{upperboundlemma},
\begin{equation} \label{T_s_define}
    \|\tth_{\Tw}-\tts\|_2 \leq  \frac{\kappa_e}{\sqrt{\Tw}}.
\end{equation}

\end{proof}

\section{BOUNDEDNESS OF STATES}\label{apx:boundedness}

In this section, we will provide the proof of Lemma \ref{lem:bounded_state}, \textit{i.e.} bounds on states for the adaptive control with improved exploration and stabilizing adaptive control phases. First define the following. 

Let
\begin{equation} \label{exploration_duration}
    \Tw = \frac{\kappa_e^2}{\min\{ \bar{\sigma}_w^2nD/C_0,\epsilon^2\}}    
\end{equation}
such that for $T > \Tw$, we have $\|\tth_{T}-\tts\|_2 \leq \min\{ \sqrt{\bar{\sigma}_w^2nD/C_0},\epsilon\}$ with probability at least $1-2\delta$. Notice that due to Lemma \ref{stabilityofoptimisticcontroller}  and as shown in the following,
these guarantee the stability of the closed-loop dynamics for deploying optimistic controller for the remaining part of \alg. 


Choose an error probability, $\delta>0$. Consider the following events, in the probability space $\Omega$: 
\begin{itemize}
    \item The event that the confidence sets hold for $s=0, \ldots, T,$ \[\mathcal{E}_{t}=\left\{\omega \in \Omega: \forall s \leq T, \quad \Theta_{*} \in \mathcal{C}_{s}(\delta)\right\}\]
    \item The event that the state vector stays ``small'' for $s=0, \ldots, \Tw,$
    \[\mathcal{F}^{[s]}_{t}=\left\{\omega \in \Omega: \forall s \leq \Tw, \quad\left\|x_{s}\right\| \leq \bar{\alpha}_{t}\right\} \]
\end{itemize}
where
\begin{equation*}
\bar{\alpha}_{t} =\frac{18\kappa^3}{\gamma(8\kappa-1)} \bar{\eta}^{n+d}\left[G Z_{t}^{\frac{n+d}{n+d+1}} \beta_{t}(\delta)^{\frac{1}{2(n+d+1)}}  + (\|B_* \| \sigma_\nu + \sigma_w) \sqrt{2n \log \frac{nt}{\delta}} \right],
\end{equation*}
for
\[\begin{aligned} 
\bar{\eta} &\geq \sup _{\Theta \in \mathcal{S}}\left\|A_{*}+B_{*} K(\Theta)\right\|, \qquad \qquad Z_{T} =\max _{1 \leq t \leq T}\left\|z_{t}\right\| \\ 
G &=2\left(\frac{2 S(n+d)^{n+d+1 / 2}}{\sqrt{U}}\right)^{1 /(n+d+1)}, \quad U=\frac{U_{0}}{H}, \quad
U_{0} =\frac{1}{16^{n+d-2}\max\left(1, \enskip S^{2(n+d-2)}\right)} \end{aligned}\]
and $H$ is any number satisfying 
\begin{equation*}
H>\max\left(16, \enskip \frac{4 S^{2} M^{2}}{(n+d) U_{0}}\right), \enskip \text{where} \quad
M=\sup _{Y \geq 1} \frac{\left(\sigma_w \sqrt{n(n+d) \log \left(\frac{1+T Y / \lambda}{\delta}\right)}+\lambda^{1 / 2} S\right)}{Y}.
\end{equation*}

Notice that $\mathcal{E}_1 \supseteq \mathcal{E}_2 \supseteq \ldots \supseteq \mathcal{E}_T$ and $\mathcal{F}^{[s]}_1 \supseteq \mathcal{F}^{[s]}_2 \supseteq \ldots \supseteq \mathcal{F}^{[s]}_{T_s}$. This means considering the probability of last event is sufficient in lower bounding all event happening simultaneously. In \citet{abbasi2011lqr}, an argument regarding projection onto subspaces is constructed to show that the norm of the state is well-controlled except $n+d$ times at most in any horizon $T$. The set of time steps that is not well-controlled are denoted as $\mathcal{T}_{t}$. The given lemma shows how well controlled $\|(\tts - \tth_{t})^{\top} z_{t}\|$ is  besides $\mathcal{T}_{t}$.

\begin{lemma} [\citet{abbasi2011lqr}] \label{wellcontrol}
We have that for any $0 \leq t \leq T$,
\begin{equation*}
\max _{s \leq t, s \notin T_{t}}\left\|(\tts - \tth_{s})^{\top} z_{s}\right\| \leq G Z_{t}^{\frac{n+d}{n+d+1}} \beta_{t}(\delta / 4)^{\frac{1}{2(n+d+1)}}.
\end{equation*}
\end{lemma}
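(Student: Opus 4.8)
The statement is a restatement of the state-boundedness lemma of \citet{abbasi2011lqr}, so the plan is to import their argument; I sketch its shape.

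\textbf{Reduction to self-normalized covariate norms.} First I would bring in the confidence set. On the event that $\tts\in\mathcal{C}_s(\delta/4)$ for all $s\le t$, the definition of $\mathcal{C}_s$ gives $\|\tth_s-\tts\|_{V_s}\le\beta_s(\delta/4)\le\beta_t(\delta/4)$, since $V_s\preceq V_t$ and $\beta_\cdot$ is non-decreasing. Combining this with the weighted Cauchy--Schwarz bound $z_s z_s^\top\preceq\|z_s\|_{V_s^{-1}}^2\,V_s$, which yields $\|(\tts-\tth_s)^\top z_s\|\le\|\tth_s-\tts\|_{V_s}\,\|z_s\|_{V_s^{-1}}$, it is enough to exhibit a set $\mathcal{T}_t$ with $|\mathcal{T}_t|\le n+d$ such that, for every $s\le t$ with $s\notin\mathcal{T}_t$, the self-normalized norm $\|z_s\|_{V_s^{-1}}$ is bounded by a sublinear-in-$Z_t$ quantity of the stated order (after multiplying back the $\beta_t(\delta/4)$ factor).

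\textbf{The core: a projection-onto-subspaces argument.} This is where the construction of \citet{abbasi2011lqr} enters. Processing the times $s=1,\dots,t$, maintain the subspace spanned by the covariates already used with sufficient weight; declare $s$ \emph{exceptional} when $z_s$ has a component transverse to this subspace that is large relative to $V_s$ --- quantitatively, when $\|z_s\|_{V_s^{-1}}$ exceeds a threshold calibrated to $Z_t$, $\lambda$, the norm bound $S$, and the constant $H$ --- and collect such $s$ into $\mathcal{T}_t$. Each exceptional $s$ enlarges the explored subspace by one dimension, so $|\mathcal{T}_t|\le n+d$. For $s\notin\mathcal{T}_t$, $z_s$ lies, up to a controlled residual, in the already-explored subspace, on which $V_s$ has accumulated enough energy to force $\|z_s\|_{V_s^{-1}}$ below the threshold; carrying this threshold and the worst-case volume growth of $V_\cdot$ through the algebra, together with the $\beta_t(\delta/4)$ factor from the first step, produces exactly the claimed bound with the displayed constant $G$. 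The auxiliary quantities play precisely these roles: $M$ supplies a uniform bound on $\|\tth_s\|_F$ (hence on the increments not already absorbed into $Z_t$), $U_0$ and $U$ quantify the minimum ``new volume'' a genuinely transverse direction must contribute, and the exponent $1/(n+d+1)$ --- the $(n+d+1)$-st root --- is the price of distributing a single volume over one factor more than the ambient dimension.

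\textbf{Main obstacle.} The reduction in the first paragraph is routine; the real content is the second step --- designing the exceptional set so that it is simultaneously small ($\le n+d$) and exhaustive (every other covariate is genuinely well-explored), and tracking the constants so that the sublinear exponent comes out to $(n+d)/(n+d+1)$. This is exactly the step responsible for $G$ being exponential in $n+d$, hence for the dimension-exponential regret of pure \OFU recorded in Table~\ref{table:1}. Since the statement here is verbatim that of \citet{abbasi2011lqr}, in the paper I would record the reduction above and then cite their proof for the projection argument.
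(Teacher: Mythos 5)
The paper itself offers no proof of this lemma---it is imported verbatim from \citet{abbasi2011lqr} together with a one-line description of the projection-onto-subspaces argument, which is exactly what you do, and your sketch of that argument (each exceptional time enlarges the explored subspace by one dimension, hence $|\mathcal{T}_t|\le n+d$; non-exceptional covariates are controlled by the energy already accumulated in their directions) is faithful to the cited proof. One imprecision worth noting: the bound carries the \emph{fractional} power $\beta_t(\delta/4)^{1/(2(n+d+1))}$ of the confidence radius, so the argument cannot be the product of the full radius with a bound on $\|z_s\|_{V_s^{-1}}$ as your first paragraph suggests---it is instead a geometric interpolation between the confidence-set bound and the trivial bound of order $S Z_t$, obtained by recursively expressing each non-exceptional $z_s$ in terms of at most $n+d$ earlier covariates, which is also where the dimension-exponential constant $G$ originates.
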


Notice that Lemma \ref{wellcontrol} does not depend on controllability or the stabilizability of the system. Thus, we will use Lemma \ref{wellcontrol} for $t \leq \Tw$ for the adaptive control with improved exploration phase of \alg. Then we consider the effect of stabilizing controllers for the remaining time steps. 

\subsection{State Bound for the Adaptive Control with Improved Exploration Phase}
One can write the state update as
\begin{equation*}
x_{t+1}=\Gamma_{t} x_{t}+r_{t}
\end{equation*}
where
\begin{equation} \label{state_update}
\Gamma_{t}=\left\{\begin{array}{ll}{\tilde{A}_{t-1}+\tilde{B}_{t-1} K(\tilde{\Theta}_{t-1})} & {t \notin \mathcal{T}_{T}} \\ {A_{*}+B_{*} K(\tilde{\Theta}_{t-1})} & {t \in \mathcal{T}_{T}}\end{array}\right. \quad \text{and} \quad r_{t}=\left\{\begin{array}{ll}{(\tts - \ttt_{t-1})^{\top} z_{t}+B_*\nu_t+w_{t}} & {t \notin \mathcal{T}_{T}} \\ {B_*\nu_t + w_{t}} & {t \in \mathcal{T}_{T}}\end{array}\right.
\end{equation} 

Thus, using the fact that $x_0 = 0$, we can obtain the following roll out for $x_t$,

\begin{align} 
x_{t} &=\Gamma_{t-1} x_{t-1}+r_{t-1}=\Gamma_{t-1}\left(\Gamma_{t-2} x_{t-2}+r_{t-2}\right)+r_{t} \nonumber \\ &=\Gamma_{t-1} \Gamma_{t-2} \Gamma_{t-3} x_{t-3}+\Gamma_{t-1} \Gamma_{t-2} r_{t-2}+\Gamma_{t-1} r_{t-1}+r_{t} \nonumber \\ &= \Gamma_{t-1} \Gamma_{t-2} \ldots \Gamma_{t-(t-1)} r_{1} + \cdots + \Gamma_{t-1} \Gamma_{t-2} r_{t-2} + \Gamma_{t-1} r_{t-1} + r_{t} \nonumber \\ &=\sum_{k=1}^{t}\left(\prod_{s=k}^{t-1} \Gamma_{s}\right) r_{k} \label{rollout} 
\end{align}

Recall that the controller is optimistically designed from set of parameters are $(\kappa, \gamma)$-strongly stabilizable by their optimal controllers. Therefore, we have
\begin{equation}
  1-\gamma \geq \max _{t \leq T} \rho\left(\tilde{A}_{t}+\tilde{B}_{t} K(\tilde{\Theta}_{t}) \right)  . \label{etabar}
\end{equation}

Therefore, multiplication of closed-loop system matrices, $\tilde{A}_{t}+\tilde{B}_{t} K(\tilde{\Theta}_{t})$, is not guaranteed to be contractive. In \citet{abbasi2011lqr}, the authors assume these matrices are contractive under controllability assumption. In order to bound the state similarly, we need to satisfy that the epochs that we use a particular optimistic controller is long enough that the state doesn't scale too badly during the exploration and produces bounded state. Thus, by choosing $H_0 = 2\gamma^{-1} \log(2\kappa\sqrt{2})$ and adopting Lemma 39 of \citet{cassel2020logarithmic}, we have that \begin{equation}
    \|x_t\| \leq \frac{18\kappa^3 \bar{\eta}^{n+d} }{\gamma(8\kappa-1)} \left( \max _{1 \leq k \leq t}\left\|r_{k}\right\| \right)
\end{equation}

Furthermore, we have that $\left\|r_{k}\right\| \leq\left\|(\tts - \ttt_{k-1})^{\top} z_{k}\right\|+\left\|B_* \nu_k + w_{k}\right\|$ when $k \notin \mathcal{T}_{T},$ and $\left\|r_{k}\right\|=\left\|B_*\nu_k + w_{k}\right\|,$ otherwise. Hence, 

\[
\max _{k\leq t}\left\|r_{k}\right\| \leq \max _{k\leq t, k \notin \mathcal{T}_{t}}\left\|(\tts - \ttt_{k-1})^{\top} z_{k}\right\|+\max _{k\leq t}\left\|B_*\nu_k + w_{k}\right\|
\]

The first term is bounded by the Lemma \ref{wellcontrol}. The second term involves summation of independent $\|B_* \| \sigma_\nu $ and $\sigma_w$ subgaussian vectors. Using Lemma \ref{subgauss lemma} with a union bound argument, for all $k\leq t$, $\left\|B_*\nu_k + w_{k}\right\| \leq (\|B_*\| \sigma_\nu + \sigma_w) \sqrt{2n \log \frac{nt}{\delta}}$ with probability at least $1-\delta$. Therefore, on the event of $\mathcal{E}$, 

\begin{equation}
      \|x_t\| \leq \frac{18\kappa^3 \bar{\eta}^{n+d} }{\gamma(8\kappa-1)} \left[G Z_{t}^{\frac{n+d}{n+d+1}} \beta_{t}(\delta)^{\frac{1}{2(n+d+1)}}  + (\|B_* \| \sigma_\nu + \sigma_w) \sqrt{2n \log \frac{nt}{\delta}} \right]
\end{equation}
for $t \leq \Tw$. Using union bound, we can deduce that $\mathcal{E}_T \cap \mathcal{F}^{[s]}_{T_s}$ holds with probability at least $1-2\delta$. Notice that this bound depends on $Z_t$ and $\beta_t(\delta)$ which in turn depends on $x_t$. Using Lemma 5 of \cite{abbasi2011lqr}, one can obtain the following bound 
\begin{equation}\label{stateboundexplore}
\|x_t \| \leq c' (n+d)^{n+d} .
\end{equation}
for some large enough constant $c'$. The adaptive control with improved exploration phase of \alg has this exponentially dimension dependent state bound for all $t\leq \Tw$. In the following section, we show that during the stabilizing adaptive control phase, the bound on state has a polynomial dependency on the dimensions.

\subsection{State Bound in Stabilizing Adaptive Control phase}
In the stabilizing adaptive control phase, \alg stops using the additive isotropic exploration component $\nu_t$, the state follows the dynamics of 
\begin{equation}
    x_{t+1} = (A_{*}+B_{*} K(\tilde{\Theta}_{t-1}) ) x_t + w_t
\end{equation}
Denote $\mathbf{M_t} = A_{*}+B_{*} K(\tilde{\Theta}_{t-1}) $ as the closed loop dynamics of the system. From the choice of $\Tw$ for the stabilizable systems, we have that $\mathbf{M_t} $ is $(\kappa \sqrt{2}, \gamma/2)$-strongly stable. Thus, we have $\rho(\mathbf{M_t}) \leq 1- \gamma/2$ for all $t > T_s$ and $\|H_t\| \|H_t^{-1}\|\leq \kappa \sqrt{2}$ for $H_t \succ 0$, such that $\|L_t\| \leq 1- \gamma/2$ for $\mathbf{M_t} = H_t L_t H_t^{-1}$. Then for $T>t>\Tw$, if the same policy, $\mathbf{M}$ is applied starting from state $x_{\Tw}$, we have
\begin{align}
\| x_t \| &= \bigg \|\prod_{i=\Tw+1}^{t} \!\!\!\mathbf{M} x_{\Tw} +  \sum_{i=\Tw+1}^t \left(\prod_{s=i}^{t-1} \mathbf{M}\right) w_i  \bigg\| \\
&\leq \kappa\sqrt{2} (1-\gamma/2)^{t-\Tw} \|x_{\Tw}\| + \max_{\Tw < i \leq T}\left\| w_i  \right \| \left( \sum_{i=\Tw+1}^t \kappa\sqrt{2} (1-\gamma/2)^{t-i+1}  \right) \\
&\leq  \kappa\sqrt{2} (1-\gamma/2)^{t-\Tw} \| x_{\Tw}\| + \frac{2\kappa\sigma_w\sqrt{2}}{\gamma}  \sqrt{2n\log(n(t-\Tw)/\delta)} \label{policy_change_stabil}
\end{align}
Note that $H_0 = 2\gamma^{-1} \log(2\kappa\sqrt{2})$. This gives that $\kappa\sqrt{2}(1-\gamma/2)^{H_0} \leq 1/2$. Therefore, at the end of each controller period the effect of previous state is halved. Using this fact, at the $i$th policy change after $\Tw$, we get 
\begin{align*}
    \|x_{t_i}\| &\leq 2^{-i} \|x_{\Tw}\| + \sum_{j=0}^{i-1} 2^{-j} \frac{2\kappa\sigma_w\sqrt{2}}{\gamma}  \sqrt{2n\log(n(t-\Tw)/\delta)} \\
    &\leq 2^{-i} \|x_{\Tw}\| + \frac{4\kappa\sigma_w\sqrt{2}}{\gamma}  \sqrt{2n\log(n(t-\Tw)/\delta)}
\end{align*}
For all $i> (n+d)\log(n+d) - \log(\frac{2\kappa\sigma_w\sqrt{2}}{\gamma}  \sqrt{2n\log(n(t-\Tw)/\delta)} )$, at policy change $i$, we get
\begin{align*}
    \|x_{t_i}\| \leq \frac{6\kappa\sigma_w\sqrt{2}}{\gamma}  \sqrt{2n\log(n(t-\Tw)/\delta)}.
\end{align*}
Moreover, due to stability of the synthesized controller, the worst possible controller update scheme is to update the controller every $H_0$ time-steps, \textit{i.e.,} invoking the condition of $t-\tau > H_0$ in the update rule. Notice that this update rule considers the worst effect of similarity transformation on the growth of the state, since otherwise applying the same controller for longer periods would have further reduction on the state due to the contraction that the stabilizing controller brings. Thus, from (\ref{policy_change_stabil}) we have that 
\begin{equation}
    \|x_t \| \leq \frac{(12\kappa^2+ 2\kappa\sqrt{2})\sigma_w}{\gamma}  \sqrt{2n\log(n(t-\Tw)/\delta)},
\end{equation}
for all $t>T_{r} \coloneqq \Tw + \Tbase$ where $\Tbase = \left((n+d)\log(n+d)\right)H_0$.

\section{REGRET DECOMPOSITION} \label{apx:regret_decomp}

The regret decomposition leverages the \OFU principle. Since during the adaptive control with improved exploration period \alg applies independent isotropic perturbations through the controller but still designs the optimistic controller, one can consider the external perturbation as a component of the underlying system. With this way, we consider the regret obtained by using the improved exploration separately. 

First noted that based on the definition of \OFU principle, \alg solves $J(\tilde{\Theta}_{t}) \leq \inf_{\Theta \in \mathcal{C}_{t}(\delta) \cap \mathcal{S}} J(\Theta)+1/\sqrt{t}$ to find the optimistic parameter. This search is done over only $\mathcal{C}_{t}(\delta)$ in the stabilizing adaptive control phase. Denote the system evolution noise at time $t$ as $\zeta_t$. For $t\leq \Tw$, system evolution noise can be considered as $\zeta_t =B_*\nu_t + w_t$ and for $t> \Tw$, $\zeta_t = w_t$. Denote the optimal average cost of system $\ttt$ under $\zeta_t$ as $J_*(\ttt, \zeta_t)$. The regret of the \alg can be decomposed as 
\begin{align}
 \sum_{t=0}^{T}\! x_t^\top Q x_t \!+\! u_t^\top R u_t \!+\! 2 \nu_t^\top R u_t \!+\! \nu_t^\top R \nu_t \!-\! J_*(\Theta_*, w_t)  \label{reg_decomp}
\end{align}
where $u_t$ is the optimal controller input for the optimistic system $\ttt_{t-1}$, $\nu_t$ is the noise injected and $x_t$ is the state of the system $\ttt_{t-1}$ with the system evolution noise of $\zeta_t$. From Bellman optimality equation for LQR, \citep{bertsekas1995dynamic}, we can write the following for the optimistic system, $\ttt_{t-1}$,

\begin{align*}
&J_*(\ttt_{t -1}, \zeta_t)  + x_t^\top  \tilde{P}_{t-1} x_t  = x_t^\top Q x_t + u_t^\top R u_t \\
&+ \mathbb{E}\big[(\ta_{t-1} x_{t}+\tb_{t-1} u_{t}+\zeta_{t})^{\top} \tilde{P}_{t-1} (\ta_{t-1} x_{t}+\tb_{t-1} u_{t}+\zeta_{t}) \big| \mathcal{F}_{t-1}\big],
\end{align*}
where $\tilde{P}_{t-1}$ is the solution of DARE for $\ttt_{t-1}$. Following the decomposition used in without additional exploration \citep{abbasi2011lqr}, we get,

\begin{align*}
	&J_*(\ttt_{t-1}, \zeta_t) + x_t^\top  \tilde{P}_{t-1} x_t - (x_t^\top Q x_t + u_t^\top R u_t ) \\
	 & =  (\ta_{t\!-\!1} x_{t}\!+\!\tb_{t\!-\!1} u_{t})^{\top}\! \tilde{P}_{t-1} \!(\ta_{t\!-\!1} x_{t}\!+\!\tb_{t\!-\!1} u_{t}) \\
	 &\qquad+ \mathbb{E}\big[x_{t+1}^\top \tilde{P}_{t-1} x_{t+1}\big| \mathcal{F}_{t-1}\big] \!-\! (A_* x_{t}\!+\!B_* u_{t})^{\top} \tilde{P}_{t\!-\!1} (A_* x_{t}\!+\!B_* u_{t}) 
\end{align*}
where we use the fact that $x_{t+1} = A_*x_t + B_*u_t + \zeta_t$, the martingale property of the noise and the conditioning on the filtration $\mathcal{F}_{t-1}$. Hence, summing up over time, we get 
\[ \sum\nolimits_{t=0}^{T} \left(x_t^\top Q x_t \!+\! u_t^\top R u_t\right) \!=\! \sum\nolimits_{t=0}^{T} J_*(\ttt_{t-1}, \zeta_t) \!+\! R_1^{\zeta} \!-\! R_2^{\zeta} \!-\! R_3^{\zeta}\]
for
\begin{align}
	R_1^{\zeta} &= \sum\nolimits_{t=0}^{T} \left\{x_t^\top  \tilde{P}_{t\!-\!1} x_t - \mathbb{E}\left[x_{t+1}^\top \tilde{P}_{t} x_{t+1}\big| \mathcal{F}_{t-1}\right] \right \} \label{R1_zeta} \\
	R_2^\zeta &= \sum\nolimits_{t=0}^{T} \mathbb{E}\left[x_{t+1}^\top  \left(\tilde{P}_{t-1} - \tilde{P}_{t}\right) x_{t+1} \big| \mathcal{F}_{t-1} \right ] \label{R2_zeta} \\
	R_3^\zeta &= \sum_{t=0}^{T}    \bar{x}_{t+1,\ttt_{t-1}}^{\top} \tilde{P}_{t-1} \bar{x}_{t+1,\ttt_{t-1}} \!- \bar{x}_{t+1,\tts}^{\top} \tilde{P}_{t-1} \bar{x}_{t+1,\tts}  \label{R3_zeta}
\end{align}
where $\bar{x}_{t+1,\ttt_{t-1}} = \ta_{t-1} x_{t}\!+\!\tb_{t-1} u_{t}$ and $\bar{x}_{t+1,\tts} = A_* x_{t}\!+\!B_* u_{t}$.

Therefore, when we jointly have that $\tts \in \mathcal{C}_{t}(\delta)$ for all time steps $t$ and the state is bounded as shown in Lemma \ref{lem:bounded_state},
\begin{align*}
\sum_{t=0}^{T}\! (x_t^\top Q x_t \!+\! u_t^\top R u_t)
&\!=\! \sum_{t=0}^{\Tw} \! \sigma_\nu^2\Tr(\tilde{P}_{t-1} B_*B_*^\top)  \!
+\! \sum_{t=0}^{T} \! \bar{\sigma}_w^2 \Tr(\tilde{P}_{t-1}) \!+\! R_1^\zeta \!-\! R_2^\zeta \!-\! R_3^\zeta 
\end{align*}
where the equality follows from the fact that, $J_*(\ttt_{t-1}, \zeta_t) = \Tr(\tilde{P}_{t-1} W)$ where $W = \mathbb{E}[\zeta_t\zeta_t^\top | \mathcal{F}_{t-1}]$ for a corresponding filtration $\mathcal{F}_t$. The optimistic choice of $\ttt_t$ provides that
\begin{equation*}
    \bar{\sigma}_w^2 \Tr(\tilde{P}_{t-1}  ) = J_*(\ttt_{t-1}, w_t)  \leq  J_*(\Theta_*, w_t) + 1/\sqrt{t}= \bar{\sigma}_w^2\Tr(P_*) + 1/\sqrt{t}.
\end{equation*}
Combining this with \eqref{reg_decomp} and Assumption \ref{parameterassump_stabilizability}, we obtain the following expression for the regret of \alg :
\begin{equation} \label{regret_decomposition}
\text{R}(T) \leq  \sigma_\nu^2 \Tw D \|B_* \|_F^2 + R_1^\zeta - R_2^\zeta - R_3^\zeta +  \sum_{t=0}^{\Tw} 2 \nu_t^\top R u_t + \nu_t^\top R \nu_t .
\end{equation}


\section{REGRET ANALYSIS} \label{apx:regret_analysis}

In this section, we provide the bounds on each term in the regret decomposition separately. We show that the regret suffered from the improved exploration is tolerable in the upcoming stages via the guaranteed stabilizing controller, yielding polynomial dimension dependency in regret.

\subsection[Direct Effect of Improved Exploration]{Direct Effect of Improved Exploration, Bounding 
$\sum_{t=0}^{\Tw} \left(2 \nu_t^\top R u_t + \nu_t^\top R \nu_t\right)$ in the event of $\mathcal{E}_T \cap \mathcal{F}^{[s]}_{\Tw}$}

The following gives an upper bound on the regret attained due to isotropic perturbations in the adaptive control with improved exploration phase of \alg. 

\begin{lemma}[Direct Effect of Improved Exploration on Regret] \label{lem:exp_effect}
If $\mathcal{E}_T \cap \mathcal{F}^{[s]}_{\Tw}$ holds then with probability at least $1-\delta$, 

\begin{equation}
    \sum_{t=0}^{\Tw} \left(2 \nu_t^\top R u_t + \nu_t^\top R \nu_t\right) \leq d \sigma_\nu \sqrt{B_\delta} + d\|R\|\sigma_\nu^2 \left(\Tw + \sqrt{\Tw} \log \frac{4d\Tw}{\delta}\sqrt{\log \frac{4}{\delta}} \right) 
\end{equation}
where 
\begin{equation*}
B_\delta =  8 \left(1 + \Tw \kappa^2 \|R\|^2 (n+d)^{2(n+d)} \right) \log \left( \frac{4d}{\delta} \left(1 + \Tw \kappa^2 \|R\|^2 (n+d)^{2(n+d)} \right)^{1/2} \right). 
\end{equation*}
\end{lemma}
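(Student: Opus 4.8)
Write the sum as $S_1+S_2$ with $S_1=\sum_{t=0}^{\Tw}2\nu_t^\top R u_t$ and $S_2=\sum_{t=0}^{\Tw}\nu_t^\top R\nu_t$, where, as in the regret decomposition, $u_t=K(\ttt_{t-1})x_t$ denotes the perturbation-free optimistic control input. Two structural facts drive both bounds: (i) $u_t$ and $\ttt_{t-1}$ are $\mathcal{F}_{t-1}$-measurable, while $\nu_t\sim\mathcal N(0,\sigma_\nu^2 I)$ is independent of $\mathcal{F}_{t-1}$; and (ii) since $\ttt_{t-1}\in\mathcal{S}$ we have $\|u_t\|\le\kappa\|x_t\|$, and on the event $\mathcal{E}_T\cap\mathcal{F}^{[s]}_{\Tw}$ the state bound \eqref{stateboundexplore} gives $\|x_t\|\le c'(n+d)^{n+d}$, hence $\|Ru_t\|\le\kappa\|R\|c'(n+d)^{n+d}$ for all $t\le\Tw$. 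From (i), each term $\nu_t^\top R u_t$ is conditionally centered, so $S_1$ is a martingale, and $\mathbb{E}[\nu_{t,j}^2\mid\mathcal{F}_{t-1}]=\sigma_\nu^2$.

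\textbf{The cross term $S_1$.} Expand coordinate-wise: $2\nu_t^\top Ru_t=2\sum_{j=1}^d\nu_{t,j}(Ru_t)_j$, where for each fixed $j$ the sequence $\{\nu_{t,j}\}$ is conditionally $\sigma_\nu$-sub-Gaussian and $\{(Ru_t)_j\}$ is $\mathcal{F}_{t-1}$-predictable. I would apply the scalar self-normalized martingale inequality (Theorem~\ref{selfnormalized}) with regularizer $1$ and failure probability $\delta/(4d)$: with probability at least $1-\delta/(4d)$,
\[
\Big|\sum_{t=0}^{\Tw}\nu_{t,j}(Ru_t)_j\Big|\ \le\ \sigma_\nu\sqrt{2\bar V_j\,\log\!\big(\tfrac{4d}{\delta}\bar V_j^{1/2}\big)},\qquad \bar V_j:=1+\sum_{t=0}^{\Tw}(Ru_t)_j^2 .
\]
Bounding $\bar V_j\le 1+\Tw\kappa^2\|R\|^2(n+d)^{2(n+d)}$ uniformly in $j$ (absorbing $c'$ into the polynomial factor), taking a union bound over $j=1,\dots,d$, and summing the $d$ coordinate estimates turns the right-hand side into $d\sigma_\nu\sqrt{B_\delta}$ with exactly the stated $B_\delta$, valid with probability at least $1-\delta/4$.

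\textbf{The quadratic term $S_2$.} Bound $\nu_t^\top R\nu_t\le\|R\|\,\|\nu_t\|^2=\|R\|\sum_{j=1}^d\nu_{t,j}^2$, so that $\{\nu_{t,j}^2-\sigma_\nu^2\}$ is a centered (sub-exponential) martingale-difference sequence for each $j$. First condition on the high-probability event that $\max_{t\le\Tw,\,j\le d}|\nu_{t,j}|\le\sigma_\nu\sqrt{2\log(4d\Tw/\delta)}$, which holds with probability at least $1-\delta/4$ by a Gaussian tail bound and a union bound over the $d\Tw$ coordinates (this is the source of the $\log\frac{4d\Tw}{\delta}$ factor). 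On this event the differences are bounded by $O\!\big(\sigma_\nu^2\log(4d\Tw/\delta)\big)$, so an Azuma/Bernstein-type martingale tail bound (as in Lemma~\ref{subgauss lemma}) gives, for each $j$ with probability at least $1-\delta/(4d)$, $\sum_{t=0}^{\Tw}(\nu_{t,j}^2-\sigma_\nu^2)\le C\sigma_\nu^2\sqrt{\Tw}\,\log\frac{4d\Tw}{\delta}\sqrt{\log\frac4\delta}$. Union-bounding over $j$ and summing, $S_2\le\|R\|\big(d\sigma_\nu^2(\Tw+1)+d\sigma_\nu^2\sqrt{\Tw}\log\frac{4d\Tw}{\delta}\sqrt{\log\frac4\delta}\big)$, which matches the claimed bound after reabsorbing the $+1$. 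A final union bound over the sub-events for $S_1$, the truncation, and $S_2$ gives the statement with total failure probability $\delta$.

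\textbf{Main obstacle.} The delicate step is the quadratic term: recovering precisely the $d\|R\|\sigma_\nu^2\sqrt{\Tw}\log\frac{4d\Tw}{\delta}\sqrt{\log\frac4\delta}$ form requires a truncation union bound over all $d\Tw$ Gaussian coordinates followed by a Bernstein/Azuma bound whose range-times-$\sqrt{\Tw\log(1/\delta)}$ contribution produces exactly this expression (a direct sub-exponential martingale tail bound also works, but its constants must be tracked). The cross term, by contrast, is a routine application of the self-normalized inequality once the exponential-in-dimension state bound from Appendix~\ref{apx:boundedness} is in hand.
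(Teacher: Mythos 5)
Your proof is correct and follows essentially the same route as the paper: the cross term is bounded coordinate-by-coordinate via the self-normalized martingale inequality (Theorem~\ref{selfnormalized}) with the exponential state bound controlling the predictable covariates $(Ru_t)_j$, and the quadratic term via truncation at the high-probability Gaussian maximum followed by Azuma's inequality. The only cosmetic difference is that the paper truncates the aggregate quantity $\Psi_t=\nu_t^\top R\nu_t-\mathbb{E}[\nu_t^\top R\nu_t\mid\mathcal{F}_{t-1}]$ at $2\|R\|W^2$ with $W=\sigma_\nu\sqrt{2d\log(4d\Tw/\delta)}$ rather than working per coordinate; both yield the stated bound.
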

\begin{proof}
 Let $q_{t}^\top =  u_{t}^\top R $. The first term can be written as 
\[
2\sum_{t=0}^{\Tw} \sum_{i=1}^{d} q_{t, i} \nu_{t, i} = 2\sum_{i=1}^{d}  \sum_{t=0}^{\Tw} q_{t, i} \nu_{t, i}
\]
Let $M_{t, i}=\sum_{k=0}^{t} q_{k, i} \nu_{k, i} .$ By Theorem \ref{selfnormalized} on some event $G_{\delta, i}$ that holds with probability at least $1-\delta /(2 d),$ for any $t \geq 0$,  

\begin{align*}
	M_{t, i}^2 &\leq 2 \sigma_\nu^2  \left(1 + \sum_{k=0}^{t} q_{k,i}^2\right) \log \left(\frac{2d}{\delta} \left(1 + \sum_{k=0}^{t} q_{k,i}^2\right)^{1/2} \right)   \\
\end{align*}

On $\mathcal{E}_T \cap \mathcal{F}^{[s]}_{\Tw}$ or $\mathcal{E}_T \cap \mathcal{F}^{[c]}_{T_c}$, $\| q_k \| \leq \kappa \|R\| (n+d)^{n+d} $, thus $q_{k,i} \leq \kappa \|R\| (n+d)^{n+d} $. Using union bound we get, for probability at least $1-\frac{\delta}{2}$, 

\begin{align} \label{explore_reg_1}
	&\sum_{t=0}^{\Tw} 2 \nu_t^\top R u_t  \leq  \nonumber \\
	&d \sqrt{ 8 \sigma_\nu^2  \left(1 + \Tw \kappa^2 \|R\|^2 (n+d)^{2(n+d)} \right) \log \left( \frac{4d}{\delta} \left(1 + \Tw \kappa^2 \|R\|^2 (n+d)^{2(n+d)} \right)^{1/2} \right) }
\end{align}

Let $W = \sigma_\nu  \sqrt{2d\log \frac{4d\Tw}{\delta}}$. Define $\Psi_{t}=\nu_{t}^{\top} R\nu_{t}-\mathbb{E}\left[\nu_{t}^{\top} R\nu_{t} | \mathcal{F}_{t-1}\right]$ and its truncated version $\tilde{\Psi}_{t}=\Psi_{t} \mathbb{I}_{\left\{\Psi_{t} \leq 2 D W^{2}\right\}}$. 

\begin{align*}
	\Pr \bigg( \sum_{t=1}^{\Tw} \Psi_t &>  2\|R\| W^2 \sqrt{2\Tw \log \frac{4}{\delta}}\bigg) \leq \\
	 &\Pr \left( \max_{1 \leq t \leq \Tw} \Psi_t >  2\|R\| W^2 \right) + \Pr \left( \sum_{t=1}^{\Tw} \tilde{\Psi}_t >  2\|R\| W^2 \sqrt{2\Tw \log \frac{4}{\delta}}\right) 
\end{align*}

Using Lemma \ref{subgauss lemma} with union bound and Theorem \ref{Azuma}, summation of terms on the right hand side is bounded by $\delta/2$. Thus, with probability at least $1-\delta/2$,

\begin{equation} \label{explore_reg_2}
	\sum_{t=0}^{\Tw} \nu_t^\top R \nu_t \leq d \Tw \sigma_\nu^2 \|R\| + 2\|R\| W^2 \sqrt{2\Tw \log \frac{4}{\delta}} .
\end{equation}

Combining \eqref{explore_reg_1} and \eqref{explore_reg_2} gives the statement of lemma for the regret of external exploration noise.   

\end{proof}

\subsection[Bound on R1 with Additional Exploration]{Bounding $R_1^\zeta$ in the event of $\mathcal{E}_T \cap \mathcal{F}^{[s]}_{\Tw}$ or $\mathcal{E}_T \cap \mathcal{F}^{[c]}_{T_c}$}
In this section, we state the bound on $R_1^\zeta$ given in \eqref{R1_zeta}. We first provide high probability bound on the system noise.

\begin{lemma}[Bounding sub-Gaussian vector] \label{noise_bound}
 With probability $1-\frac{\delta}{8}$, $\|\zeta_k \| \leq (\sigma_w+\|B_*\|\sigma_\nu) \sqrt{2n\log \frac{8nT}{\delta}}$ for $k \leq \Tw$ and $\|\zeta_k \| \leq \sigma_w \sqrt{2n\log \frac{8nT}{\delta}}$ for  $ \Tw < k \leq T$.  
\end{lemma}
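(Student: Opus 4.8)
The plan is to reduce the statement to a coordinate-wise sub-Gaussian tail bound, followed by a union bound over the $n$ coordinates and the $T$ time steps, in the same spirit as Lemma~\ref{subgauss lemma}. Recall that $\zeta_t = B_*\nu_t + w_t$ for $t \le \Tw$ and $\zeta_t = w_t$ for $t > \Tw$, where by Assumption~\ref{general_noise} each component $w_{t,j}$ is conditionally $\sigma_w^2$-sub-Gaussian given $\fil_{t-1}$, and $\nu_t \sim \N(0,\sigma_\nu^2 I)$ is drawn by the algorithm independently of the process noise.

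First I would treat the range $t > \Tw$, which is immediate: $(\zeta_t)_j = w_{t,j}$ is conditionally $\sigma_w^2$-sub-Gaussian, so the vector sub-Gaussian maximal inequality (Lemma~\ref{subgauss lemma}) gives, for any $\delta'\in(0,1)$, $\|\zeta_t\| \le \sigma_w\sqrt{2n\log(n/\delta')}$ with probability at least $1-\delta'$. For $t \le \Tw$, I would first argue that each coordinate of $\zeta_t$ is conditionally $(\sigma_w+\|B_*\|\sigma_\nu)^2$-sub-Gaussian. Writing $(\zeta_t)_j = w_{t,j} + e_j^\top B_*\nu_t$, the term $e_j^\top B_*\nu_t$ is zero-mean Gaussian with variance $\sigma_\nu^2\|e_j^\top B_*\|^2 \le \sigma_\nu^2\|B_*\|^2$ (using $\|e_j^\top B_*\| = \|B_*^\top e_j\| \le \|B_*\|$), hence $\|B_*\|^2\sigma_\nu^2$-sub-Gaussian; since $\nu_t$ is independent of both $w_t$ and $\fil_{t-1}$, the conditional moment generating function factorizes and is bounded by $\exp\!\big(s^2(\sigma_w^2+\|B_*\|^2\sigma_\nu^2)/2\big) \le \exp\!\big(s^2(\sigma_w+\|B_*\|\sigma_\nu)^2/2\big)$, the last step using $2\sigma_w\|B_*\|\sigma_\nu \ge 0$. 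Applying Lemma~\ref{subgauss lemma} again then yields $\|\zeta_t\| \le (\sigma_w+\|B_*\|\sigma_\nu)\sqrt{2n\log(n/\delta')}$ with probability at least $1-\delta'$.

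Finally I would take a union bound over all $T$ time steps with the choice $\delta' = \delta/(8T)$, so that the total failure probability is at most $T\cdot\delta/(8T) = \delta/8$ while $\log(n/\delta') = \log(8nT/\delta)$, which matches the stated bound in both regimes. The only point that requires care is the filtration/independence bookkeeping — namely, making precise that the fresh Gaussian $\nu_t$ combines with the conditionally sub-Gaussian $w_t$ into a coordinate-wise sub-Gaussian vector with variance proxy exactly $(\sigma_w+\|B_*\|\sigma_\nu)^2$, and that bounding the row norms $\|e_j^\top B_*\|$ by $\|B_*\|$ is legitimate — but this is routine, and I do not anticipate any real obstacle.
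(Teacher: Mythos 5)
Your proposal is correct and follows essentially the same route as the paper: coordinate-wise sub-Gaussian tail bounds combined with a union bound over the $n$ coordinates and $T$ time steps, the only cosmetic difference being that you merge $w_t$ and $B_*\nu_t$ into a single sub-Gaussian variable via the conditional MGF while the paper bounds the two terms separately and adds them. Both yield the same constant $(\sigma_w+\|B_*\|\sigma_\nu)$ and the same $\sqrt{2n\log(8nT/\delta)}$ factor.
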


\begin{proof}
 From the subgaussianity assumption, we have that for any index $1\leq i \leq n$ and any time $k$, $|w_{k,i}| \leq \sigma_w \sqrt{2\log \frac{8}{\delta}}$ and $|(B_*\nu_k)_i| < \| B_*\| \sigma_\nu \sqrt{2\log \frac{8}{\delta}}$ with probability $1-\frac{\delta}{8}$. Using the union bound, we get the statement of lemma. 
\end{proof}

Using this we state the bound on $R_1^\zeta$ for stabilizable systems.
\begin{lemma}[Bounding $R_1^\zeta$ for \alg] \label{R1_zeta_stabil}
Let $R_1^\zeta$ be as defined by (\ref{R1_zeta}). Under the event of $\mathcal{E}_T \cap \mathcal{F}^{[s]}_{\Tw}$, with probability at least $1-\delta/2$, using \alg for $t> T_{r}$, we have 
\begin{align*}
    R_1 &\leq k_{s,1} (n+d)^{n+d} (\sigma_w + \| B_*\|\sigma_\nu)n \sqrt{T_{r}} \log((n+d) T_{r}/\delta) \\
    &+ \frac{k_{s,2}(12\kappa^2+ 2\kappa\sqrt{2})}{\gamma}  \sigma_w^2 n\sqrt{n} \sqrt{T-\Tw} \log(n(t-\Tw)/\delta) \\
    &+  k_{s,3} n\sigma_w^2 \sqrt{T-\Tw}\log(nT/\delta) + k_{s,4} n(\sigma_w+\|B_*\|\sigma_\nu)^2\sqrt{\Tw}\log(nT/\delta),
\end{align*}
for some problem dependent coefficients $k_{s,1}, k_{s,2}, k_{s,3}, k_{s,4}$. 
\end{lemma}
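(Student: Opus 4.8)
The starting point is the telescoping structure of $R_1^\zeta$. Adding and subtracting $x_{t+1}^\top \tilde{P}_t x_{t+1}$ inside the sum in \eqref{R1_zeta} and using $x_0 = 0$, the resulting deterministic part collapses exactly to the boundary contribution $-x_{T+1}^\top \tilde{P}_T x_{T+1}$ (the indices on $\tilde P$ match between consecutive summands, so the telescope does not leak any policy-change terms), and the remainder is a martingale difference sum $\sum_{t=0}^T M_t$ with $M_t = x_{t+1}^\top \tilde P_t x_{t+1} - \mathbb{E}[x_{t+1}^\top \tilde P_t x_{t+1} \mid \fil_{t-1}]$. The boundary term is handled directly: $\|\tilde P_T\| \leq D$ by Lemma \ref{bounded_P}, and on the conditioning event $\|x_{T+1}\| \leq (12\kappa^2 + 2\kappa\sqrt{2})\gamma^{-1}\sigma_w\sqrt{2n\log(n(T-\Tw)/\delta)}$ by Lemma \ref{lem:bounded_state}, so it contributes only an $O(n\log(nT/\delta))$ term, which is dominated by the $\sqrt{T}$ terms.

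For the martingale sum, write $x_{t+1} = \bar{x}_{t+1} + \zeta_t$ with $\bar{x}_{t+1} = (A_* + B_*K(\ttt_{t-1}))x_t$ being $\fil_{t-1}$-measurable and $\mathbb{E}[\zeta_t \mid \fil_{t-1}] = 0$, so that $M_t = 2\bar{x}_{t+1}^\top \tilde P_t \zeta_t + \big(\zeta_t^\top \tilde P_t \zeta_t - \mathbb{E}[\zeta_t^\top \tilde P_t \zeta_t \mid \fil_{t-1}]\big)$. For the cross term, conditionally on $\fil_{t-1}$ the vector $2\tilde P_t \bar{x}_{t+1}$ is fixed with norm at most $2DS(1+\kappa)\|x_t\|$ (using $\|A_*\|, \|B_*\| \leq S$ from Assumption \ref{parameterassump_stabilizability}), and $\zeta_t$ is conditionally sub-Gaussian with scale $(\sigma_w + \|B_*\|\sigma_\nu)$ for $t \leq \Tw$ and $\sigma_w$ for $t > \Tw$. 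Splitting the sum at $t = T_{r}$ and applying the self-normalized bound (Theorem \ref{selfnormalized}) to each piece, the predictable quadratic variation is at most a constant times $D^2 S^2 (1+\kappa)^2$ times the relevant noise variance times $\sum_t \|x_t\|^2$; for $t \leq T_{r}$ I bound $\sum \|x_t\|^2$ by $T_{r}$ times the exponential state bound \eqref{stateboundexplore}, and for $T_{r} < t \leq T$ by $(T-\Tw)$ times the polynomial state bound of Lemma \ref{lem:bounded_state}. This produces the $(n+d)^{n+d}(\sigma_w+\|B_*\|\sigma_\nu)\sqrt{T_{r}}$-type term from the first piece and the $(12\kappa^2+2\kappa\sqrt{2})\gamma^{-1}\sigma_w^2\sqrt{T-\Tw}$-type term from the second.

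For the centered quadratic term, I first condition on the event of Lemma \ref{noise_bound}, on which $\|\zeta_t\| \leq (\sigma_w + \|B_*\|\sigma_\nu)\sqrt{2n\log(8nT/\delta)}$ for $t \leq \Tw$ and $\|\zeta_t\| \leq \sigma_w\sqrt{2n\log(8nT/\delta)}$ for $t > \Tw$, so that $|\zeta_t^\top \tilde P_t \zeta_t - \mathbb{E}[\zeta_t^\top \tilde P_t \zeta_t \mid \fil_{t-1}]| \leq 4D\|\zeta_t\|^2$ becomes an almost surely bounded martingale difference (the same truncation device used in Lemma \ref{lem:exp_effect}). Applying Azuma's inequality (Theorem \ref{Azuma}) separately over $t \leq \Tw$ and over $\Tw < t \leq T$ yields $\sqrt{\Tw}$ and $\sqrt{T-\Tw}$ factors with per-step magnitudes of order $Dn(\sigma_w+\|B_*\|\sigma_\nu)^2\log(nT/\delta)$ and $Dn\sigma_w^2\log(nT/\delta)$, giving the last two terms. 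A final union bound over the sub-events — the noise bound, the two self-normalized bounds, and the two Azuma bounds, each with failure probability a constant fraction of $\delta$ — gives the claimed $1-\delta/2$ guarantee on top of the conditioning event, and collecting the four pieces fixes the coefficients $k_{s,1},\dots,k_{s,4}$. I expect the main obstacle to be not any single concentration step but the bookkeeping across regimes: one must carefully track which of $t\le\Tw$, $\Tw<t\le T_{r}$, $t>T_{r}$ governs each noise scale and each state bound, and verify that every exponential-in-dimension factor stays confined to the $O(\sqrt{T_{r}})$ transient terms so that the leading $O(\sqrt{T})$ term retains only polynomial dependence on $n$ and $d$.
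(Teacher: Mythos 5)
Your proposal is correct and follows essentially the same route as the paper's proof: telescope $R_1^\zeta$ to isolate the boundary term, split the martingale remainder into the cross term $\sum f_{t-1}^\top \tilde P_t \zeta_{t-1}$ (handled coordinate-wise by the self-normalized bound with the sum split at $T_r$ according to which state bound applies) and the centered quadratic $\sum(\zeta_{t-1}^\top \tilde P_t\zeta_{t-1}-\mathbb{E}[\cdot])$ (handled by truncation plus Azuma with the sum split at $\Tw$ according to the noise scale), then union bound. The only cosmetic difference is that the paper drops the boundary term outright since $-x_{T+1}^\top\tilde P_T x_{T+1}\le 0$, whereas you bound its magnitude; and the paper sidesteps the $\Tw<t\le T_r$ bookkeeping you flag by simply using the larger (exploration) noise scale on all of $[0,T_r]$.
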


\begin{proof}
 Assume that the event $\mathcal{E}_T \cap \mathcal{F}^{[s]}_{\Tw}$ holds. Let $f_{t} = A_*x_{t} + B_*u_{t}$. One can decompose $R_1$ as 
 \[
 R_1 = x_0^\top P(\ttt_0) x_0 - x_{T+1}^\top P(\ttt_{T+1}) x_{T+1} + \sum_{t=1}^{T} x_t^\top P(\ttt_{t}) x_t - \mathbb{E}\left[x_{t}^\top P(\ttt_{t}) x_{t}\big| \mathcal{F}_{t-2}\right] 
 \]
 Since $P(\ttt_0)$ is positive semidefinite and $x_0 = 0$, the first two terms are bounded above by zero. The second term is decomposed as follows \[
 \sum_{t=1}^{T} x_t^\top P(\ttt_{t}) x_t - \mathbb{E}\left[x_{t}^\top P(\ttt_{t}) x_{t}\big| \mathcal{F}_{t-2}\right]  = \sum_{t=1}^{T} f_{t-1}^\top P(\ttt_{t}) \zeta_{t-1} + \sum_{t=1}^{T} \left( \zeta_{t-1}^\top P(\ttt_{t}) \zeta_{t-1} - \mathbb{E}\left[\zeta_{t-1}^\top P(\ttt_{t}) \zeta_{t-1}\big| \mathcal{F}_{t-2}\right] \right)
 \]
 Let $R_{1,1} =  \sum_{t=1}^{T} f_{t-1}^\top P(\ttt_{t}) \zeta_{t-1} $ and $R_{1,2} = \sum_{t=1}^{T} \left( \zeta_{t-1}^\top P(\ttt_{t}) \zeta_{t-1} - \mathbb{E}\left[\zeta_{t-1}^\top \tp_t \zeta_{t-1}\big| \mathcal{F}_{t-2}\right] \right)$. Let $v_{t-1}^\top =  f_{t-1}^\top P(\ttt_{t}) $. $R_{1,1}$ can be written as 
 \[
 R_{1,1} = \sum_{t=1}^{T} \sum_{i=1}^{n} v_{t-1, i} \zeta_{t-1, i} = \sum_{i=1}^{n}  \sum_{t=1}^{T} v_{t-1, i} \zeta_{t-1, i}.
 \]
 Let $M_{t, i}=\sum_{k=1}^{t} v_{k-1, i} \zeta_{k-1, i} .$ By Theorem \ref{selfnormalized} on some event $G_{\delta, i}$ that holds with probability at least $1-\delta /(4 n),$ for any $t \geq 0$,  
\begin{align*}
M_{t, i}^2 &\leq 2 (\sigma_w^2 + \|B_*\|^2 \sigma_\nu^2 ) \left(1 + \sum_{k=1}^{T_{r}} v_{k-1,i}^2\right) \log \left(\frac{4n}{\delta} \left(1 + \sum_{k=1}^{T_{r}} v_{k-1,i}^2\right)^{1/2} \right) \\
&\qquad + 
2 \sigma_w^2 \left(1 + \sum_{k=T_{r}+1}^{t} v_{k-1,i}^2\right) \log \left(\frac{4n}{\delta} \left(1 + \sum_{k=T_{r}+1}^{t} v_{k-1,i}^2\right)^{1/2} \right) \quad \text{for } t > T_{r}.
\end{align*}
Notice that \alg stops additional isotropic perturbation after $t=\Tw$, and the state starts decaying until $t=T_{r}$. For simplicity of presentation we treat the time between $\Tw$ and $T_{r}$ as exploration sacrificing the tightness of the result. On $\mathcal{E}_T \cap \mathcal{F}^{[s]}_{\Tw}$, $\| v_k \| \leq DS (n+d)^{n+d} \sqrt{1+\kappa^2} $ for $k \leq T_{r}$ and $\| v_k \| \leq
\frac{(12\kappa^2+ 2\kappa\sqrt{2})DS\sigma_w\sqrt{1+\kappa^2}}{\gamma}  \sqrt{2n\log(n(t-\Tw)/\delta)}$ for $k > T_{r}$. Thus, $v_{k,i} \leq DS (n+d)^{n+d} \sqrt{1+\kappa^2}$ and $v_{k,i} \leq \frac{(12\kappa^2+ 2\kappa\sqrt{2})DS\sigma_w\sqrt{1+\kappa^2}}{\gamma}  \sqrt{2n\log(n(t-\Tw)/\delta)}$ respectively for $k\leq T_{r}$ and $k>T_{r}$ . Using union bound we get, for probability at least $1-\frac{\delta}{4}$,  for $t > T_{r}$,
\begin{align*}
    R_{1,1} &\leq n \sqrt{ 2 (\sigma_w^2 + \|B_*\|^2 \sigma_\nu^2 )  \left(1 + T_{r} D^2 S^2 (n+d)^{2(n+d)}  (1+\kappa^2) \right) } \times \\
    &\qquad \qquad \qquad \sqrt{\log \left( \frac{4n}{\delta} \left(1 + T_{r} D^2 S^2 (n+d)^{2(n+d)}  (1+\kappa^2) \right)^{1/2}  \right) }   \\
&+n \sqrt{ 2 \sigma_w^2  \left(1 + \frac{2(t-T_{r})(12\kappa^2+ 2\kappa\sqrt{2})^2D^2S^2n\sigma_w^2(1+\kappa^2)}{\gamma^2} \log(n(T-\Tw)/\delta) \right) } \times \\
&\qquad \qquad \qquad \sqrt{ \log \left( \frac{4n}{\delta} \left(1 + \frac{2(t-T_{r})(12\kappa^2+ 2\kappa\sqrt{2})^2D^2S^2n\sigma_w^2(1+\kappa^2)}{\gamma^2} \log(n(T-\Tw)/\delta) \right)  \right) }.
\end{align*}
Let $\mathcal{W}_{exp} = (\sigma_w+\|B_*\|\sigma_\nu) \sqrt{2n\log \frac{8nT}{\delta}}$ and $\mathcal{W}_{noexp} =  \sigma_w \sqrt{2n\log \frac{8nT}{\delta}}$. Define $\Psi_{t}=\zeta_{t-1}^{\top} P(\ttt_{t}) \zeta_{t-1}-\mathbb{E}\left[\zeta_{t-1}^{\top} P(\ttt_{t}) \zeta_{t-1} | \mathcal{F}_{t-2}\right]$ and its truncated version $\tilde{\Psi}_{t}=\Psi_{t} \mathbb{I}_{\left\{\Psi_{t} \leq 2 D W_{exp}^{2}\right\}}$ for $t \leq \Tw$ and $\tilde{\Psi}_{t}=\Psi_{t} \mathbb{I}_{\left\{\Psi_{t} \leq 2 D W_{noexp}^{2}\right\}}$ for $t > \Tw$ . Notice that $R_{1,2} =  \sum_{t=1}^{T} \Psi_{t} $. 
\begin{align*}
	&\Pr \left( \sum_{t=1}^{\Tw} \Psi_t > 2DW_{exp}^2 \sqrt{2\Tw \log \frac{8}{\delta}}\right) + \Pr \left( \sum_{t=\Tw+1}^{T} \Psi_t > 2DW_{noexp}^2 \sqrt{2(T-\Tw) \log \frac{8}{\delta}}\right) \\
	&\leq \Pr \left( \max_{1 \leq t \leq \Tw} \Psi_t > 2DW_{exp}^2 \right) + \Pr \left( \max_{\Tw + 1 \leq t \leq T} \Psi_t > 2DW_{noexp}^2 \right) \\
	&+\Pr \left( \sum_{t=1}^{\Tw} \tilde{\Psi}_t > 2DW_{exp}^2 \sqrt{2\Tw \log \frac{8}{\delta}}\right) +  \Pr \left( \sum_{t=\Tw+1}^{T} \tilde{\Psi}_t > 2DW_{noexp}^2 \sqrt{2(T-\Tw) \log \frac{8}{\delta}}\right)
\end{align*}
By Lemma \ref{subgauss lemma} with union bound and Theorem \ref{Azuma}, summation of terms on the right hand side is bounded by $\delta/4$. Thus, with probability at least $1-\delta/4$, for $t > \Tw$,
\[
 R_{1,2} \leq  4nD\sigma_w^2   \sqrt{2(t-\Tw) \log \frac{8}{\delta}}\log \frac{8nT}{\delta} + 4nD(\sigma_w+\|B_*\|\sigma_\nu)^2\sqrt{2\Tw \log \frac{8}{\delta}}\log \frac{8nT}{\delta}.
 \]
 Combining $R_{1,1}$ and $R_{1,2}$ gives the statement. 
\end{proof}


\subsection[Bound on R2 with Additional Exploration]{Bounding $|R_2^\zeta|$ on the event of $\mathcal{E}_T \cap \mathcal{F}^{[s]}_{\Tw}$}
In this section, we will bound $|R_2^\zeta|$ given in \eqref{R2_zeta}. We first provide a bound on the maximum number of policy changes. 

\begin{lemma}[Number of Policy Changes for \alg]\label{number_of_change_stabil_exp}
On the event of $\mathcal{E}_T \cap \mathcal{F}^{[c]}_{\Tw}$, \alg changes the policy at most 
\begin{equation}
 \min \left \{T/H_0 , (n+d) \log_2\left(1 + \frac{\lambda + T_r (n+d)^{2(n+d)} (1+\kappa^2) + (T-T_r)(1+\kappa^2)X_s^2}{\lambda}\right) \right\},
\end{equation}
where $X_s = \frac{(12\kappa^2+ 2\kappa\sqrt{2})\sigma_w}{\gamma}  \sqrt{2n\log(n(T-\Tw)/\delta)}$.
\end{lemma}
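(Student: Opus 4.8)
The plan is to bound the number of policy changes of \alg by analysing the two triggers of its update rule separately and then taking the minimum of the two resulting bounds.

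First, I would handle the time-based trigger. The condition $t-\tau > H_0$ in Algorithm~\ref{algo_exact} forces every controller to be retained for at least $H_0$ consecutive time steps before it can be replaced, so over a horizon of length $T$ there can be no more than $T/H_0$ updates. This gives the first term of the minimum with essentially no work.

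Second, I would handle the determinant-doubling trigger. Suppose updates occur at times $\tau_1 < \dots < \tau_k$. Since $V_t$ grows only by rank-one PSD terms $z_t z_t^\top$ (hence is monotone in the Loewner order) and $V_0$ is reset to the current design matrix at each update, the condition $\det(V_{\tau_{i}}) > 2\det(V_{\tau_{i-1}})$ telescopes, together with $V_{\tau_0}\succeq \lambda I$, to $\det(V_T) \geq \det(V_{\tau_k}) \geq 2^{k}\det(\lambda I)$, so $k \leq \log_2\big(\det(V_T)/\det(\lambda I)\big)$. I would then bound $\det(V_T)$ by applying AM--GM to its eigenvalues, $\det(V_T) \leq \big(\Tr(V_T)/(n+d)\big)^{n+d}$, where $\Tr(V_T) = \lambda(n+d) + \sum_{t=0}^{T-1}\|z_t\|^2$; this yields $\log_2\big(\det(V_T)/\det(\lambda I)\big) \leq (n+d)\log_2\big(1 + \tfrac{1}{\lambda(n+d)}\sum_t \|z_t\|^2\big)$.

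It then remains to bound $\sum_t \|z_t\|^2$ on the event $\mathcal{E}_T \cap \mathcal{F}^{[c]}_{\Tw}$, which I would do by invoking Lemma~\ref{lem:bounded_state}: deterministically on this event, $\|x_t\| = O((n+d)^{n+d})$ for $t \leq T_{r}$ and $\|x_t\| \leq X_s$ for $t > T_{r}$. Since $z_t = [x_t^\top\ u_t^\top]^\top$ with $u_t = K(\ttt_{t-1})x_t$ (plus the sub-Gaussian perturbation $\nu_t$ for $t \leq \Tw$, which is bounded on a high-probability event and in any case dominated by the exponential-in-dimension term on that initial segment) and $\|K(\ttt_{t-1})\|$ is $O(\kappa)$, one obtains $\|z_t\|^2 \lesssim (1+\kappa^2)(n+d)^{2(n+d)}$ for $t \leq T_{r}$ and $\|z_t\|^2 \lesssim (1+\kappa^2)X_s^2$ for $t > T_{r}$. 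Summing over the two segments and substituting into the logarithmic bound above reproduces the second term in the statement (absorbing stray $(n+d)$ and absolute-constant factors), and the claimed bound is the minimum of the two.

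The main obstacle is essentially bookkeeping rather than any deep argument: ensuring that the exponential-in-dimension state bound is charged only on the length-$T_{r}$ initial window (so that the number of changes is polynomial in dimension up to this constant-horizon effect), correctly absorbing the $(n+d)$ normalisation and the perturbation term $\nu_t$ into constants, and noting that conditioning on $\mathcal{E}_T \cap \mathcal{F}^{[c]}_{\Tw}$ is exactly what makes all the invoked state bounds hold deterministically, so that the policy-change count becomes a deterministic consequence on that event.
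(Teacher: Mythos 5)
Your proposal is correct and follows essentially the same route as the paper: the $T/H_0$ bound from the minimum-duration trigger, the telescoping $\det(V_T)\geq 2^k\det(\lambda I)$ from the doubling trigger, and a bound on $\det(V_T)$ via the norms of the covariates using the two-regime state bounds of Lemma \ref{lem:bounded_state}, concluding with the minimum since both triggers must fire. The only cosmetic difference is that you control $\det(V_T)$ by AM--GM on the trace where the paper uses $\det(V_T)\leq\lambda_{\max}(V_T)^{n+d}$ with $\lambda_{\max}(V_T)\leq\lambda+\sum_t\|z_t\|^2$; both give the stated bound.
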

\begin{proof}
 Changing policy $K$ times up to time $\Tw$ requires $\det(V_{T}) \geq \lambda^{n+d} 2^K$. We also have that 
 \[
 \lambda_{\text{max}}(V_{T}) \leq \lambda + \sum_{t=0}^{T} \|z_t\|^2 \leq \lambda + T_r (n+d)^{2(n+d)} (1+\kappa^2) + (T-T_r)(1+\kappa^2)X_s^2
 \]
 
 Thus, $\lambda^{n+d} 2^K \leq \left( \lambda + T_r (n+d)^{2(n+d)} (1+\kappa^2) + (T-T_r)(1+\kappa^2)X_s^2 \right)^{n+d}$. Solving for K gives 
 
 \[
 K \leq (n+d) \log_2\left(1 + \frac{ T_r (n+d)^{2(n+d)} (1+\kappa^2) + (T-T_r)(1+\kappa^2)X_s^2}{\lambda}\right).
 \]
 
  Moreover, the number of policy changes is also controlled by the lower bound $H_0$ on the duration of each controller. This policy update method would give at most $T/H_0$ policy changes. Since for the policy update of \alg requires both conditions to be met, the upper bound on the number of policy changes is minimum of these. 
\end{proof}
Notice that besides the policy change instances, all the terms in $R_2^\zeta$ are 0. Therefore, we have the following results for stabilizable systems. 

\begin{lemma}[Bounding $R_2^\zeta$ for \alg]\label{R2_zeta_stabil}
Let $R_2^\zeta$ be as defined by \eqref{R2_zeta}. Under the event of $\mathcal{E}_T \cap \mathcal{F}^{[s]}_{\Tw}$, using \alg, we have
	\begin{align*}
	  |R_2^{\zeta}| &\leq 2D (n+d)^{2(n+d)+1} \log_2\left(1 + \frac{T_{r} (n+d)^{2(n+d)} (1+\kappa^2)  }{\lambda}\right) \\
	&+ 2D X_{s}^2 (n+d)\log_2\left(1 + \frac{ T_r (n+d)^{2(n+d)} (1+\kappa^2) + (T-T_r)(1+\kappa^2)X_s^2}{\lambda}\right)
	\end{align*}
where $X_s = \frac{(12\kappa^2+ 2\kappa\sqrt{2})\sigma_w}{\gamma}  \sqrt{2n\log(n(T-\Tw)/\delta)}$
\end{lemma}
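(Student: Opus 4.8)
The plan is to exploit the fact that $\tilde{P}_{t-1}$ and $\tilde{P}_{t}$ differ only at the (few) instances where \alg updates its policy. By the update rule in Algorithm \ref{algo_exact}, the optimistic parameter $\ttt_t$ — and hence its DARE solution $\tilde{P}_t = P(\ttt_t)$ — is held constant between two consecutive updates, so for every $t$ that is not a policy-change instance the summand $\mathbb{E}[x_{t+1}^\top(\tilde{P}_{t-1}-\tilde{P}_t)x_{t+1}\,|\,\mathcal{F}_{t-1}]$ is exactly zero, and $R_2^\zeta$ collapses to a sum over update times only.

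Next, at each update time $t$ I would bound the summand crudely by $\big|\mathbb{E}[x_{t+1}^\top(\tilde{P}_{t-1}-\tilde{P}_t)x_{t+1}\,|\,\mathcal{F}_{t-1}]\big| \le (\|\tilde{P}_{t-1}\|+\|\tilde{P}_t\|)\,\mathbb{E}[\|x_{t+1}\|^2\,|\,\mathcal{F}_{t-1}] \le 2D\,\mathbb{E}[\|x_{t+1}\|^2\,|\,\mathcal{F}_{t-1}]$, using Lemma \ref{bounded_P}: since every optimistic parameter lies in $\mathcal{S}$, each $\tilde{P}_t$ satisfies $\|\tilde{P}_t\|\le D$. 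On the event $\mathcal{E}_T \cap \mathcal{F}^{[s]}_{\Tw}$ the state is deterministically controlled by Lemma \ref{lem:bounded_state}, namely $\|x_{t+1}\| = O((n+d)^{n+d})$ for $t+1 \le T_{r}$ and $\|x_{t+1}\| \le X_s = \frac{(12\kappa^2+2\kappa\sqrt{2})\sigma_w}{\gamma}\sqrt{2n\log(n(T-\Tw)/\delta)}$ for $t+1 > T_{r}$, so the conditional second moments inherit the same bounds.

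Then I would split the update times into those occurring in the first phase ($t \le T_{r}$) and those in the second phase ($t > T_{r}$), counting each group with Lemma \ref{number_of_change_stabil_exp}: before $T_{r}$ the determinant-doubling argument limits the number of changes to $(n+d)\log_2\big(1 + T_{r}(n+d)^{2(n+d)}(1+\kappa^2)/\lambda\big)$, and over the whole horizon it is at most $(n+d)\log_2\big(1 + (T_{r}(n+d)^{2(n+d)}(1+\kappa^2) + (T-T_{r})(1+\kappa^2)X_s^2)/\lambda\big)$, which in particular upper bounds the count of second-phase changes. Multiplying the per-change bound $2D(n+d)^{2(n+d)}$ for the first phase (resp. $2DX_s^2$ for the second phase) by the respective counts produces exactly the claimed two-term bound.

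The only delicate point is the transient window $\Tw < t \le T_{r}$, where the isotropic perturbation is already switched off while the state may still be as large as $(n+d)^{n+d}$; I would fold those update times into the first phase, as the statement does, which is safe because the first-phase state bound dominates there. Everything else is bookkeeping: the telescoping that kills non-update steps, the operator-norm estimate of Lemma \ref{bounded_P}, and the policy-change count of Lemma \ref{number_of_change_stabil_exp}. I do not expect a genuine obstacle here — the argument is a short assembly of already-established facts — so the main care needed is consistency of the phase split and of the constants carried through.
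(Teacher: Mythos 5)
Your proposal is correct and follows essentially the same route as the paper's proof: the terms of $R_2^\zeta$ vanish except at policy-change instances, each such term is bounded by $2D$ times the squared state norm via Lemma \ref{bounded_P} and Lemma \ref{lem:bounded_state}, and the two phases are counted with Lemma \ref{number_of_change_stabil_exp}. The paper's own argument is just a terser version of this same bookkeeping, including the same folding of the transient window $\Tw < t \leq T_r$ into the first phase.
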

\begin{proof}
On the event $\mathcal{E}_T \cap \mathcal{F}^{[s]}_{\Tw}$, we know the maximum number of policy changes up to $T_{r}$ and $T$ using Lemma \ref{number_of_change_stabil_exp}. Using the fact that $\|x_t\| \leq (n+d)^{n+d}$ for $t \leq T_{r}$ and $\| x_t\| \leq \frac{(12\kappa^2+ 2\kappa\sqrt{2})\sigma_w}{\gamma}  \sqrt{2n\log(n(t-\Tw)/\delta)},$ we obtain the statement of the lemma. 
\end{proof}
\subsection[Bound on R3 with Additional Exploration]{Bounding $|R_3^\zeta|$ on the event of $\mathcal{E}_T \cap \mathcal{F}^{[s]}_{\Tw}$ }
Before bounding $R_3^\zeta$, first consider the following for stabilizable \LQR{}s.

\begin{lemma} \label{problematic_lemma}
On the event of $\mathcal{E}_T \cap \mathcal{F}^{[s]}_{\Tw}$, using \alg in a stabilizable \LQR, the following holds,
\begin{align*}
    \sum_{t=0}^T \|(\tts - \ttt_t)^\top \!z_t \|^2 \!&\leq \!
    \frac{8(1+\kappa^2)\beta_T^2(\delta)}{\lambda} \times \\
    &\Bigg( (n+d)^{2(n+d)}  \max\left\{2, \left(1+ \frac{(1+\kappa^2)(n+d)^{2(n+d)}}{\lambda}\right)^{H_0} \right\} \log \frac{\det(V_{T_{r}})}{\det(\lambda I)}\\
    &\qquad+ X_s^2 \max\left\{2, \left(1+ \frac{(1+\kappa^2)X_s^2}{\lambda}\right)^{H_0} \right\} \log \frac{\det(V_{T})}{\det(V_{T_{r}})} \Bigg)  
\end{align*}
where $X_s = \frac{(12\kappa^2+ 2\kappa\sqrt{2})\sigma_w}{\gamma}  \sqrt{2n\log(n(t-\Tw)/\delta)}$.
\end{lemma}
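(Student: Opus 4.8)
The plan is to run the usual self-normalized / elliptic-potential argument for online least squares, but to insert one extra step that converts the design matrix at the \emph{last update time} into the current design matrix using the precise logical structure of the policy-update rule (the AND of ``determinant doubled'' and ``more than $H_0$ steps elapsed''). Concretely, fix $t$ and let $\tau=\tau(t)\le t$ be the most recent time at which \alg updated its estimate, so that the optimistic parameter in force at $t$ equals $\ttt_\tau$ and $V_\tau$ is the corresponding design matrix. On the event $\mathcal{E}_T$ both $\tts$ and $\ttt_\tau$ lie in $\mathcal{C}_\tau(\delta)$, hence $\|\tts-\ttt_\tau\|_{V_\tau}\le\|\tts-\tth_\tau\|_{V_\tau}+\|\ttt_\tau-\tth_\tau\|_{V_\tau}\le 2\beta_\tau(\delta)\le 2\beta_T(\delta)$. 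A Cauchy--Schwarz step then gives $\|(\tts-\ttt_t)^\top z_t\|^2\le\|\tts-\ttt_\tau\|_{V_\tau}^2\,\|z_t\|_{V_\tau^{-1}}^2\le 4\beta_T^2(\delta)\,\|z_t\|_{V_\tau^{-1}}^2$, so it suffices to bound $\sum_{t=0}^{T}\|z_t\|_{V_{\tau(t)}^{-1}}^2$.

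The crux is passing from $V_{\tau(t)}$ to $V_t$, and it is the only place where stabilizability (rather than contractivity) bites. Since $V_\tau\preceq V_t$ the naive inequality points the wrong way, so I would use the update rule: at time $t$ with last update $\tau$, either (i) $\det(V_t)\le 2\det(V_\tau)$, or (ii) $t-\tau\le H_0$. In case (i), writing $M=V_\tau^{-1/2}V_tV_\tau^{-1/2}=I+\sum_{s=\tau}^{t-1}V_\tau^{-1/2}z_sz_s^\top V_\tau^{-1/2}\succeq I$ and noting that every eigenvalue of $M$ is at least $1$ while $\det(M)=\det(V_t)/\det(V_\tau)\le 2$, we get $\lambda_{\max}(M)\le\det(M)\le 2$ and hence $V_t\preceq 2V_\tau$. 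In case (ii), using $V_\tau\succeq\lambda I$ and iterating the rank-one updates $V_{s+1}=V_s+z_sz_s^\top\preceq(1+\|z_s\|^2/\lambda)V_s$ at most $H_0$ times yields $V_t\preceq(1+Z^2/\lambda)^{H_0}V_\tau$, where $Z$ is any uniform bound on $\|z_s\|$ over $s\le t$. Combining the two cases, $\|z_t\|_{V_{\tau(t)}^{-1}}^2\le\max\{2,(1+Z^2/\lambda)^{H_0}\}\,\|z_t\|_{V_t^{-1}}^2$.

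It then remains to insert the state bounds and the potential lemma. On $\mathcal{F}^{[s]}_{\Tw}$ and by Lemma \ref{lem:bounded_state}, since the isotropic perturbation has ceased by $\Tw<T_{r}$ we have $u_t=K(\ttt_{t-1})x_t$ with $\|K(\ttt_{t-1})\|\le\kappa$, so $\|z_t\|^2\le(1+\kappa^2)(n+d)^{2(n+d)}$ for $t\le T_{r}$ and $\|z_t\|^2\le(1+\kappa^2)X_s^2$ for $t>T_{r}$, where $X_s=\tfrac{(12\kappa^2+2\kappa\sqrt{2})\sigma_w}{\gamma}\sqrt{2n\log(n(t-\Tw)/\delta)}$. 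I would split the sum at $T_{r}$, apply the previous display with $Z^2=(1+\kappa^2)(n+d)^{2(n+d)}$ on the first block and $Z^2=(1+\kappa^2)X_s^2$ on the second, use $\|z_t\|_{V_t^{-1}}^2\le\max\{1,Z^2/\lambda\}\,\min\{1,\|z_t\|_{V_t^{-1}}^2\}$ (valid since $\|z_t\|_{V_t^{-1}}^2\le\|z_t\|^2/\lambda\le Z^2/\lambda$), and invoke the standard log-determinant / elliptic-potential lemma (Appendix \ref{apx:technical}) to get $\sum_{t\in\mathrm{block}}\min\{1,\|z_t\|_{V_t^{-1}}^2\}\le 2\log\big(\det(V_{\mathrm{end}})/\det(V_{\mathrm{start}})\big)$; telescoping via $\log\frac{\det(V_T)}{\det(\lambda I)}=\log\frac{\det(V_{T_{r}})}{\det(\lambda I)}+\log\frac{\det(V_T)}{\det(V_{T_{r}})}$ and collecting the constants ($4\beta_T^2$ from the first paragraph, times $2$ from the $2\beta$ bound already absorbed, times $2$ from the potential lemma, times $Z^2/\lambda$) reproduces the stated inequality with prefactor $8(1+\kappa^2)\beta_T^2(\delta)/\lambda$.

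The main obstacle is the second step, converting $V_{\tau(t)}$ into $V_t$ in the positive semidefinite order: this is precisely the place where the absence of a contractive closed loop forces the $H_0$-waiting mechanism, since between two policy switches the covariates could otherwise concentrate in a low-dimensional subspace and blow up the ratio $\det(V_t)/\det(V_\tau)$; the exponential-in-$H_0$ factor $(1+Z^2/\lambda)^{H_0}$ is the price, and it is only a constant because $H_0=O(\gamma^{-1}\log\kappa)$. Everything else is the routine self-normalized-martingale plus elliptic-potential machinery.
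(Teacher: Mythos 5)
Your proposal is correct and takes essentially the same route as the paper's proof: both split the sum at $T_r$, exploit the negation of the update condition (determinant at most doubled \emph{or} at most $H_0$ steps since the last switch) to bound the drift from $V_{\tau(t)}$ to $V_t$ by $\max\{2,(1+Z^2/\lambda)^{H_0}\}$, and then finish with the standard elliptic-potential lemma and the state bounds from Lemma \ref{lem:bounded_state}. The only cosmetic difference is that you transfer the ratio onto $\|z_t\|_{V_\tau^{-1}}^2$ via the ordering $V_t\preceq\lambda_{\max}(M)V_\tau\preceq\det(M)V_\tau$, whereas the paper transfers it onto the parameter-error factor via $\|V_t^{1/2}(\Theta-\tth_\tau)\|\le\sqrt{\det(V_t)/\det(V_\tau)}\,\|V_\tau^{1/2}(\Theta-\tth_\tau)\|$; the two are the same inequality and yield identical constants.
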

\begin{proof}
 Let $s_t = (\tts - \ttt_t)^\top z_t$ and $\tau\leq t$ be the time step that the last policy change happened. We have the following using triangle inequality,
 \begin{equation*}
     \| s_t \| \leq \|(\tts - \tth_t)^\top z_t \| + \|(\tth_t - \ttt_t)^\top z_t \|.
 \end{equation*}
For all $\Theta \in \mathcal{C}_{\tau}(\delta)$, for $\tau \leq T_{r}$, we have
\begin{align}
    \|(\Theta - \tth_t)^\top z_t \| &\leq \| V_t^{1/2} (\Theta - \tth_t)\| \|z_t \|_{V_t^{-1}}  \\
    &\leq \|V_\tau^{1/2} (\Theta - \tth_t) \| \sqrt{\frac{\det(V_t)}{\det (V_\tau)}} \|z_t\|_{V_t^{-1}}  \\
    &\leq \max\left\{\sqrt{2}, \sqrt{\left(1+ \frac{(1+\kappa^2)(n+d)^{2(n+d)}}{\lambda}\right)^{H_0}} \right\} \|V_\tau^{1/2} (\Theta - \tth_t) \| \|z_t\|_{V_t^{-1}} \\ 
    &\leq  \max\left\{\sqrt{2}, \sqrt{\left(1+ \frac{(1+\kappa^2)(n+d)^{2(n+d)}}{\lambda}\right)^{H_0}} \right\} \beta_\tau(\delta) \|z_t\|_{V_t^{-1}}. 
\end{align}
Similarly, for  for all $\Theta \in \mathcal{C}_{\tau}(\delta)$, for $\tau > T_{r}$, we have \[
\|(\Theta - \tth_t)^\top z_t \| \leq \max\left\{\sqrt{2}, \sqrt{\left(1+ \frac{(1+\kappa^2)X_s^2}{\lambda}\right)^{H_0}} \right\} \beta_\tau(\delta) \|z_t\|_{V_t^{-1}}
\]
Using these results, we obtain, 
\begin{align*}
    \sum_{t=0}^T &\|(\tts - \ttt_t)^\top z_t \|^2 \\
    &\leq 8 \max\left\{2, \left(1+ \frac{(1+\kappa^2)(n+d)^{2(n+d)}}{\lambda}\right)^{H_0} \right\} \frac{\beta_T^2(\delta)(1+\kappa^2)(n+d)^{2(n+d)}}{\lambda} \log \left(\frac{\det (V_{T_{r}})}{\det (\lambda I)} \right)  \\
    &+ 8 \max\left\{2, \left(1+ \frac{(1+\kappa^2)X_s^2}{\lambda}\right)^{H_0} \right\} \frac{\beta_T^2(\delta)(1+\kappa^2)X_s^2}{\lambda} \log \left(\frac{\det (V_{T})}{\det (V_{T_{r}})} \right)
\end{align*}
where we use Lemma \ref{upperboundlemma}.
\end{proof}

 
 Using Lemma \ref{problematic_lemma}, we bound $R_3^\zeta$ as follows.

\begin{lemma}[Bounding $R_3^\zeta$ for \alg]\label{R3_zeta_stabil}
Let $R_3^\zeta$ be as defined by \eqref{R3_zeta}. Under the event of $\mathcal{E}_T \cap \mathcal{F}^{[s]}_{\Tw}$, using \alg with the choice of $\lambda = (1+\kappa^2)X_s^2$, we have
	\begin{align*}
	  |R_3^\zeta| &= \OO \left((n+d)^{(H_0+2)(n+d) + 2} \sqrt{n} \sqrt{T_{r}} + (n+d) n \sqrt{T-T_{r}} \right)
	\end{align*}
\end{lemma}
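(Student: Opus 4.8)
The plan is to reduce $|R_3^\zeta|$ to the two quantities already controlled elsewhere in the appendix: the cumulative prediction error $\sum_t \|(\tts-\ttt_{t-1})^\top z_t\|^2$ bounded in Lemma~\ref{problematic_lemma}, and the cumulative squared covariate norm $\sum_t \|z_t\|^2$ bounded through the state bounds of Lemma~\ref{lem:bounded_state}. The first step is the algebraic identity $a^\top P a - b^\top P b = (a-b)^\top P (a+b)$ for symmetric $P$, applied with $a = \bar{x}_{t+1,\ttt_{t-1}} = \ta_{t-1}x_t + \tb_{t-1}u_t$, $b = \bar{x}_{t+1,\tts} = A_*x_t + B_*u_t$, and $P = \tilde{P}_{t-1}$. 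Since $a - b = (\ttt_{t-1} - \tts)^\top z_t$, this gives
\[
R_3^\zeta = \sum_{t=0}^{T} \big((\ttt_{t-1}-\tts)^\top z_t\big)^\top \tilde{P}_{t-1}\big(\bar{x}_{t+1,\ttt_{t-1}} + \bar{x}_{t+1,\tts}\big).
\]

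Next I would bound each summand by Cauchy--Schwarz: $\|\tilde{P}_{t-1}\| \le D$ by Lemma~\ref{bounded_P} (the optimistic $\ttt_{t-1}\in\mathcal S$ is $(\kappa,\gamma)$-stabilizable), and $\|\bar{x}_{t+1,\ttt_{t-1}}\|,\|\bar{x}_{t+1,\tts}\| \le S\|z_t\|$ since $\|\ttt_{t-1}\|_F,\|\tts\|_F \le S$. Hence $|R_3^\zeta| \le 2DS\sum_{t=0}^{T}\|(\ttt_{t-1}-\tts)^\top z_t\|\,\|z_t\|$, and a second Cauchy--Schwarz over $t$ gives $|R_3^\zeta| \le 2DS\big(\sum_t\|(\ttt_{t-1}-\tts)^\top z_t\|^2\big)^{1/2}\big(\sum_t\|z_t\|^2\big)^{1/2}$. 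Splitting both sums at $T_{r}$, I would plug Lemma~\ref{problematic_lemma} into the first factor and, for the second factor, use $\|z_t\|^2 = O((n+d)^{2(n+d)})$ for $t\le T_{r}$ (from $\|x_t\|=O((n+d)^{n+d})$ together with $\|u_t\|\le\kappa\|x_t\|+\|\nu_t\|$ on the good event) and $\|z_t\|^2 = O((1+\kappa^2)X_s^2)$ for $t>T_{r}$, both valid on $\mathcal E_T\cap\mathcal F^{[s]}_{\Tw}$. The shift between $\ttt_{t-1}$ here and the $\ttt_t$ appearing in Lemma~\ref{problematic_lemma} only alters the sum by one term and is harmless.

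The final step is bookkeeping of the dimension-dependent factors. For the $t\le T_{r}$ block, Lemma~\ref{problematic_lemma} contributes a factor of order $\beta_T^2(\delta)\,(n+d)^{2(n+d)}\big(1+(1+\kappa^2)(n+d)^{2(n+d)}/\lambda\big)^{H_0}\log(\det V_{T_{r}}/\det(\lambda I))$; with the prescribed $\lambda = (1+\kappa^2)X_s^2 = \Theta(\text{poly}(n,d))$ the parenthesised term is $O((n+d)^{2(n+d)H_0})$, so the block is $O(\text{poly}(n,d)\,(n+d)^{2(n+d)(H_0+1)})$; multiplying by $\sum_{t\le T_{r}}\|z_t\|^2 = O(T_{r}(n+d)^{2(n+d)})$ and taking the square root yields the $(n+d)^{(H_0+2)(n+d)+2}\sqrt{n}\sqrt{T_{r}}$ term. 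For the $t>T_{r}$ block, the same choice $\lambda=(1+\kappa^2)X_s^2$ makes $\big(1+(1+\kappa^2)X_s^2/\lambda\big)^{H_0}=2^{H_0}$ a problem constant, so Lemma~\ref{problematic_lemma} gives $O(\text{poly}(n,d)\,\log(\det V_T/\det V_{T_{r}}))$, and with $\sum_{t>T_{r}}\|z_t\|^2 = O((T-T_{r})X_s^2)$ and $\log(\det V_T/\det V_{T_{r}}) = O((n+d)\log T)$ one gets, after the square root, the $(n+d)n\sqrt{T-T_{r}}$ term. The main obstacle is exactly this accounting: one must hold $D$, $S$, $\kappa$, $\gamma$, $\sigma_w$ fixed as problem constants while verifying that the exponential-in-dimension state bound — valid only up to $T_{r}$ — combined with the $H_0$-th power epoch factor from Lemma~\ref{problematic_lemma} never exceeds $(H_0+2)(n+d)$ in the exponent after the Cauchy--Schwarz square root, and that on the complementary horizon the choice of $\lambda$ genuinely eliminates all super-polynomial growth.
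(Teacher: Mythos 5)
Your proposal is correct and follows essentially the same route as the paper's proof: the paper likewise reduces $|R_3^\zeta|$ to the product $\bigl(\sum_t\|(\ttt_t-\tts)^\top z_t\|^2\bigr)^{1/2}\bigl(\sum_t\|z_t\|^2\bigr)^{1/2}$ (via a difference-of-squared-norms and triangle inequality rather than your polarization identity, which is an immaterial algebraic difference), splits at $T_r$, invokes Lemma~\ref{problematic_lemma} and the state bounds, and performs the same $\lambda=(1+\kappa^2)X_s^2$ bookkeeping. The resulting dimension accounting matches the stated bound.
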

\begin{proof}
Let $Y_1 = \frac{8(1+\kappa^2)\beta_T^2(\delta)}{\lambda} (n+d)^{2(n+d)}  \max\left\{2, \left(1+ \frac{(1+\kappa^2)(n+d)^{2(n+d)}}{\lambda}\right)^{H_0} \right\} \log \frac{\det(V_{T_{r}})}{\det(\lambda I)}$ and $Y_2 = \frac{8(1+\kappa^2)\beta_T^2(\delta)}{\lambda} X_s^2 \max\left\{2, \left(1+ \frac{(1+\kappa^2)X_s^2}{\lambda}\right)^{H_0} \right\} \log \frac{\det(V_{T})}{\det(V_{T_{r}})} $ for $X_s = \frac{(12\kappa^2+ 2\kappa\sqrt{2})\sigma_w}{\gamma}  \sqrt{2n\log(n(t-\Tw)/\delta)}$. The following uses triangle inequality and Cauchy Schwarz inequality and again triangle inequality to give:

 \begin{align*} 
 \left|R_{3}^\zeta\right| 
 &\leq \sum_{t=0}^{T}\left|\left\|P(\ttt_t)^{1 / 2} \ttt_{t}^{\top} z_{t}\right\|^{2}-\left\|P(\ttt_t)^{1 / 2} \tts^{\top} z_{t}\right\|^{2}\right| \\
 =& \sum_{t=0}^{T_{r}}\left|\left\|P(\ttt_t)^{1 / 2} \ttt_{t}^{\top} z_{t}\right\|^{2}-\left\|P(\ttt_t)^{1 / 2} \tts^{\top} z_{t}\right\|^{2}\right| + \sum_{t=T_{r}}^{T}\left|\left\|P(\ttt_t)^{1 / 2} \ttt_{t}^{\top} z_{t}\right\|^{2}-\left\|P(\ttt_t)^{1 / 2} \tts^{\top} z_{t}\right\|^{2}\right| \\ 
\leq& \left(\sum_{t=0}^{T_{r}}\left(\left\|P(\ttt_t)^{1 / 2} \ttt_{t}^{\top} z_{t}\right\|\!-\!\left\|P(\ttt_t)^{1 / 2} \tts^{\top} z_{t}\right\|\right)^{2}\right)^{1 / 2} \!\!\! \left(\sum_{t=0}^{T_{r}}\left(\left\|P(\ttt_t)^{1 / 2} \ttt_{t}^{\top} z_{t}\right\|\!+\!\left\|P(\ttt_t)^{1 / 2} \tts^{\top} z_{t}\right\|\right)^{2}\right)^{1 / 2} \\
+&  \left(\sum_{t=T_{r}}^{T}\left(\left\|P(\ttt_t)^{1 / 2} \ttt_{t}^{\top} z_{t}\right\|\!-\!\left\|P(\ttt_t)^{1 / 2} \tts^{\top} z_{t}\right\|\right)^{2}\right)^{1 / 2} \!\!\! \left(\sum_{t=T_{r}}^{T}\left(\left\|P(\ttt_t)^{1 / 2} \ttt_{t}^{\top} z_{t}\right\|\!+\!\left\|P(\ttt_t)^{1 / 2} \tts^{\top} z_{t}\right\|\right)^{2}\right)^{1 / 2} \\
\leq& \left(\sum_{t=0}^{T_{r}}\left\|P(\ttt_t)^{1 / 2}\left(\ttt_{t}-\tts\right)^{\top} z_{t}\right\|^{2}\right)^{1 / 2} \!\!\! \left(\sum_{t=0}^{T_{r}}\left(\left\|P(\ttt_t)^{1 / 2} \ttt_{t}^{\top} z_{t}\right\|+\left\|P(\ttt_t)^{1 / 2} \tts^{\top} z_{t}\right\|\right)^{2}\right)^{1 / 2} \\
&+ \left(\sum_{t=T_{r}}^{T}\left\|P(\ttt_t)^{1 / 2}\left(\ttt_{t}-\tts\right)^{\top} z_{t}\right\|^{2}\right)^{1 / 2} \!\!\! \left(\sum_{t=T_{r}}^{T}\left(\left\|P(\ttt_t)^{1 / 2} \ttt_{t}^{\top} z_{t}\right\|+\left\|P(\ttt_t)^{1 / 2} \tts^{\top} z_{t}\right\|\right)^{2}\right)^{1 / 2} \\
\leq& \sqrt{Y_1} \sqrt{4T_{r} D (1+\kappa^2) S^2 (n+d)^{2(n+d)}} + \sqrt{ Y_2} \sqrt{4(T-T_{r}) D (1+\kappa^2) S^2 X_s^2}\\
\leq& \frac{\max\left\{8, 4\sqrt{2}\left(1+ \frac{(1+\kappa^2)(n+d)^{2(n+d)}}{\lambda}\right)^{H_0/2} \right\}DS(1+\kappa^2)\beta_T(\delta)(n+d)^{2(n+d)}}{\sqrt{\lambda}} \times \\
&\qquad \qquad \sqrt{T_{r} (n+d) \log \left( 1 + \frac{T_{r}(1+\kappa^2)(n+d)^{2(n+d)}}{\lambda(n+d)} \right)} \\
+& \frac{\max\left\{8, 4\sqrt{2}\left(1+ \frac{(1+\kappa^2)X_s^2}{\lambda}\right)^{H_0/2} \right\}DS(1+\kappa^2)\beta_T(\delta)}{\sqrt{\lambda}} X_s^2 \times \\
&\qquad \qquad \sqrt{(T-T_{r}) (n+d)\log \left( 1 + \frac{T_{r}(1+\kappa^2)(n+d)^{2(n+d)} + (T-T_{r})X_s^2}{\lambda(n+d)} \right) }
\end{align*}
Examining the first term, it has the dimension dependency of $(n+d)^{(n+d)H_0} \times \sqrt{n(n+d)} \times (n+d)^{2(n+d)} \times \sqrt{n+d}$ where $\sqrt{n(n+d)}$ is due to $\beta_T(\delta)$. For the second term, with the choice of $\lambda = (1+\kappa^2)X_s^2$, the exponential dependency on the dimension with $H_0$ can be converted to a scalar multiplier, \textit{i.e.}, $\left(1+ \frac{(1+\kappa^2)X_s^2}{\lambda}\right)^{H_0/2} = \sqrt{2}^{H_0}$ and $(1+\kappa^2)X_s^2/\sqrt{\lambda} = \sqrt{(1+\kappa^2)}X_s$. Therefore, for the second term, we have the dimension dependency of $ \sqrt{n(n+d)} \times \sqrt{n} \times \sqrt{n+d} $ which gives the advertised bound.

\end{proof}

\subsection{Combining Terms for Final Regret Upper Bound}

\noindent\textbf{Proof of Theorem \ref{reg:exp_s}:}
Recall that 
\begin{equation*}
\text{REGRET}(T)  \leq \sigma_\nu^2 \Tw D \|B_* \|_F^2 +  \sum_{t=0}^{\Tw} \left(2 \nu_t^\top R u_t + \nu_t^\top R \nu_t\right) + R_1^\zeta - R_2^\zeta - R_3^\zeta.
\end{equation*}
Combining Lemma \ref{lem:exp_effect} for $\sum_{t=0}^{\Tw} \left(2 \nu_t^\top R u_t + \nu_t^\top R \nu_t\right) $, Lemma \ref{R1_zeta_stabil} for $R_1^\zeta$, Lemma \ref{R2_zeta_stabil} for $|R_2^\zeta|$ and Lemma \ref{R3_zeta_stabil} for $|R_3^\zeta|$, we get the advertised regret bound.
\null\hfill$\blacksquare$

\section{CONTROLLABILITY ASSUMPTION IN ABBASI-YADKORI AND SZEPESVARI (2011)} \label{apx:comparecontrol}
In \citet{abbasi2011lqr}, the authors derive their results for the following setting:
\begin{assumption} [Controllable Linear Dynamical System] \label{apx:parameterassump_controllability}
The unknown parameter $\tts$ is a member of a set $\mathcal{S}_c$ such that
\begin{equation*}
    \mathcal{S}_c \subseteq \left\{ \Theta'=[A', B'] \in \mathbb{R}^{n \times(n+d)} ~\big|~  \Theta' \text{ is controllable,}~ \|A'+B'K(\Theta')\| \leq \Upsilon < 1,~ \|\Theta'\|_F \leq S \right\}
\end{equation*}
Following the controllability and the boundedness of $\mathcal{S}_c$, we have finite numbers $D$ and $\kappa\geq 1$ s.t., $ \sup \{\|P(\Theta')\| ~|~ \Theta' \in \mathcal{S}_c \}\leq D$ and $\sup \{\|K(\Theta')\| ~|~ \Theta' \in \mathcal{S}_c \}\leq \kappa $.
\end{assumption}
Our results are strict generalizations of these since stabilizable systems subsume controllable systems and all closed-loop contractible systems considered with Assumption \ref{apx:parameterassump_controllability} is a subset of general stable closed-loop systems considered in this work. For the setting in Assumption \ref{apx:parameterassump_controllability}, we can bound the state following similar steps to stabilizable case but since the closed-loop system is contractible we do not need minimum length on epoch of an optimistic controller since the state would always shrink. Adopting the proofs provided in this work to Assumption \ref{apx:parameterassump_controllability}, one can obtain the similar polynomial dimension dependency via additional exploration of \alg. This shows that with additional exploration the result of \citet{abbasi2011lqr} could be directly improved.

\section{TECHNICAL THEOREMS AND LEMMAS}
\label{apx:technical}
\begin{theorem}[Self-normalized bound for vector-valued martingales~\citep{abbasi2011improved}]
\label{selfnormalized}
Let $\left(\mathcal{F}_{t} ; k \geq\right.$
$0)$ be a filtration, $\left(m_{k} ; k \geq 0\right)$ be an $\mathbb{R}^{d}$-valued stochastic process adapted to $\left(\mathcal{F}_{k}\right),\left(\eta_{k} ; k \geq 1\right)$
be a real-valued martingale difference process adapted to $\left(\mathcal{F}_{k}\right) .$ Assume that $\eta_{k}$ is conditionally sub-Gaussian with constant $R$. Consider the martingale
\begin{equation*}
S_{t}=\sum\nolimits_{k=1}^{t} \eta_{k} m_{k-1}
\end{equation*}
and the matrix-valued processes
\begin{equation*}
V_{t}=\sum\nolimits_{k=1}^{t} m_{k-1} m_{k-1}^{\top}, \quad \overline{V}_{t}=V+V_{t}, \quad t \geq 0
\end{equation*}
Then for any $0<\delta<1$, with probability $1-\delta$
\begin{equation*}
\forall t \geq 0, \quad\left\|S_{t}\right\|^2_{\overline{V}_{t}^{-1}} \leq 2 R^{2} \log \left(\frac{\operatorname{det}\left(\overline{V}_{t}\right)^{1 / 2} \operatorname{det}(V)^{-1 / 2}}{\delta}\right)
\end{equation*}

\end{theorem}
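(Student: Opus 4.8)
The plan is to prove the bound via the \emph{method of mixtures} (pseudo-maximization), the standard technique for self-normalized martingale tail inequalities. First I would fix a deterministic vector $\lambda \in \mathbb{R}^d$ and define the process
\[
M_t^\lambda = \exp\!\left(\lambda^\top S_t - \tfrac{R^2}{2}\,\lambda^\top V_t \lambda\right).
\]
The conditional sub-Gaussianity of $\eta_k$ gives, with $s = \lambda^\top m_{k-1}$, that $\mathbb{E}[\exp(s\eta_k)\mid \mathcal{F}_{k-1}] \le \exp(s^2 R^2/2)$, so each incremental factor $\exp(\lambda^\top m_{k-1}\eta_k - \tfrac{R^2}{2}(\lambda^\top m_{k-1})^2)$ has conditional expectation at most one. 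Hence $M_t^\lambda$ is a nonnegative supermartingale with respect to $(\mathcal{F}_t)$, with $M_0^\lambda = 1$ and $\mathbb{E}[M_t^\lambda] \le 1$.

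Next I would mix over $\lambda$. Drawing $\Lambda \sim \mathcal{N}(0,(R^2 V)^{-1})$ independently of the martingale and setting $M_t = \mathbb{E}_\Lambda[M_t^\Lambda]$ (an integral against the Gaussian density), Tonelli's theorem shows $M_t$ is again a nonnegative supermartingale with $\mathbb{E}[M_t] \le 1$. The key computation is that this mixture is a Gaussian integral that evaluates in closed form: completing the square in the exponent against the combined quadratic form $\tfrac{R^2}{2}\lambda^\top(V+V_t)\lambda = \tfrac{R^2}{2}\lambda^\top \overline{V}_t \lambda$ yields
\[
M_t = \left(\frac{\det V}{\det \overline{V}_t}\right)^{1/2} \exp\!\left(\frac{1}{2R^2}\,\|S_t\|_{\overline{V}_t^{-1}}^2\right),
\]
where the factors of $R^2$ contributed by the two determinants cancel.

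Finally I would apply a maximal inequality for nonnegative supermartingales (Ville's inequality). Since $\{M_t \ge 1/\delta\}$ is exactly the event $\tfrac{1}{2R^2}\|S_t\|_{\overline{V}_t^{-1}}^2 \ge \log\!\big(\det(\overline{V}_t)^{1/2}\det(V)^{-1/2}/\delta\big)$, bounding $\mathbb{P}(\sup_{t\ge 0} M_t \ge 1/\delta) \le \delta\,\mathbb{E}[M_0] \le \delta$ gives the stated uniform-in-$t$ bound after rearranging and multiplying through by $2R^2$. I expect the main obstacle to be the careful justification of this last step: the supremum ranges over all $t \ge 0$, so I would introduce the stopping time $\tau = \inf\{t : M_t \ge 1/\delta\}$, apply optional stopping to the stopped supermartingale $M_{t\wedge\tau}$ together with the supermartingale convergence theorem to control $\mathbb{E}[M_\tau]$ on $\{\tau < \infty\}$, and then pass to the limit. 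Verifying that the Gaussian mixture is measurable and integrable so that Tonelli genuinely applies, and handling this infinite-horizon uniformity, constitute the technical heart of the argument.
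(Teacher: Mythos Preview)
The paper does not actually prove this theorem; it is stated in the appendix of technical lemmas with a citation to \citet{abbasi2011improved} and invoked as a black box. Your proposal correctly reproduces the method-of-mixtures argument from that reference (exponential supermartingale, Gaussian mixing to collapse the quadratic, Ville/optional-stopping for the uniform-in-$t$ bound), so there is nothing to compare against here and your outline is sound.
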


\begin{theorem}[Azuma's inequality]  \label{Azuma}
Assume that $\left(X_{s} ; s \geq 0\right)$ is a supermartingale and $\left|X_{s}-X_{s-1}\right| \leq c_{s}$ almost surely. Then for all $t>0$ and all $\epsilon>0$,
\begin{equation*}
P\left(\left|X_{t}-X_{0}\right| \geq \epsilon\right) \leq 2 \exp \left(\frac{-\epsilon^{2}}{2 \sum_{s=1}^{t} c_{s}^{2}}\right)
\end{equation*}
\end{theorem}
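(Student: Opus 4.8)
The plan is to prove the upper-tail estimate $\mathbb{P}(X_t - X_0 \ge \epsilon) \le \exp(-\epsilon^2/(2\sum_{s=1}^t c_s^2))$ by the exponential (Chernoff) method, and then obtain the two-sided bound, and hence the factor of $2$, by a union bound over the two tail events. Write $D_s = X_s - X_{s-1}$ for the difference sequence, so that $X_t - X_0 = \sum_{s=1}^t D_s$, with $|D_s| \le c_s$ almost surely and, by the supermartingale property, $\mathbb{E}[D_s \mid \mathcal{F}_{s-1}] \le 0$. First I would apply the exponential Markov inequality: for any $\lambda > 0$,
\[
\mathbb{P}(X_t - X_0 \ge \epsilon) \le e^{-\lambda \epsilon}\, \mathbb{E}\!\left[e^{\lambda (X_t - X_0)}\right].
\]

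The second step is to control the moment generating function by peeling off one increment at a time via the tower property. Conditioning the last factor on $\mathcal{F}_{t-1}$ and using that $\sum_{s=1}^{t-1} D_s$ is $\mathcal{F}_{t-1}$-measurable gives
\[
\mathbb{E}\!\left[e^{\lambda \sum_{s=1}^t D_s}\right] = \mathbb{E}\!\left[e^{\lambda \sum_{s=1}^{t-1} D_s}\, \mathbb{E}\!\left[e^{\lambda D_t}\mid \mathcal{F}_{t-1}\right]\right].
\]
The key technical ingredient is a conditional form of Hoeffding's lemma: if $Y$ is $\mathcal{F}$-conditionally supported in $[-c,c]$ with $\mathbb{E}[Y\mid\mathcal{F}] \le 0$, then $\mathbb{E}[e^{\lambda Y}\mid \mathcal{F}] \le e^{\lambda^2 c^2/2}$ for every $\lambda > 0$. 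I would establish this by convexity: on $[-c,c]$ the convex map $y \mapsto e^{\lambda y}$ lies below the chord through its endpoints, so taking conditional expectations and invoking $\mathbb{E}[Y\mid\mathcal{F}]\le 0$ reduces the estimate to a bound on a single auxiliary function $\varphi(u)=\log\!\big(\tfrac{1-\theta}{2}e^{-u}+\tfrac{1+\theta}{2}e^{u}\big)$ whose second derivative is at most $\tfrac14$; a Taylor expansion then yields the exponent $\lambda^2 c^2/2$. Applying this with $c=c_t$ to the inner conditional expectation and iterating over $s=t,t-1,\dots,1$ produces
\[
\mathbb{E}\!\left[e^{\lambda (X_t - X_0)}\right] \le \exp\!\left(\tfrac{\lambda^2}{2}\sum_{s=1}^t c_s^2\right).
\]

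Combining the last two displays gives $\mathbb{P}(X_t - X_0 \ge \epsilon) \le \exp\!\big(-\lambda\epsilon + \tfrac{\lambda^2}{2}\sum_s c_s^2\big)$, and the final step is to optimize the free parameter $\lambda$, choosing $\lambda = \epsilon/\sum_{s=1}^t c_s^2$, which produces the exponent $-\epsilon^2/(2\sum_{s=1}^t c_s^2)$. The lower tail is handled symmetrically by running the same argument on the reversed process $-X$, and a union bound over the events $\{X_t-X_0\ge\epsilon\}$ and $\{X_t-X_0\le-\epsilon\}$ yields the factor of $2$. I expect the main obstacle to be the careful verification of the conditional Hoeffding bound — in particular making the convexity argument rigorous \emph{conditionally} on $\mathcal{F}_{s-1}$ and tracking measurability — together with checking that the sign condition $\mathbb{E}[D_s\mid\mathcal{F}_{s-1}]\le 0$ is exploited in the direction that keeps the mean term from inflating the exponent (which is what makes the upper tail go through directly, and which requires the martingale property when one also wants the reversed bound). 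The $\lambda$-optimization and the telescoping are routine once the per-step conditional MGF bound is in hand.
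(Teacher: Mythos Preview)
The paper does not actually prove this statement: it is listed in the appendix of technical theorems and lemmas as a standard cited result, with no proof given. Your Chernoff-method argument is the textbook approach and is correct for the upper tail; you have also correctly flagged the one real subtlety, namely that the two-sided bound as stated does not follow for a genuine supermartingale (the reversed process $-X$ is a submartingale, so the sign condition needed for the lower tail fails), and holds only under the martingale assumption. This is a wrinkle in the statement as written rather than a flaw in your proof.
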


\begin{lemma}[Bound on Logarithm of the Determinant of Sample Covariance Matrix~\citep{abbasi2011improved}]\label{upperboundlemma}
The following holds for any $t \geq 1$ :
\begin{equation*}
\sum_{k=0}^{t-1}\left(\left\|z_{k}\right\|_{V_{k}^{-1}}^{2} \wedge 1\right) \leq 2 \log \frac{\operatorname{det}\left(V_{t}\right)}{\operatorname{det}(\lambda I)}
\end{equation*}
Further, when the covariates satisfy $\left\|z_{t}\right\| \leq c_{m}, t \geq 0$ with some $c_{m}>0$ w.p. 1 then
\begin{equation*}
\log \frac{\operatorname{det}\left(V_{t}\right)}{\operatorname{det}(\lambda I)} \leq(n+d) \log \left(\frac{\lambda(n+d)+t c_{m}^{2}}{\lambda(n+d)}\right)
\end{equation*}
\end{lemma}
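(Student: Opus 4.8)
The statement to prove is the standard \emph{elliptical potential} / log-determinant bound, and the plan is to prove its two displayed inequalities independently, using only elementary linear-algebra identities. For the first inequality, I would start from the rank-one recursion $V_{k+1} = V_k + z_k z_k^\top$ (consistent with $V_0 = \lambda I$) and apply the matrix determinant lemma in normalized form, i.e. $\det(V_{k+1}) = \det(V_k)\big(1 + \|z_k\|_{V_k^{-1}}^2\big)$, which holds because $I + V_k^{-1/2} z_k z_k^\top V_k^{-1/2}$ has eigenvalues $1 + \|z_k\|_{V_k^{-1}}^2$ and $1$ (the latter with multiplicity $n+d-1$). Telescoping this identity from $V_0 = \lambda I$ up to $V_t$ yields the key equality $\log\frac{\det(V_t)}{\det(\lambda I)} = \sum_{k=0}^{t-1}\log\big(1 + \|z_k\|_{V_k^{-1}}^2\big)$. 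I would then invoke, term by term, the elementary scalar inequality $x \wedge 1 \le 2\log(1+x)$, valid for all $x \ge 0$, to conclude $\sum_{k=0}^{t-1}\big(\|z_k\|_{V_k^{-1}}^2 \wedge 1\big) \le 2\log\frac{\det(V_t)}{\det(\lambda I)}$.

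For the second inequality I would bound the log-determinant via AM--GM on the eigenvalues of $V_t$: $\det(V_t) \le \big(\Tr(V_t)/(n+d)\big)^{n+d}$, and then use $\Tr(V_t) = \lambda(n+d) + \sum_{k=0}^{t-1}\|z_k\|^2 \le \lambda(n+d) + t c_m^2$ under the assumed uniform bound $\|z_k\| \le c_m$. Dividing by $\det(\lambda I) = \lambda^{n+d}$ and taking a logarithm then gives $\log\frac{\det(V_t)}{\det(\lambda I)} \le (n+d)\log\frac{\lambda(n+d) + t c_m^2}{\lambda(n+d)}$.

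I do not expect any step to be a genuine obstacle; the computation is short and standard. The two points needing a small verification are: (i) the scalar inequality $x \wedge 1 \le 2\log(1+x)$ — for $x \in [0,1]$ the function $2\log(1+x) - x$ vanishes at $0$ and has nonnegative derivative $\tfrac{1-x}{1+x}$, while for $x > 1$ one uses $2\log(1+x) \ge 2\log 2 > 1$ — and (ii) phrasing the determinant identity in the $V_k^{-1/2}$-normalized form so that the Mahalanobis norm $\|z_k\|_{V_k^{-1}}^2$ appears directly. Since this is exactly the lemma of \citet{abbasi2011improved} quoted in the statement, one could alternatively simply cite it.
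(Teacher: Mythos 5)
Your proof is correct and is exactly the standard argument from \citet{abbasi2011improved}, which the paper simply cites without reproducing: the rank-one determinant identity $\det(V_{k+1})=\det(V_k)\bigl(1+\|z_k\|_{V_k^{-1}}^2\bigr)$ telescoped from $V_0=\lambda I$, the scalar bound $x\wedge 1\leq 2\log(1+x)$, and the AM--GM/trace bound for the second display. All the small verifications you flag check out, so nothing further is needed.
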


\begin{lemma}[Norm of Subgaussian vector]\label{subgauss lemma}
Let $v\in \mathbb{R}^d$ be a entry-wise $R$-subgaussian random variable. Then with probability $1-\delta$, $\|v\| \leq R\sqrt{2d\log(d/\delta)}$.
\end{lemma}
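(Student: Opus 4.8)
The plan is to reduce the Euclidean norm of $v$ to the maximum of its coordinates, and then control that maximum by a coordinatewise subgaussian tail bound combined with a union bound. This is the standard route, and since the hypothesis is precisely that $v$ is entrywise $R$-subgaussian, no structural work beyond these two ingredients is needed.

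First I would record the elementary deterministic inequality $\|v\| \le \sqrt{d}\,\max_{1\le i\le d}|v_i|$, valid for every $v\in\mathbb{R}^d$ because $\|v\|^2 = \sum_{i=1}^d v_i^2 \le d\max_i v_i^2$. This reduces the claim to bounding $\max_i|v_i|$. Next, since each $v_i$ is $R$-subgaussian, a Chernoff argument gives the two-sided tail $\Pr(|v_i|\ge t)\le 2\exp(-t^2/(2R^2))$ for all $t>0$. Taking $t$ of order $R\sqrt{\log(d/\delta)}$ makes this at most $\delta/d$, so a union bound over the $d$ coordinates yields $\Pr(\max_i|v_i|\ge t)\le\delta$. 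On the complementary event, the first step gives $\|v\|\le\sqrt{d}\,t$, which is $R\sqrt{2d\log(d/\delta)}$ for the appropriate choice of $t$.

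The only point requiring care is the numerical constant inside the logarithm: the two-sided subgaussian tail contributes an extra factor $2$, so choosing $t=R\sqrt{2\log(d/\delta)}$ gives $2\delta/d$ per coordinate and thus failure probability $2\delta$ rather than $\delta$; this is absorbed either by shrinking $\delta$ by a constant, by writing $\log(2d/\delta)$ and using $\log(2d/\delta)\le 2\log(d/\delta)$ in the relevant range of $d$ and $\delta$, or simply by reading the stated bound up to an absolute constant. Beyond this constant bookkeeping there is no real obstacle, since the two-term (reduction to $\max$, then union bound) argument is completely routine.
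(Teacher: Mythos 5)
Your argument is correct and is exactly the standard one; the paper states this lemma in its technical appendix without proof, so there is no alternative route to compare against. Your observation about the constant is also accurate: with the two-sided tail $\Pr(|v_i|\ge t)\le 2\exp(-t^2/(2R^2))$ and $t=R\sqrt{2\log(d/\delta)}$, the union bound yields failure probability $2\delta$ rather than $\delta$, so the lemma as literally stated should either read $\log(2d/\delta)$ inside the square root or claim probability $1-2\delta$; this constant-factor slack is harmless everywhere the lemma is invoked.
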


\section{IMPLEMENTATION DETAILS OF EXPERIMENTS AND ADDITIONAL RESULTS} \label{apx:experiment_detail}

In this section, we provide the simulation setups, with the parameter settings for each algorithm and the details of the adaptive control tasks. The implementations and further system considerations are available at \href{https://github.com/SahinLale/StabL}{https://github.com/SahinLale/StabL}. In the experiments, we use four adaptive control tasks: 
\begin{enumerate}[label=\textbf{(\arabic*)}]
    \item A marginally unstable Laplacian system~\citep{dean2018regret} 
    \item The longitudinal flight control of Boeing 747 with linearized dynamics~\citep{747model} 
    \item Unmanned aerial vehicle (UAV) that operates in a 2-D plane~\citep{zhao2021infinite} 
    \item A stabilizable but not controllable linear dynamical system.
\end{enumerate}

For each setting we deploy 4 different algorithms: 
\begin{enumerate}[label=\textbf{(\roman*)}]
    \item Our algorithm \alg, 
    \item \OFULQ of \citet{abbasi2011lqr}, 
    \item Certainty equivalent controller (CEC) with fixed isotropic perturbations, 
    \item CEC with decaying isotropic perturbations.
\end{enumerate}

For each algorithm there are different varying parameters. For each adaptive control task, we tune each parameter in terms of regret performance and present the performance of the best performing parameter choices since the regret analysis for each algorithm considers the worst case scenario. In each setting, we will specify these parameters choices for each algorithm. We use the actual errors $\|\tth_t-\tts\|_2$ rather than bounds or bootstrap estimates for each algorithm, since we observe that the overall effect is negligible as mentioned in \citet{dean2018regret}. The following gives the implementation details of each algorithm.

\paragraph{(i) \alg} We have $\sigma_\nu$, $H_0$ and $\Tw$ as the varying parameters. In the implementation of optimistic parameter search we deploy projected gradient descent (PGD), which works efficiently for the small dimensional problems. The implementation follows Section G.1 of \citep{dean2018regret}. Note that this approach, hence the optimistic parameter choice, can be computationally challenging for higher dimensional systems. We pick the regularizer $\lambda = 0.05$ for all adaptive control tasks.

\paragraph{(ii) \OFULQ} We deploy a slight modification on the implementation of \OFULQ given by \citep{abbasi2011lqr}. Similar to \alg, we add an additional minimum policy duration constraint to the general switching constraints of \OFULQ, \textit{i.e.}, the standard determinant doubling of $V_t$. This prevents too frequent changes in the beginning of the algorithm and dramatically improves the regret performance. This minimum duration $H_0^{OFU}$ is the only varying parameter for \OFULQ. For the optimistic parameter search we also implement PGD. We pick the regularizer $\lambda = 0.001$ for all adaptive control tasks.

\paragraph{(iii) CEC w/t fixed perturbations} This algorithm is the standard baseline in control theory. In the implementation, the optimal infinite-horizon \LQR controller for the estimated system is deployed and fixed isotropic perturbations $\mathcal{N}(0,\sigma_{\exp}^2I)$ are injected throughout the implementation. The isotropic perturbations are injected since it is well-known that certainty equivalent controllers can result with drastically incorrect parameter
estimates \citep{lai1982least,becker1985adaptive,kumar1990convergence} due to lack of exploration. The policy changes happen in epochs with linear scaling, \textit{i.e.}, each epoch $i$ is of $iH_{ep}$ length. This growth is observed to be preferable over the standard exponentially increasing epoch lengths adopted in theoretical analyses of the worst case regret guarantees. Thus, the varying parameters for CEC w/t fixed perturbations are $\sigma_{exp}$ and $H_{ep}$. We pick the regularizer $\lambda = 0.5$ for all adaptive control tasks.

\paragraph{(iv) CEC w/t decaying perturbations} The implementation of this algorithm is similar to \textbf{(iii)}. The difference is that the injected perturbations have decaying variance over epochs. We adopt the decay of $1/\sqrt{i}$ for each epoch $i$, \textit{i.e.} $\sigma_{i,exp} =\sigma_{exp}^{dec} / \sqrt{i} $ for some initial $\sigma_{exp}^{dec} $ such that isotropic perturbations are injected in each epoch. Based on the extensive experimental study, we deduced that this decay performs better than the decay of $i^{-1/3}$ as given in \citet{dean2018regret} or $2^{-i/2}$ as given in \citet{simchowitz2020naive}.
The varying parameters for this algorithm are $\sigma_{exp}^{dec}$ and $H_{ep}^{dec}$ in which the latter defines the first epoch length in the linear scaling of epochs. We pick the regularizer $\lambda = 0.05$ for all adaptive control tasks.


In each experiment, the system starts from $x_0 = 0$ to reduce variance over runs. For each setting, we run $200$ independent runs with the duration of $200$ time steps. Note that we do not compare \alg with the adaptive control algorithms provided in \citep{simchowitz2020naive,simchowitz2020improper,dean2018regret,cohen2019learning,faradonbeh2018input,faradonbeh2020adaptive} which all require a given initial stabilizing policy or stable open-loop dynamics and \citep{abeille2017thompson} which is tailored for scalar systems. Moreover, \citet{chen2020black} deals with adversarial \LQR setting and uses ``significantly'' large inputs to identify the model dynamics which causes orders of magnitude worse regret.

\subsection{Marginally Unstable Laplacian System} \label{apx:I1}
The \LQR problem is given as 
\begin{equation}
    A_* = \left[\begin{array}{ccc}
1.01 & 0.01 & 0 \\
0.01 & 1.01 & 0.01 \\
0 & 0.01 & 1.01
\end{array}\right], \enskip B_* = I_{3\times3}, \enskip Q = 10 I, \enskip R = I, \enskip w \sim \mathcal{N}(0,I).
\end{equation}
This system dynamics have been studied in \citep{dean2018regret,dean2019sample,abbasi2018regret,tu2018least} and it corresponds a Laplacian system with weakly connected adjacent nodes. Notice that the inputs have less cost weight than the states. This skewed cost combined with the unstable dynamics severely hinders the design of effective strategies for \OFU-based methods.

\paragraph{Algorithmic Setups:} For \alg, we set $H_0 = 15$, $\Tw = 35$ and $\sigma_\nu = 1.5$. For CEC with decaying perturbation, we set $H_{ep}^{dec} = 20$, and $\sigma_{exp}^{dec} = 2$. For CEC with fixed perturbation, we set $\sigma_{exp} = 1.3$ and $H_{ep} = 15$. For \OFULQ, we set $H_0^{OFU} = 6$. 

\paragraph{Regret After 200 Time Steps:} 
In Table \ref{table:2_apx}, we provide the regret performance of the algorithms after 200 time steps of adaptive control in the Laplacian system. As expected the regret performance of \OFULQ suffers the most regret due to unstable dynamics and skewed cost, which makes it difficult to design effective policies for the \OFU-based algorithms. Even though \alg uses \OFU principle, it overcomes the difficulty to design effective policies via the improved exploration in the early stages and achieves the best regret performance.

\begin{table*}[h]
\centering
\captionsetup{justification=centering}
\caption{Regret After 200 Time Steps in Marginally Unstable Laplacian System}
\label{table:2_apx}
 \begin{tabular}{c c c c c c}
 \textbf{Algorithm} &  \textbf{Average Regret} &  \textbf{Best 95\% } &   \textbf{Best 90\%} &   \textbf{Best 75\%} &   \textbf{Best 50\%}   \\
  \hline
 \alg & $\bf{1.55 \times 10^4}$   & $\bf{1.42 \times 10^4}$  & $\bf{1.32 \times 10^4}$ & $\bf{1.12 \times 10^4 }$ & $\bf{8.89  \times 10^3} $ \\

OFULQ  & $6.17 \times 10^{10}$ & $4.57 \times 10^7$  &  $4.01 \times 10^6$ & $3.49 \times 10^5$ & $4.70  \times 10^4 $ \\

CEC w/t Fixed & $3.72 \times 10^{10}$ & $2.23 \times 10^{5}$  & $2.14 \times 10^{4}$  & $1.95 \times 10^{4}$  &   $1.73 \times 10^{4}$  \\

  CEC w/t Decay & $4.63 \times 10^{4}$  &  $4.27 \times 10^{4}$ & $4.03 \times 10^{4}$ & $3.51 \times 10^4$ & $2.84 \times 10^{4}$ 
\end{tabular}
\end{table*}

Figure \ref{fig:regret_laplace} gives the regret comparison between \alg and CEC with decaying isotropic perturbations which performs the second best in the given Laplacian system. Note that we did not include \OFULQ and CEC w/t fixed perturbations  in the figure since they perform orders of magnitude worse that \alg and CEC w/t decaying perturbations. 

\begin{figure}[h]
\centering
  \includegraphics[width=0.6\linewidth]{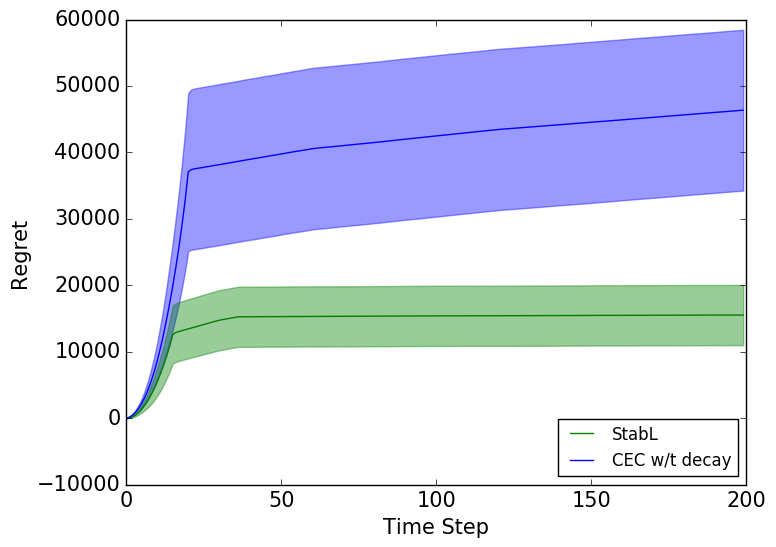}
  \caption{Regret of \alg vs CEC with decaying isotropic perturbations. The solid lines are the average regrets for $200$ independent runs and the shaded regions are the half standard deviations.}
  \label{fig:regret_laplace}
\end{figure}

\paragraph{Maximum State Norm:}
In Table \ref{table:max_laplace}, we display the stabilization capabilities of the algorithms by providing the averages of the maximum $\ell_2$ norms of the states in $200$ independent runs. We also include the worst case state magnitudes which demonstrates how controlled the states are during the entire adaptive control task. The results show that \alg maintains the smallest magnitude of the state and thus, the most stable dynamics. We also verify that after the first policy change which happens after $15$ time steps, the spectral radius of the closed-loop system formed via \alg is always stable, \textit{i.e.} $\rho(A_* + B_* K(\ttt_t)) < 1$ for $t>15$. 

\begin{table}[h]
\centering
\captionsetup{justification=centering}
\caption{Maximum State Norm in Marginally Unstable Laplacian System}
\begin{tabular}{c c c c c}
 \textbf{Algorithm} &  \textbf{Average $\max\|x\|_2$} &  \textbf{Worst 5\% } &   \textbf{Worst 10\%} &   \textbf{Worst 25\%} \\
  \hline
 \alg &  $\bf{1.35 \times 10^1}$   & $\bf{2.24 \times 10^1}$  & $\bf{2.15 \times 10^1}$ & $\bf{1.95 \times 10^1}$  \\

 OFULQ  & $9.59 \times 10^{3}$ & $1.83 \times 10^5$  &  $9.04 \times 10^4$ & $3.81 \times 10^4$  \\

CEC w/t Fixed & $3.33 \times 10^{3}$ & $6.64 \times 10^{4}$   & $3.32 \times 10^{4}$  &   $1.33 \times 10^{4}$  \\

  CEC w/t Decay & $2.04 \times 10^{1}$  & $3.46 \times 10^{1}$ & $3.27 \times 10^1$ & $2.87 \times 10^{1}$ 
\end{tabular}
\label{table:max_laplace}
\end{table}

\paragraph{Persistence of Excitation via \alg:}
In order to further highlight the benefit of improved exploration strategy, we empirically study the smallest eigenvalue of the regularized design matrix $V_t$ for \alg and \OFULQ. The evolution of the $\lambda_{\min}(V_t)$ is shown for both algorithms in Figure \ref{fig:compare}. From the figure, one can see that improved exploration strategy of \alg achieves linear scaling of $\lambda_{\min}(V_t)$, \textit{i.e.}, persistence of excitation. Thus, it finds the stabilizing neighborhood after the first epoch. On the other hand, the control inputs of \OFULQ fail to excite the system uniformly, thus it cannot quickly find a stabilizing policy. This results in unstable dynamics and significantly more regret on average (Table \ref{table:2_apx}). 

\begin{figure}[h]
\centering
  \includegraphics[width=0.6\linewidth]{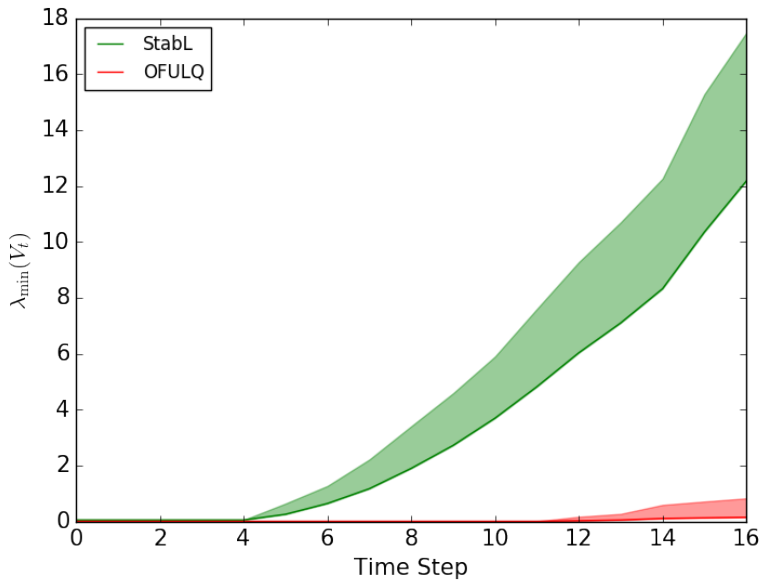}
  \caption{Scaling of the smallest eigenvalue of the design matrix for \alg and \OFULQ. The solid line denotes the mean and the shaded region denotes one standard deviation. The early improved exploration strategy helps \alg achieve linear scaling in $\lambda_{\min}(V_t)$, thus persistence of excitation. The only \OFU-based controllers of \OFULQ fail to achieve persistence of excitation.}
  \label{fig:compare}
\end{figure}

\subsection{Longitudinal Flight Control of Boeing 747} \label{apx:I2}

The \LQR problem is given as
\begin{equation}
    A_* = \begin{bmatrix}
    0.99 & 0.03 & -0.02 & -0.32 \\
    0.01 & 0.47 & 4.7 & 0 \\
    0.02 & -0.06 & 0.4 & 0 \\
    0.01 & -0.04 & 0.72 & 0.99 
\end{bmatrix}, ~ 
B_* = \begin{bmatrix}
  0.01 & 0.99 \\
  -3.44 & 1.66 \\
  -0.83 & 0.44 \\
  -0.47 & 0.25
\end{bmatrix}, ~ Q = I, ~ R = I, ~ w \sim \mathcal{N}(0,I).
\end{equation}

This problem is the longitudinal flight control of Boeing 747 with linearized dynamics and introduced in \citep{747model}. The given linear dynamical system corresponds to the dynamics for level flight of Boeing 747 at the altitude of 40000ft with the speed of 774ft/sec, for a discretization of 1 second. The first state element is the velocity of aircraft along body axis, the second is the velocity of aircraft perpendicular to body axis, the third is the angle between body axis and horizontal and the final element is the angular velocity of aircraft. The first input element is the elevator angle and the second one is the thrust. The process noise corresponds to the external wind conditions.  

Notice that the dynamics are linearized around a certain point and it is important to guarantee that the linearization is valid. To this end, an \RL policy should stabilize the system and keep the state small in order to not lead the system to the unmodeled nonlinear dynamics. 

\paragraph{Algorithmic Setups:} For \alg, we set $H_0 = 10$, $\Tw = 35$ and $\sigma_\nu = 2$. For CEC with decaying perturbation, we set $H_{ep}^{dec} = 30$, and $\sigma_{exp}^{dec} = 2$. For CEC with fixed perturbation, we set $\sigma_{exp} = 2.5$ and $H_{ep} = 25$. For \OFULQ, we set $H_0^{OFU} = 7$. 

\paragraph{Regret After 200 Time Steps:} In Table \ref{table:regret_boeing}, we give the regret performance of the algorithms after 200 time steps in Boeing 747 flight control. In terms of average regret, \alg attains half of the regret of CEC with decay and performs orders of magnitude better than \OFULQ. Also, consider Figure \ref{fig:regret_boeing}. Notice that until the third policy update, \OFULQ is still working towards further exploration and is not designing effective controllers to regulate the system dynamics. This is due to the higher dimensions of the Boeing 747 control system which prevents quick and effective exploration. This results in unstable system dynamics in the early stages and poorly scaling of the regret. On the other hand, the early improved exploration strategy helps \alg to maintain stable dynamics with the expense of an additional regret in the early stages compared to \OFULQ. However, as it can be seen from Figure \ref{fig:regret_boeing}, this improved exploration strategy yields significantly lower regret in the later stages. 

\begin{table}[h]
\centering
\captionsetup{justification=centering}
\caption{Regret After 200 Time Steps in Boeing 747 Flight Control}
\label{table:regret_boeing}
  \begin{tabular}{c c c c c c}
 \textbf{Algorithm} &  \textbf{Average Regret} &  \textbf{Top 95\% } &   \textbf{Top 90\%} &   \textbf{Top 75\%} &   \textbf{Top 50\%}   \\
  \hline
 \alg & $\bf{1.34 \times 10^4}$   & $\bf{1.05 \times 10^3}$  & $\bf{9.60 \times 10^3}$ & $\bf{7.58 \times 10^3 }$ & $\bf{5.28  \times 10^3} $ \\

 OFULQ  & $1.47 \times 10^{8}$ & $4.19 \times 10^6$  &  $9.89 \times 10^5$ & $5.60 \times 10^4$ & $8.91  \times 10^3 $ \\

CEC w/t Fixed  & $4.79 \times 10^{4}$ & $4.62 \times 10^{4}$  & $4.51 \times 10^{4}$  & $4.25 \times 10^{4}$  &   $3.88 \times 10^{4}$  \\

  CEC w/t Decay & $2.93 \times 10^{4}$  &  $2.61 \times 10^{4}$ & $2.48 \times 10^{4}$ & $2.22 \times 10^4$ & $1.86 \times 10^{4}$ 
\end{tabular}
\end{table}

\begin{figure}[h]
\centering
  \includegraphics[width=0.6\linewidth]{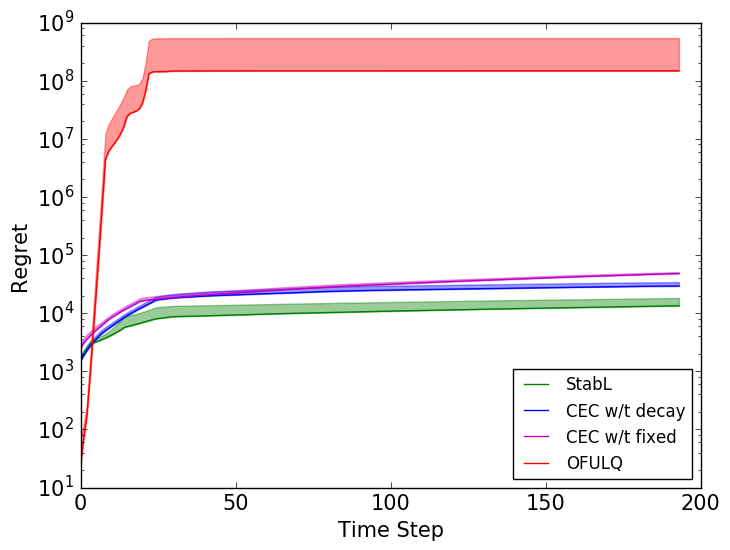}
  \caption{Regret Comparison of all algorithms in Boeing 747 flight control. The solid lines are the average regrets for $200$ independent runs and the shaded regions are the quarter standard deviations.}
  \label{fig:regret_boeing}
\end{figure}

\textbf{Maximum State Norm:}
Similar to Laplacian system, \alg controls the state well and provides the lowest average maximum norm.  

\begin{table}[h]
\centering
\captionsetup{justification=centering}
\caption{Maximum State Norm in Boeing 747 Control}
\begin{tabular}{c c c c c}
 \textbf{Algorithm} &  \textbf{Average $\max\|x\|_2$} &  \textbf{Worst 5\% } &   \textbf{Worst 10\%} &   \textbf{Worst 25\%} \\
  \hline
 \alg &  $\bf{3.38 \times 10^1}$   & $\bf{8.02 \times 10^1}$  & $\bf{7.01 \times 10^1}$ & $\bf{5.23 \times 10^1 }$  \\

 OFULQ  & $1.62 \times 10^{3}$ & $2.25 \times 10^4$  &  $1.37 \times 10^4$ & $6.26 \times 10^3$  \\

CEC w/t Fixed & $4.97 \times 10^{1}$ & $7.78 \times 10^{1}$   & $7.31 \times 10^{1}$  &   $6.48 \times 10^{1}$  \\

  CEC w/t Decay & $4.60 \times 10^{1}$  & $7.96 \times 10^{1}$ & $7.25 \times 10^1$ & $6.31 \times 10^{1}$ 
\end{tabular}
\label{table:max_state_boeing}
\end{table}

\subsection{Unmanned Aerial Vehicle (UAV) in 2-D plane} \label{apx:I3}
The \LQR problem is given as 
\begin{equation}
    A_* = \left[\begin{array}{cccc}
1 & 0.5 & 0 & 0 \\
0 & 1 & 0 & 0 \\
0 & 0 & 1 & 0.5 \\
0 & 0 & 0 & 1
\end{array}\right], \enskip B_* = \left[\begin{array}{cc}
0.125 & 0  \\
0.5 & 0  \\
0 & 0.125  \\
0 & 0.5 
\end{array}\right], Q = diag(1,0.1,2,0.2), \enskip R = I, \enskip w \sim \mathcal{N}(0,I)
\end{equation}
This problem is the linearized model of a UAV which operates in a 2-D plane \citep{zhao2021infinite}. Notice that it corresponds to the model of double integrator. The first and third state elements correspond to the position, whereas the second and fourth state elements are velocity components. The inputs are the acceleration. The process noise corresponds to the external wind conditions. Similar to Boeing 747, the dynamics are linearized and keeping the state vector small is critical in order to maintain the validity of the linearization.

\paragraph{Algorithmic Setups:} For \alg, we set $H_0 = 20$, $\Tw = 55$ and $\sigma_\nu = 4$. For CEC with decaying perturbation, we set $H_{ep}^{dec} = 30$, and $\sigma_{exp}^{dec} = 3.5$. For CEC with fixed perturbation, we set $\sigma_{exp} = 3$ and $H_{ep} = 35$. For \OFULQ, we set $H_0^{OFU} = 7$. 

\paragraph{Regret After 200 Time Steps:}

\begin{table}[h]
\centering
\captionsetup{justification=centering}
\caption{Regret After 200 Time Steps in UAV Control}
\label{table:regret_uav}
  \begin{tabular}{c c c c c c}
 \textbf{Algorithm} &  \textbf{Average Regret} &  \textbf{Top 95\% } &   \textbf{Top 90\%} &   \textbf{Top 75\%} &   \textbf{Top 50\%}   \\
  \hline
\alg & $\bf{1.53 \times 10^5}$   & $\bf{1.05 \times 10^5}$  & $\bf{9.23 \times 10^4}$ & $\bf{6.85 \times 10^4}$ & $\bf{4.47  \times 10^4} $ \\

OFULQ  & $5.06 \times 10^{7}$ & $1.75 \times 10^6$  &  $1.03 \times 10^6$ & $2.46 \times 10^5$ & $5.82  \times 10^4 $ \\

CEC w/t Fixed  & $4.52 \times 10^{5}$ & $3.80 \times 10^{5}$  & $3.35 \times 10^{5}$  & $2.50 \times 10^{5}$  &   $1.64 \times 10^{5}$  \\

  CEC w/t Decay & $3.24 \times 10^{5}$  &  $2.70 \times 10^{5}$ & $2.37 \times 10^{5}$ & $1.75 \times 10^5$ & $1.03 \times 10^{5}$ 
\end{tabular}
\end{table}

In Table \ref{table:regret_uav}, we give the regret performance of the algorithms after 200 time steps in UAV control control task. Once more, \alg performs significantly better than other \RL methods. The evolution of the average regret is also given in Figure \ref{fig:regret_uav}. As suggested by the theory, by paying a linear regret cost for a short period of time in the early stages, \alg guarantees stabilizing the underlying system and achieves the best regret performance. 

\vspace{4em}
\textbf{Maximum State Norm:}

\begin{table}[h]
\centering
\captionsetup{justification=centering}
\caption{Maximum State Norm in UAV Control}
\begin{tabular}{c c c c c}
 \textbf{Algorithm} &  \textbf{Average $\max\|x\|_2$} &  \textbf{Worst 5\% } &   \textbf{Worst 10\%} &   \textbf{Worst 25\%} \\
  \hline
\alg &  $\bf{8.46 \times 10^1}$   & $\bf{2.51 \times 10^2}$  & $\bf{2.00 \times 10^2}$ & $\bf{1.50 \times 10^2 }$  \\

OFULQ  & $5.61 \times 10^{2}$ & $6.35 \times 10^3$  &  $3.78 \times 10^3$ & $1.90 \times 10^3$  \\

CEC w/t Fixed & $1.45 \times 10^{2}$ & $3.12 \times 10^{2}$   & $2.91 \times 10^{2}$  &   $2.42 \times 10^{2}$  \\

  CEC w/t Decay & $1.26 \times 10^{2}$  & $2.71 \times 10^{2}$ & $2.48 \times 10^2$ & $2.12 \times 10^{2}$ 
\end{tabular}
\label{table:max_state_uav}
\end{table}

\begin{figure}[h]
\centering
  \includegraphics[width=0.6\linewidth]{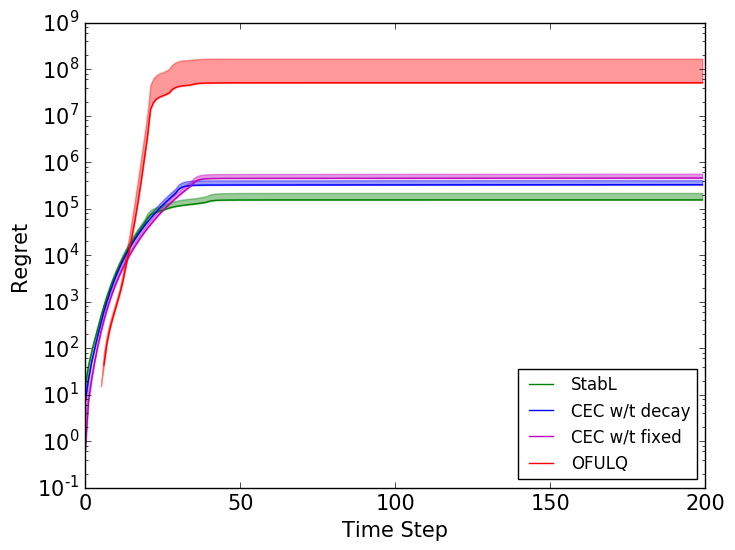}
  \caption{Regret Comparison of all algorithms in UAV control task. The solid lines are the average regrets for $200$ independent runs and the shaded regions are the quarter standard deviations.}
  \label{fig:regret_uav}
\end{figure}

\subsection{Stabilizable but Not Controllable System} \label{apx:I4}
The \LQR problem is given as 
\begin{equation} \label{sys_stabil}
    A_* = \left[\begin{array}{ccc}
-2 & 0 & 1.1 \\
1.5 & 0.9 & 1.3 \\
0 & 0 & 0.5
\end{array}\right], \enskip B_* = \left[\begin{array}{cc}
1 & 0  \\
0 & 1  \\
0 & 0  
\end{array}\right], Q = I, \enskip R = I, \enskip w \sim \mathcal{N}(0,I)
\end{equation}
This problem is particularly challenging in terms of system identification and controller design since the system is not controllable but stabilizable. As expected besides \alg which is tailored for the general stabilizable setting, other algorithms perform poorly. In fact, CEC with fixed noise significantly blows up due to significantly unstable dynamics for the controllable part of the system. Therefore, we only present the remaining three algorithms. 

\paragraph{Algorithmic Setups:} For \alg, we set $H_0 = 8$, $\Tw = 20$ and $\sigma_\nu = 2.5$. For CEC with decaying perturbation, we set $H_{ep}^{dec} = 30$, and $\sigma_{exp}^{dec} = 3$. For \OFULQ, we set $H_0^{OFU} = 6$. 

\paragraph{Regret After 200 Time Steps:}
Table \ref{table:regret_stabil} provides the regret of the algorithms after 200 time steps. This setting is where \OFULQ fails dramatically due to not being tailored for the stabilizable systems. Compared to CEC with decaying perturbation, \alg also provides an order of magnitude improvement (Figure \ref{fig:regret_stabil})

\begin{table}[h]
\centering
\captionsetup{justification=centering}
\caption{Regret After 200 Time Steps in Stabilizable but Not Controllable System \eqref{sys_stabil}}
\label{table:regret_stabil}
  \begin{tabular}{c c c c c c}
 \textbf{Algorithm} &  \textbf{Average Regret} &  \textbf{Top 95\% } &   \textbf{Top 90\%} &   \textbf{Top 75\%} &   \textbf{Top 50\%}   \\
  \hline
\alg & $\bf{1.68 \times 10^6}$   & $\bf{9.56 \times 10^5}$  & $\bf{7.21 \times 10^5}$ & $\bf{3.72 \times 10^5}$ & $\bf{1.29  \times 10^5} $ \\

 OFULQ  & $5.20 \times 10^{12}$ & $1.74 \times 10^{12}$  &  $8.27 \times 10^{11}$ & $2.13 \times 10^{11}$ & $4.51  \times 10^{10} $ \\

 CEC w/t Decay & $1.56 \times 10^{7}$  &  $1.17 \times 10^{7}$ & $9.75 \times 10^{6}$ & $5.96 \times 10^6$ & $2.33 \times 10^{6}$ 
\end{tabular}
\end{table}

\begin{figure}[h]
\centering
  \includegraphics[width=0.6\linewidth]{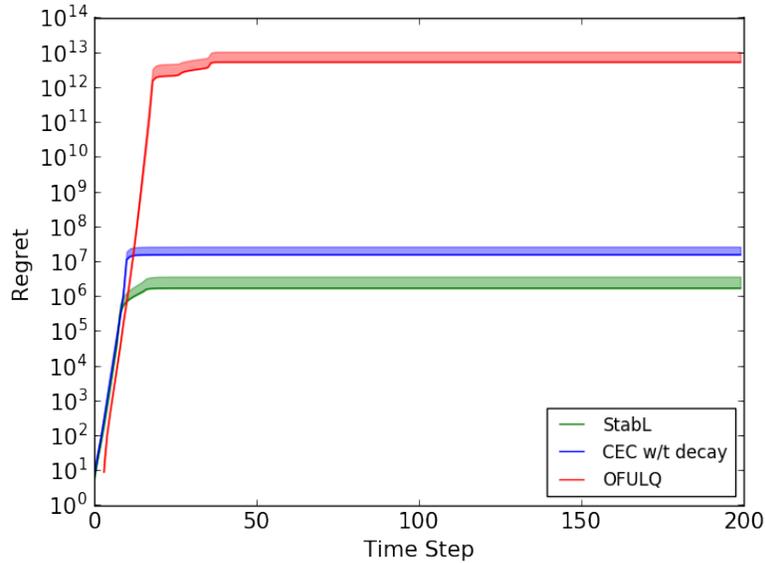}
  \caption{Regret Comparison of three algorithms in controlling \eqref{sys_stabil}. The solid lines are the average regrets for $200$ independent runs and the shaded regions are the quarter standard deviations.}
  \label{fig:regret_stabil}
\end{figure}

\paragraph{Maximum State Norm:}

\vspace{4em}

\begin{table}[b]
\centering
\captionsetup{justification=centering}
\caption{Maximum State Norm in the Control of Stabilizable but Not Controllable System \eqref{sys_stabil}}
\begin{tabular}{c c c c c}
 \textbf{Algorithm} &  \textbf{Average $\max\|x\|_2$} &  \textbf{Worst 5\% } &   \textbf{Worst 10\%} &   \textbf{Worst 25\%} \\
  \hline
\alg &  $\bf{3.02 \times 10^2}$   & $\bf{1.04 \times 10^3}$  & $\bf{8.88 \times 10^2}$ & $\bf{6.68 \times 10^2 }$  \\

OFULQ  & $4.39 \times 10^{5}$ & $3.10 \times 10^6$  &  $2.40 \times 10^6$ & $1.39 \times 10^6$  \\

  CEC w/t Decay & $1.37 \times 10^{3}$  & $4.07 \times 10^{3}$ & $3.54 \times 10^3$ & $2.78 \times 10^{3}$ 
\end{tabular}
\label{table:max_state_stabil}
\end{table}

\end{document}